\newcommand{\ms}[2]{{#1}{\footnotesize $\,\pm${#2}}}
\newcommand{\msb}[2]{{\textbf{#1}} $\,\pm${\footnotesize {#2}}}
\DeclareMathOperator*{\argmax}{arg\,max}
\newtheorem{lemma}{Lemma}
\newtheorem{theorem}{Theorem}
\title{Keep It on a Leash: Controllable Pseudo-label Generation Towards Realistic Long-Tailed Semi-Supervised Learning}
\author{
  Yaxin Hou$^{1}$, Bo Han$^{1}$, Yuheng Jia$^{1,2,3}$\thanks{Corresponding author.}, Hui Liu$^{3}$, Junhui Hou$^{4}$ \\\
  $^1$School of Computer Science and Engineering, Southeast University, Nanjing 210096, China \\
  $^2$Key Laboratory of New Generation Artificial Intelligence Technology and Its \\
  Interdisciplinary Applications (Southeast University), Ministry of Education, China \\
  $^3$School of Computing Information Sciences, Saint Francis University, Hong Kong, China \\
  $^4$Department of Computer Science, City University of Hong Kong, Hong Kong, China \\
  \texttt{yaxin@seu.edu.cn, hanbo@seu.edu.cn, yhjia@seu.edu.cn, } \\
  \texttt{h2liu@sfu.edu.hk, jh.hou@cityu.edu.hk} \\ 
}
\begin{document}

\maketitle

\begin{abstract}

Current long-tailed semi-supervised learning methods assume that labeled data exhibit a long-tailed distribution, and unlabeled data adhere to a typical predefined distribution (i.e., long-tailed, uniform, or inverse long-tailed). However, the distribution of the unlabeled data is generally unknown and may follow an arbitrary distribution. To tackle this challenge, we propose a Controllable Pseudo-label Generation (CPG) framework, expanding the labeled dataset with the progressively identified reliable pseudo-labels from the unlabeled dataset and training the model on the updated labeled dataset with a known distribution, making it unaffected by the unlabeled data distribution. Specifically, CPG operates through a controllable self-reinforcing optimization cycle: (i) at each training step, our dynamic controllable filtering mechanism selectively incorporates reliable pseudo-labels from the unlabeled dataset into the labeled dataset, ensuring that the updated labeled dataset follows a known distribution; (ii) we then construct a Bayes-optimal classifier using logit adjustment based on the updated labeled data distribution; (iii) this improved classifier subsequently helps identify more reliable pseudo-labels in the next training step. We further theoretically prove that this optimization cycle can significantly reduce the generalization error under some conditions. Additionally, we propose a class-aware adaptive augmentation module to further improve the representation of minority classes, and an auxiliary branch to maximize data utilization by leveraging all labeled and unlabeled samples. Comprehensive evaluations on various commonly used benchmark datasets show that CPG achieves consistent improvements, surpassing state-of-the-art methods by up to \textbf{15.97\%} in accuracy. The code is available at \url{https://github.com/yaxinhou/CPG}.

\end{abstract}

\section{Introduction}
\label{sec:introduction}

Over the past decade, semi-supervised learning (SSL) has emerged as a mainstream paradigm for enhancing the performance of deep neural networks (DNNs) in label-scarce domains like medical diagnosis~\cite{Textmatch, VCLIPSeg, SfS} by leveraging abundant unlabeled data. The dominant SSL framework utilizes unlabeled samples by generating pseudo-labels from high-confidence model predictions~\cite{Pseudo-label, Tri-net, CL}. However, conventional SSL methods~\cite{FixMatch, FreeMatch, SoftMatch} rely on the unrealistic assumption of balanced and aligned distributions for labeled and unlabeled data. Long-tailed semi-supervised learning (LTSSL)~\cite{ABC, CoSSL, BaCon} relaxes the balance assumption, however, it still suffers from potential distribution mismatches between labeled and unlabeled data.

\textbf{A motivating example.} In urban traffic monitoring systems, vehicle images captured by surveillance cameras are classified into distinct categories (e.g., bicycles, trucks, motorcycles) based on their visual characteristics. This presents a long-tailed semi-supervised learning scenario, where labeled data exhibit a long-tailed distribution (with common vehicles like cars dominating and specialized vehicles like ambulances being underrepresented). Meanwhile, unlabeled data collected across different urban environments share the same vehicle categories but may exhibit unknown distributional shifts. These distributional shifts arise from spatial variations (e.g., higher proportions of personal vehicles in residential areas) or temporal fluctuations (e.g., increased taxi frequency during rush hours).

\begin{figure*}[t]
\centering
\subfigure[FreeMatch]{
\includegraphics[width=0.32\textwidth]{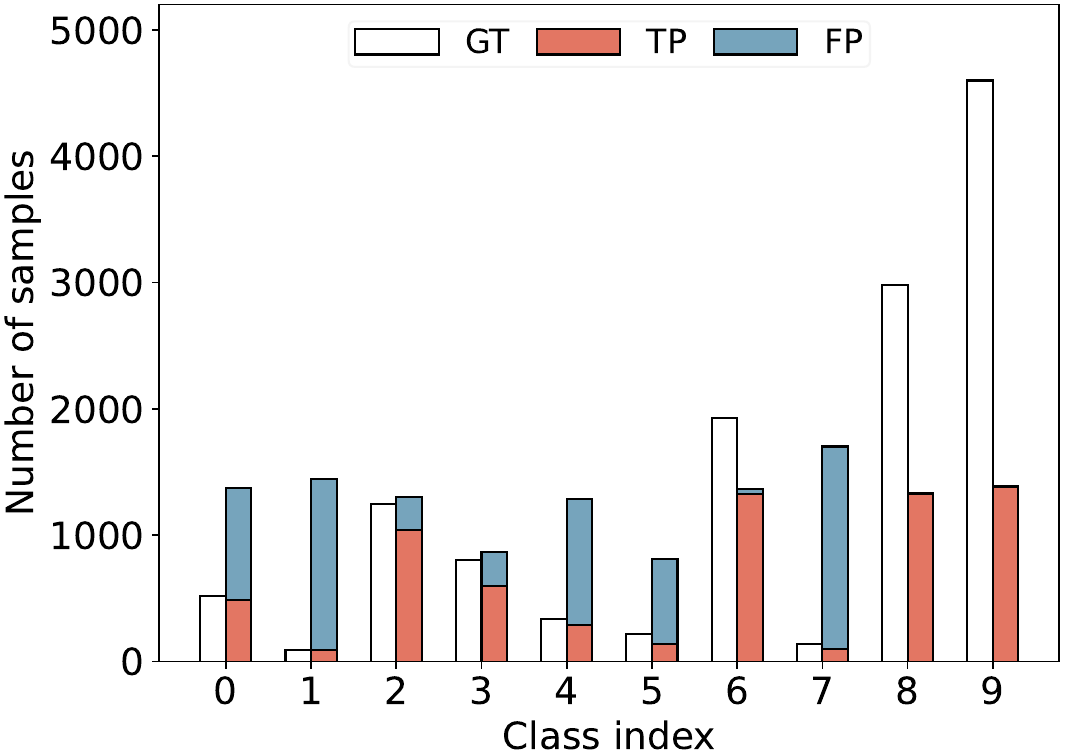}
\label{fig:freematch_arbitrary}
}
\hspace{-2.25ex}
\subfigure[SimPro]{
\includegraphics[width=0.32\textwidth]{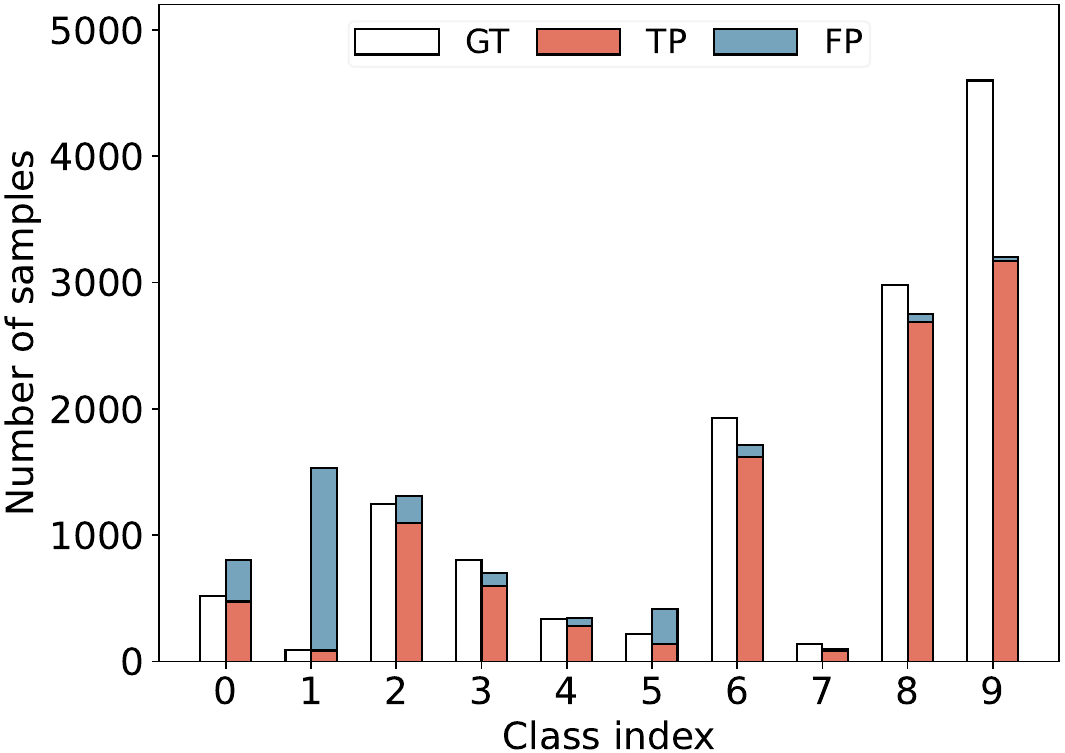}
\label{fig:simpro_arbitrary}
}
\hspace{-2.25ex}
\subfigure[Ours]{
\includegraphics[width=0.32\textwidth]{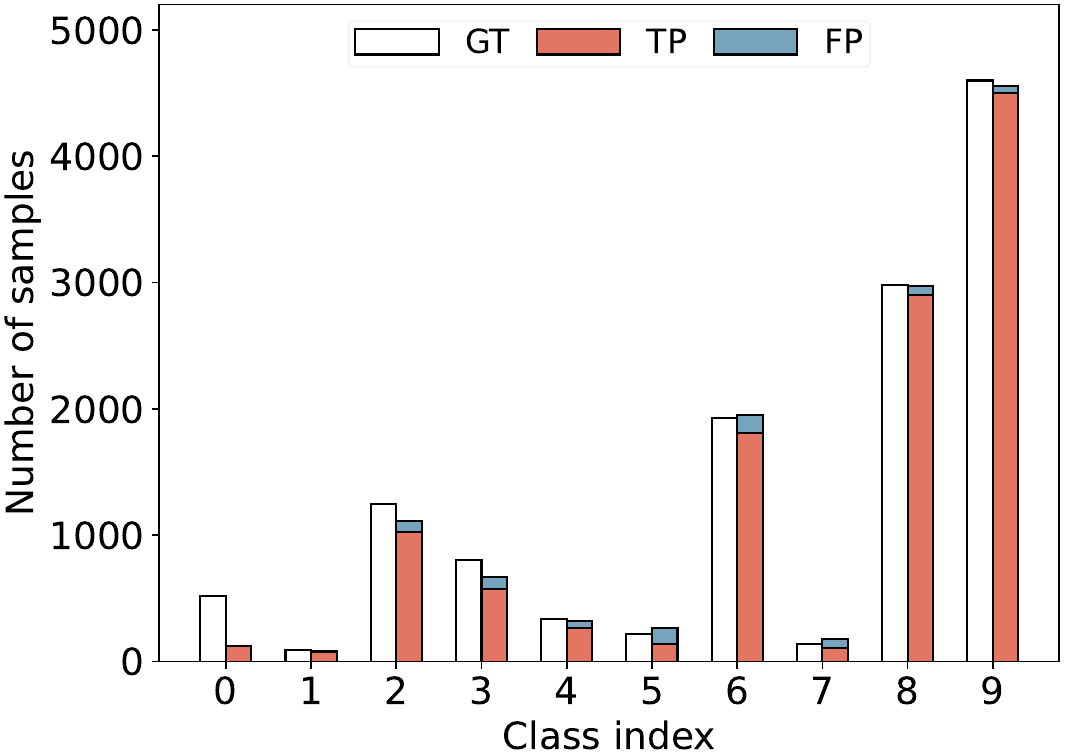}
\label{fig:cpg_arbitrary}
}\vspace{-0.3cm}
\caption{Comparison of pseudo-label predictions among FreeMatch~\cite{FreeMatch}, SimPro~\cite{SimPro}, and our CPG under arbitrary unlabeled data distribution. GT denotes the ground-truth unlabeled data distribution. TP (FP) denotes the predicted true (false) positive pseudo-labels. The dataset is CIFAR-10-LT with $(N_{max}, M_{max}, \gamma_l, \gamma_u) = (400, 4600, 50, 50)$, where $N_{max}$ ($M_{max}$) denotes the number of samples in the most frequent class of the labeled (unlabeled) dataset, while $\gamma_l$ ($\gamma_u$) denotes the imbalance ratio of the labeled (unlabeled) dataset. Our CPG can generate more reliable pseudo-labels than FreeMatch and SimPro in both minority classes like class 1, 2, and majority classes like class 8, 9.}\vspace{-0.6cm}
\label{fig:pseudo-label_arbitrary}
\end{figure*}

Recent advances in realistic long-tailed semi-supervised learning (ReaLTSSL) have attempted to address the distribution mismatch between labeled and unlabeled data. For instance, ACR~\cite{ACR} adjusts logits based on the distance between the estimated unlabeled data distribution and the predefined anchor distributions, while CPE~\cite{CPE} employs multiple classifiers (experts) to model diverse unlabeled data distributions. Although effective, both methods assume that unlabeled data adhere to a typical predefined distribution (i.e., long-tailed, uniform, or inverse long-tailed), which may not hold in practice. The recent method SimPro~\cite{SimPro} alleviates this limitation through an Expectation-Maximization (EM)-based pseudo-labeling method. However, as illustrated in Fig.~\ref{fig:simpro_arbitrary}, SimPro’s distribution estimation ability degrades when unlabeled data follow an unknown arbitrary distribution. In summary, existing methods rely on a confidence threshold to select high-confidence pseudo-labels for distribution estimation, which subsequently guides pseudo-label generation. However, as shown in Figs.~\ref{fig:pseudo-label_arbitrary} and~\ref{fig:training_process}, this strategy becomes unreliable when handling the unknown arbitrary unlabeled data distribution, as high-confidence pseudo-labels may still contain substantial errors. These inaccuracies propagate during training, resulting in confirmation bias and model performance degradation.\vspace{-0.25cm}

\begin{figure*}[ht]
\centering
\includegraphics[width=1.0\linewidth]{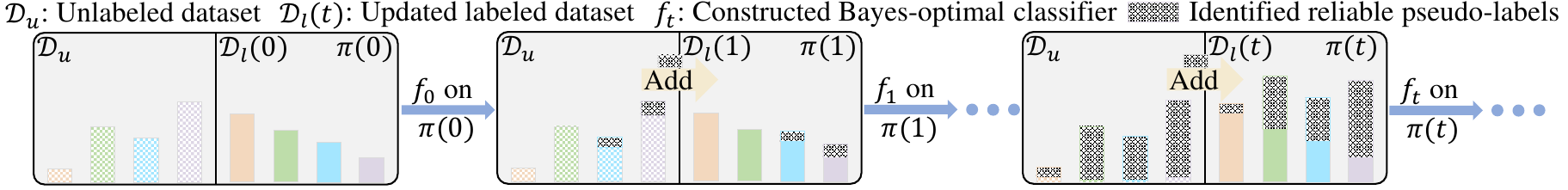}\vspace{-0.6cm}
\caption{Overview of the controllable self-reinforcing optimization cycle.}
\label{fig:loop}\vspace{-0.3cm}
\end{figure*}

To address these challenges, we introduce a Controllable Pseudo-label Generation (CPG) framework that employs a controllable self-reinforcing optimization cycle to handle the unknown arbitrary unlabeled data distribution problem in ReaLTSSL. As illustrated in Fig.~\ref{fig:loop}, CPG utilizes a dynamic controllable filtering mechanism to identify reliable pseudo-labels. These reliable pseudo-labels are combined with the labeled dataset to construct an updated labeled dataset $\mathcal{D}_l(t)$ with a known distribution $\pi(t)$. We then construct a Bayes-optimal classifier $f_t$ via logit adjustment~\cite{LA} based on the known distribution $\pi(t)$. Note that the Bayes-optimal classifier is unaffected by the unlabeled data distribution. The Bayes-optimal classifier $f_t$, in turn, helps identify more reliable pseudo-labels in the next training step. Through this optimization cycle, an increasing number of unlabeled samples are progressively incorporated as reliable ones, enabling the framework to capture the unlabeled data distribution in later training stages implicitly (See Fig.~\ref{fig:evolution_of_pseudo-label_predictions} in Appendix~\ref{sec:evolution_of_pseudo-label_predictions}). Theoretically, we prove that this optimization cycle can significantly reduce the generalization error under some conditions. Empirically, Fig.~\ref{fig:cpg_arbitrary} shows the superior pseudo-label prediction ability of our method.

\begin{figure*}[t]
\centering
\subfigure[Pseudo-label error rate]{
\includegraphics[width=0.32\textwidth]{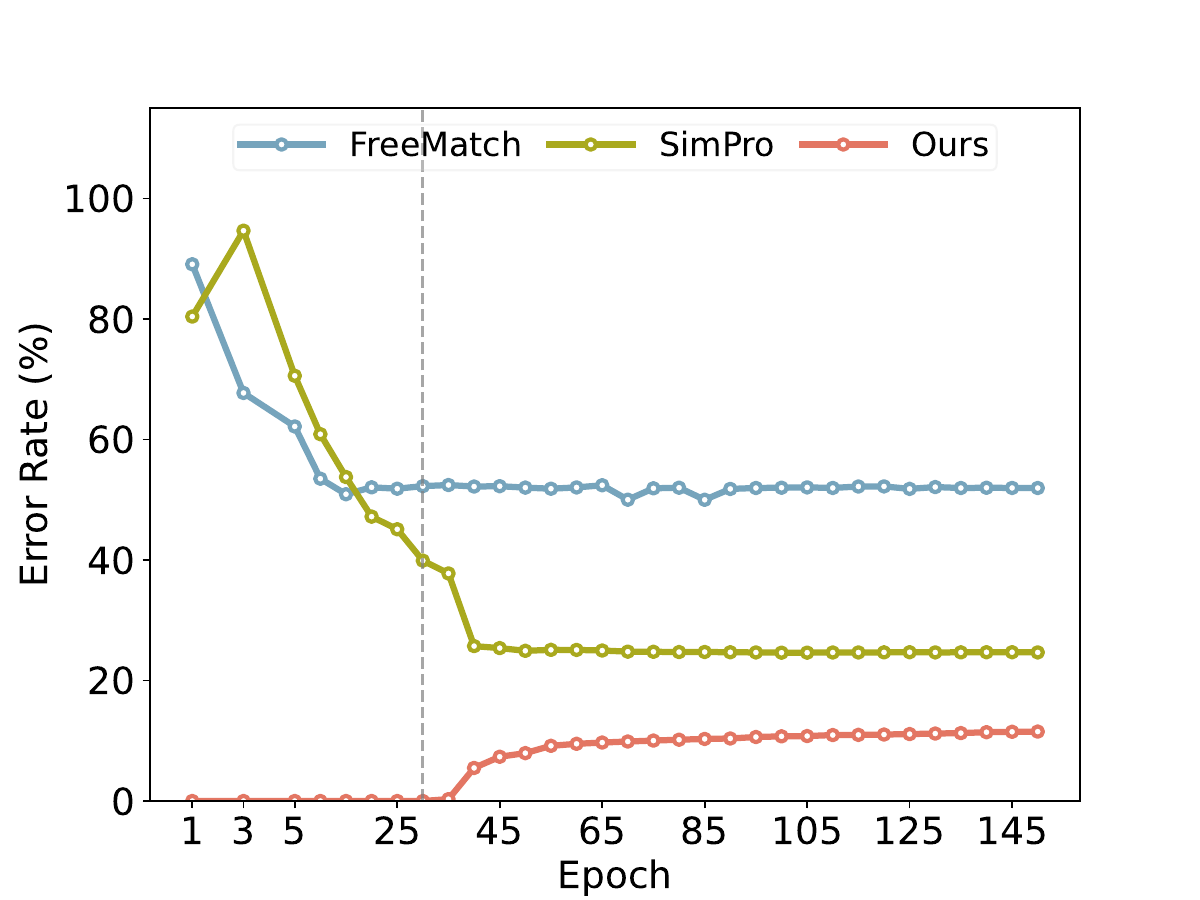}
\label{fig:pseudo-label_error_rate}
}
\hspace{-2.0ex}
\subfigure[Pseudo-label utilization rate]{
\includegraphics[width=0.32\textwidth]{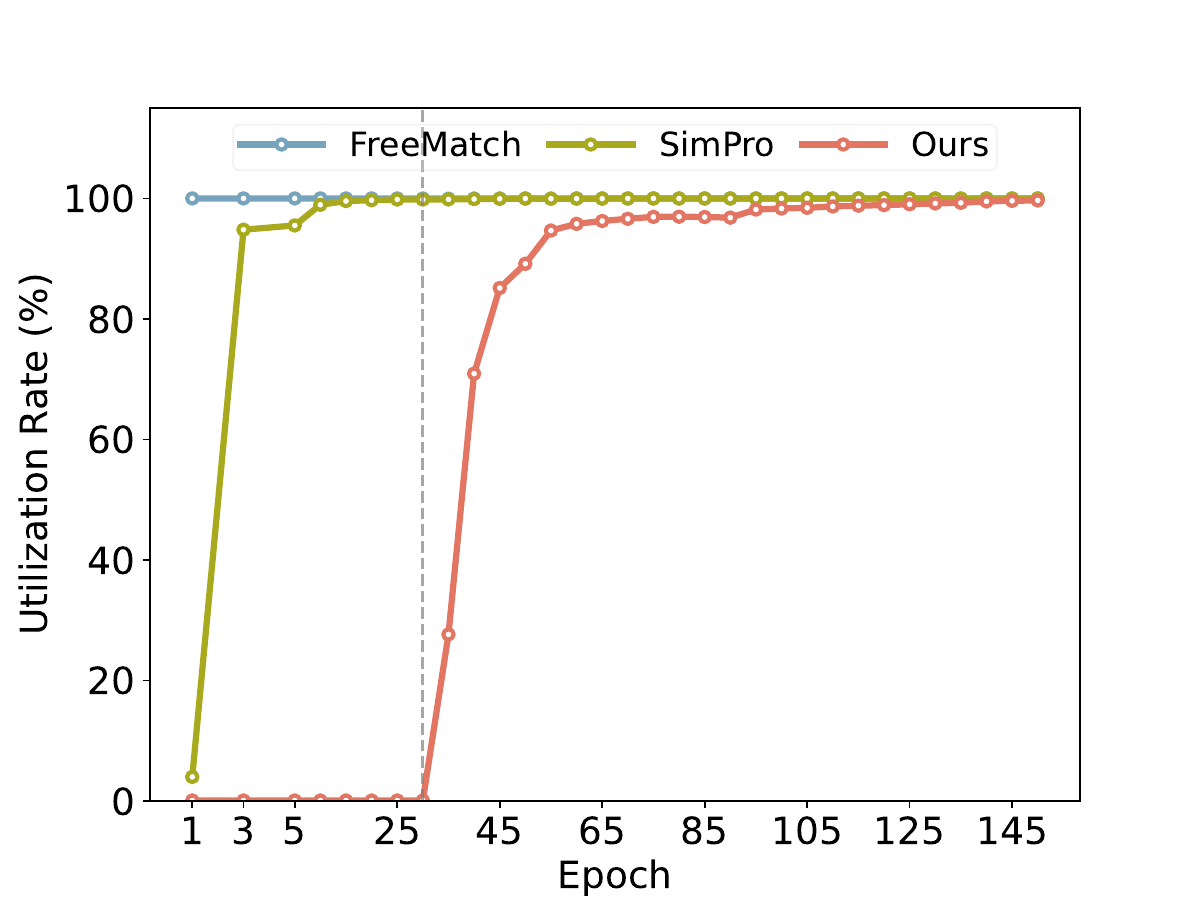}
\label{fig:pseudo-label_utilization_rate}
}
\hspace{-2.23ex}
\subfigure[Testing accuracy]{
\includegraphics[width=0.32\textwidth]{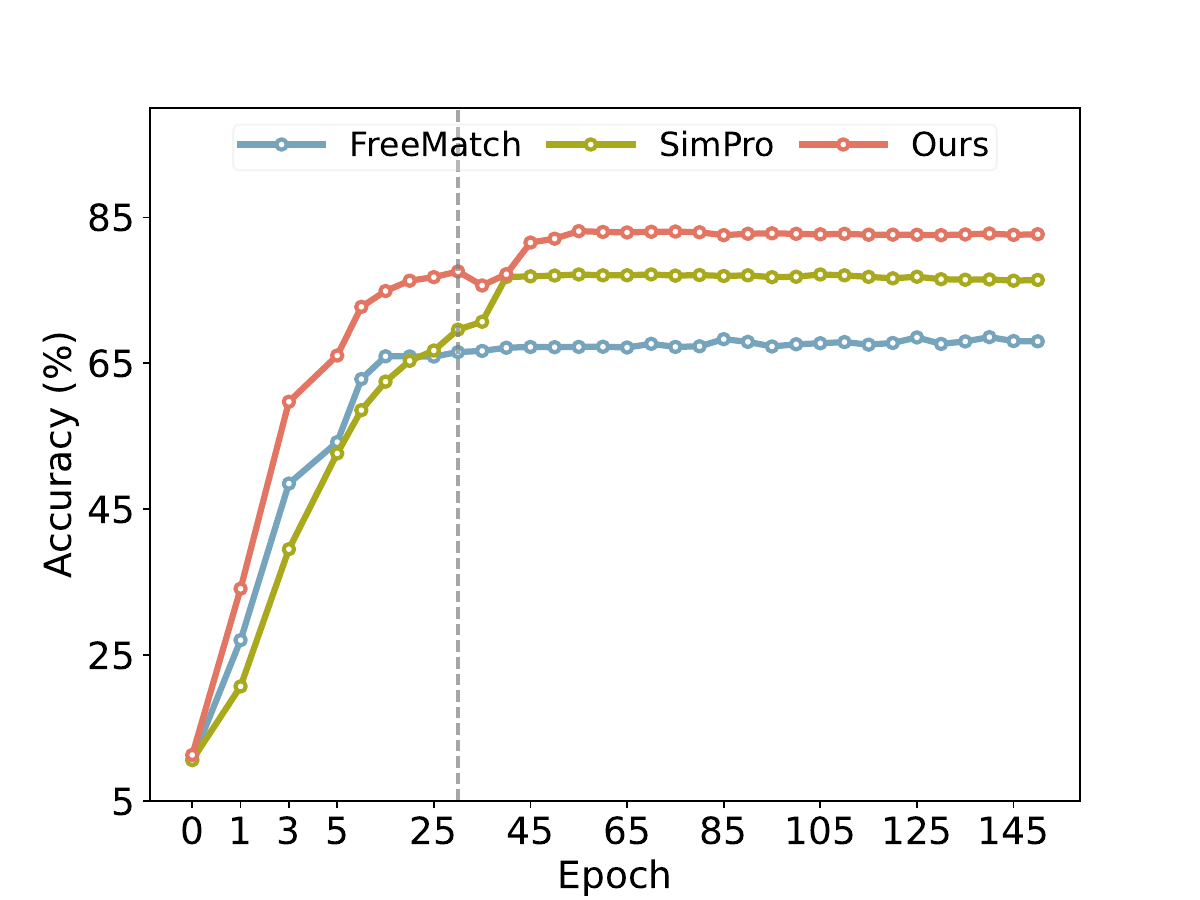}
\label{fig:testing_accuracy}
}\vspace{-0.3cm}
\caption{Comparison of pseudo-label error rate (a), pseudo-label utilization rate (b), and testing accuracy (c) among FreeMatch~\cite{FreeMatch}, SimPro~\cite{SimPro}, and our CPG under arbitrary unlabeled data distribution. The dataset is CIFAR-10-LT with $(N_{max}, M_{max}, \gamma_l, \gamma_u) = (400, 4600, 50, 50)$. The vertical gray dotted line indicates the initiation of pseudo-labeling in our method. Our CPG can generate pseudo-labels with a lower error rate and comparable utilization rate, achieving superior testing accuracy compared to both FreeMatch and SimPro.}\vspace{-0.5cm}
\label{fig:training_process}
\end{figure*}

To further improve our CPG, we introduce a class-aware adaptive augmentation (CAA) module and an auxiliary branch. The CAA module enhances minority class representations by dynamically adjusting augmentation intensity based on class compactness. Specifically, for classes with higher compactness (indicating lower intra-class diversity), CAA applies a smaller augmentation radius during representation synthesis to refine decision boundaries more effectively. Meanwhile, the auxiliary branch maximizes data utilization by enforcing prediction consistency between different augmentation views of all labeled and unlabeled samples.

In summary, the main contributions of this work are as follows.
\begin{itemize}[leftmargin=*]
\item We propose a Controllable Pseudo-label Generation (CPG) framework to handle the unknown arbitrary unlabeled data distribution problem in ReaLTSSL. The framework expands the labeled dataset with the progressively identified reliable pseudo-labels from the unlabeled dataset, and trains the model on the updated labeled dataset with a known distribution, making it unaffected by the unlabeled data distribution.

\item We theoretically prove that our controllable self-reinforcing optimization cycle can significantly reduce the generalization error under some conditions.

\item We further propose a class-aware adaptive augmentation module to enhance the representation of minority classes, and an auxiliary branch to maximize data utilization by leveraging all labeled and unlabeled samples.

\item Comprehensive experiments on four commonly used benchmarks (CIFAR-10-LT, CIFAR-100-LT, Food-101-LT, and SVHN-LT) under various labeled and unlabeled data distributions validate that our CPG achieves new state-of-the-art performances with a \textbf{15.97\%} maximum improvement.
\end{itemize}

\section{Related Work}
\label{sec:related_work}

\noindent\textbf{Long-Tailed Learning (LTL)} addresses the challenge of training effective models on data with long-tailed distributions, where a majority of classes are represented by only a few samples. Prevailing LTL methods can be categorized into three groups: re-sampling, logit adjustment, and ensemble learning. Re-sampling~\cite{CMO, HRH, LTGC, PAT} balances classes by under-sampling majority classes or over-sampling minority classes. Logit adjustment~\cite{BLV, GR, CIA} modifies the predicted logits to compensate for class imbalance. Ensemble learning~\cite{RIDE, BalPoE, MDCS} combines multiple classifiers (experts) to improve the performance and robustness of the model. Despite their success in supervised learning, extending them to long-tailed semi-supervised learning (LTSSL) requires extra effort, as exploiting unlabeled samples is not trivial.

\noindent\textbf{Semi-Supervised Learning (SSL)} provides a powerful framework for improving model performance in label-scarce domains by leveraging abundant unlabeled data. Early SSL methods rely on self-training with pseudo-labeling, where model refinement depends on the quality of generated pseudo-labels. Recent advances integrate pseudo-labeling~\cite{Tri-net, CL} with consistency regularization~\cite{VAT, DS}, enforcing prediction invariance between weak and strong augmentation views to enhance robustness. Prominent examples include FixMatch~\cite{FixMatch}, FlexMatch~\cite{FlexMatch}, FreeMatch~\cite{FreeMatch}, SoftMatch~\cite{SoftMatch}, and SemiReward~\cite{SemiReward}, which primarily differ in their confidence thresholding strategies to balance pseudo-label quality and quantity. While effective in settings with balanced labeled and unlabeled data distributions, these methods struggle with long-tailed labeled data or mismatched distributions between labeled and unlabeled data.

\noindent\textbf{Long-Tailed Semi-Supervised Learning (LTSSL)} addresses the challenges arising from the long-tailed distribution. Existing LTSSL methods integrate long-tailed learning techniques~\cite{QAST, ConMix, LTPLL, IBC, imFTP} with frameworks like FixMatch. For example, ABC~\cite{ABC} employs an auxiliary classifier to improve minority class generalization, CReST~\cite{CReST} selectively generates pseudo-labels for minority class samples based on class frequency, and CoSSL~\cite{CoSSL} introduces feature enhancement strategies to refine classifier learning. However, these approaches overlook distribution mismatches between labeled and unlabeled data, a key limitation addressed by realistic long-tailed semi-supervised learning (ReaLTSSL). Recent ReaLTSSL methods include ACR~\cite{ACR}, which adjusts logits based on the distance between the estimated unlabeled data distribution and the predefined anchor distributions; CPE~\cite{CPE}, which employs multiple classifiers (experts) to model diverse unlabeled data distributions; Meta-Expert~\cite{Meta-Expert}, which significantly enhances CPE by incorporating the expertises of different experts, and SimPro~\cite{SimPro}, which estimates unlabeled data distributions through high-confidence pseudo-label counts and adapts pseudo-label generation probabilities correspondingly. \textit{Despite their contributions, ACR, CPE, and Meta-Expert require prior knowledge of the unlabeled data distribution. SimPro, meanwhile, fails to estimate the unlabeled data distribution accurately when the labeled and unlabeled distributions diverge significantly. These limitations restrict their practical applicability.}

\section{Proposed Method}
\label{sec:proposed_method}

\subsection{Preliminaries}
\label{sec:preliminaries}

\textbf{Problem formulation and challenges.} In ReaLTSSL, we consider a labeled dataset $\mathcal{D}_l = \{x_i^l, y_i^l\}_{i=1}^N$ of size $N$ and an unlabeled dataset $\mathcal{D}_u = \{x_j^u\}_{j=1}^M$ of size $M$, where both datasets share identical feature and label spaces. Here, $x_i^l$ denotes the $i$-th labeled sample with ground-truth label $y_i^l \in \{1, \ldots, C\}$, $x_j^u$ represents the $j$-th unlabeled sample, and $C$ is the total number of classes. Let $N_c$ denote the number of labeled samples in class $c$, defining the labeled data imbalance ratio as $\gamma_l = \frac{\max_c N_c}{\min_c N_c}$. While we can theoretically assume that $M_c$ represents the number of samples for class $c$ in $\mathcal{D}_u$, with its imbalance ratio $\gamma_u = \frac{\max_c M_c}{\min_c M_c}$, in practice the unlabeled data follow an unknown arbitrary distribution, making $M_c$ and $\gamma_u$ unavailable. The goal of ReaLTSSL is to train a classifier $F: \mathbb{R}^d \to \left[1, \ldots, C\right]$, parameterized by $\theta$, using $\mathcal{D}_l$ and $\mathcal{D}_u$ to achieve robust generalization across all classes. This presents a key challenge for SSL and LTSSL methods, as most existing methods depend on either balanced or aligned distributions between $\mathcal{D}_l$ and $\mathcal{D}_u$, or require prior knowledge of the unlabeled data distribution. In practice, the distribution of $\mathcal{D}_u$ is typically unknown, limiting the capability of the existing methods.

\textbf{Logit adjustment.} In real-world scenarios, data typically follow a long-tailed distribution, resulting in models trained on such data exhibiting an inherent bias toward majority classes. To mitigate this issue, numerous long-tailed learning methods aim to enhance minority class performance without degrading accuracy in majority classes. A prominent approach, logit adjustment~\cite{LA}, promotes larger relative margins between the minority positive and majority negative classes. It has been theoretically established as the Bayes-optimal solution for long-tailed learning. The logit-adjusted softmax cross-entropy loss (LA loss, $\ell_{la}$) is defined as:\vspace{-0.15cm}
\begin{align}
\ell_{la}(y, f(g(x))) = -\log \frac{e^{f_{y}(g(x)) + \ln P_{y}}}{\sum_{y^{\prime} \in [C]} e^{f_{y^{\prime}}(g(x)) + \ln P_{y^{\prime}}}},
\label{eq:logit_adjustment}
\end{align}
where $g(\cdot)$ and $f(\cdot)$ denote the encoder and classifier, respectively, $f_{y}(g(x))$ is the predicted logit of the $y$-th class, and $P_y$ denotes the class prior of the $y$-th class.

\subsection{Our CPG Framework}
\label{sec:cpg_framework}

\textbf{Framework overview.} To address the above-mentioned challenges, we introduce CPG, a controllable pseudo-label generation framework tailored for ReaLTSSL that operates \textit{without} restrictive assumptions or prior knowledge about the distribution of the unlabeled dataset. Specifically, CPG adopts a controllable self-reinforcing optimization cycle, a class-aware adaptive augmentation module, and an auxiliary branch. The first employs a dynamic controllable filtering mechanism to identify reliable pseudo-labels. These pseudo-labels are combined with the original labeled dataset to iteratively update the labeled dataset, and progressively construct a Bayes-optimal classifier through logit adjustment. The improved classifier, in turn, helps identify more reliable pseudo-labels for subsequent training steps. The second enhances discriminative representations for minority classes within each training step. The last maximizes data utilization by leveraging all labeled and unlabeled samples. \textit{Unlike existing methods that estimate the unlabeled data distribution based on high-confidence pseudo-labels, our CPG incorporates reliable pseudo-labels from the unlabeled dataset into the labeled dataset, and trains the model on the updated labeled dataset with a known distribution, making it unaffected by the unlabeled data distribution.}

\textbf{Controllable self-reinforcing optimization cycle.} The optimization cycle incorporates three key components, i.e., dynamic controllable filtering, iterative labeled dataset construction, and Bayes-optimal classifier construction. We present their details in the following paragraphs.

\textit{Dynamic controllable filtering to identify reliable pseudo-labels.} Conventional SSL and LTSSL methods typically generate pseudo-labels by propagating predictions from weak augmentation views to their strong augmentation views. Although this strategy improves model robustness, it inherently introduces more error pseudo-labels when handling the unknown arbitrary unlabeled data distribution, as evidenced in Figs.~\ref{fig:freematch_arbitrary} and~\ref{fig:simpro_arbitrary}. To overcome these limitations, we propose to identify reliable pseudo-labels to extend the labeled dataset instead of enforcing prediction consistency between weak and strong augmentation views. For each unlabeled sample $x_u$ with weak augmentation view $\Omega_w(x_u)$ and strong augmentation view $\Omega_s(x_u)$, we derive predicted pseudo-labels and corresponding confidence scores as:\vspace{-0.15cm}
\begin{align}
\hat{q}_w, \tilde{q}_w & = \argmax(\sigma (f(h_w))), \max(\sigma (f(h_w))), \nonumber \\
\hat{q}_s, \tilde{q}_s & = \argmax(\sigma (f(h_s))), \max(\sigma (f(h_s))),
\label{eq:q_hat_q_tidle}
\end{align}
where $\argmax(\cdot)$ and $\max(\cdot)$ extract the pseudo-labels and confidence scores, respectively, and $\sigma(\cdot)$ denotes the softmax function. Let $h_w = g(\Omega_w(x_u))$ represent the representation produced by $g(\cdot)$ from $\Omega_w(x_u)$, with $\hat{q}_w$ and $\tilde{q}_w$ denoting the pseudo-label and its associated confidence score predicted for $h_w$, respectively. $h_s$, $\hat{q}_s$, and $\tilde{q}_s$ are defined similarly for $\Omega_s(x_u)$. We express the binary sample mask for selecting a reliable pseudo-label as:\vspace{-0.15cm}
\begin{align}
\mathbb{I} = \mathbb{I}(\tilde{q}_w > \tau) \cdot \mathbb{I}(\tilde{q}_s > \tau) \cdot \mathbb{I}(\hat{q}_w = \hat{q}_s),
\label{eq:controllable}
\end{align}
where $\tau$ denotes the confidence threshold and $\cdot$ represents element-wise multiplication. Since samples may receive conflicting pseudo-labels across different training steps according to Eq.~\eqref{eq:controllable}, we introduce a voting strategy to ensure consistent pseudo-label assignments for each sample. As illustrated in Fig.~\ref{fig:cpg_arbitrary}, the combination of Eq.~\eqref{eq:controllable} and the voting strategy effectively controls pseudo-label generation and enhances pseudo-label reliability, demonstrating its strong performance.

\textit{Iterative labeled dataset construction.} During the training process, we maintain both a labeled dataset and a pseudo-labeled dataset (i.e., identified reliable pseudo-labels) at each training step, with class frequencies denoted by $n = \{n_1, \ldots, n_C\}$ and $m = \{m_1, \ldots, m_C\}$, respectively. The updated labeled dataset’s class frequency is defined as $\phi = \{\phi_1, \ldots, \phi_C\}$, where $\phi_c = n_c + m_c$ denotes the number of samples for class $c$ in the updated labeled dataset. Accordingly, the class distribution for the updated labeled dataset is given by:\vspace{-0.15cm}
\begin{align}
\pi = \{\pi_1, \ldots, \pi_C\}, \pi_c = \frac{\phi_c}{\sum_{c^{\prime} \in [C]} \phi_{c^{\prime}}},
\label{eq:class_distribution}
\end{align}
where $\pi_c$ represents the label frequency of the $c$-th class in the updated labeled dataset. This yields an updated labeled dataset with a known distribution at each training step, enabling the iterative construction of a Bayes-optimal classifier.

\textit{Bayes-optimal classifier construction.} Leveraging the known distribution of the updated labeled dataset, we construct a classifier by minimizing the LA loss ($\ell_{la}$) defined in Eq.~\eqref{eq:logit_adjustment}. The resulting classifier minimizes the balanced error and yields Bayes-optimal predictions~\cite{LA}. By iteratively constructing a Bayes-optimal classifier using the updated labeled dataset (with a known distribution at each training step), our CPG framework generates increasingly reliable pseudo-labels in subsequent training steps. Although our method does not explicitly estimate the unlabeled data distribution, the progressive incorporation of reliable pseudo-labels naturally approximates the ground-truth unlabeled data distribution as training progresses. This is because the model’s increasing confidence enables it to gradually include more diverse samples, including those from minority classes, improving distribution alignment (as demonstrated in Appendix~\ref{sec:evolution_of_pseudo-label_predictions} Fig.~\ref{fig:evolution_of_pseudo-label_predictions}).

While consistency regularization between augmentation views is commonly used in SSL, we avoid it in our controllable self-reinforcing optimization cycle for two key reasons. First, pseudo-label quality is highly sensitive to the distribution mismatch between labeled and unlabeled data in LTSSL. Second, propagating error pseudo-labels from weak augmentation views to strong augmentation views can reinforce the model’s learning of incorrect predictions, leading to error accumulation and performance degradation (as demonstrated in Figs.~\ref{fig:pseudo-label_arbitrary} and~\ref{fig:training_process}).

\textbf{Class-aware adaptive augmentation to enhance minority class representations.} While the controllable self-reinforcing optimization cycle effectively utilizes unlabeled samples with reliable pseudo-labels and iteratively constructs Bayes-optimal classifiers, enhancing discriminative representations for minority classes remains crucial in long-tailed learning. To achieve this goal, we enhance the updated labeled dataset by synthesizing minority class representations guided by class compactness. Formally, the compactness of class $c$ is defined as:\vspace{-0.15cm}
\begin{align}
\alpha(c) = \frac{1}{\phi_c} \sum_{i \in [\phi_c]} \frac{\left\langle h_i, \mu(c) \right\rangle}{\left\| h_i \right\| \cdot\left\| \mu(c) \right\|},
\label{eq:compactness}
\end{align}
where $\phi_c$ denotes the number of samples in class $c$, $h_i = g(x_i)$ denotes the representation produced by encoder $g(\cdot)$, $\mu(c)$ denotes the centroid of class $c$ in the feature space, and $\left\| \cdot \right\|$ denotes the Euclidean norm. As minority classes typically exhibit lower intra-class diversity, it is associated with a higher compactness $\alpha$ according to Eq.~\eqref{eq:compactness} and benefits from representation synthesis with a smaller augmentation radius $r$ to refine decision boundaries, where $r = 1 / \alpha$. The representation synthesis for sample $x_i$ from minority class $c$  is computed as:\vspace{-0.15cm}
\begin{align}
h^{\prime}_{i} = h_i + \frac{h_i}{\left\| h_i \right\|} \cdot r(c) \cdot \delta_i,
\label{eq:caa}
\end{align}
where $\delta_i$ is a noise vector sampled from a $d$-dimensional standard normal distribution (with $d$ matching the representation dimension). For minority classes in the updated labeled dataset, we synthesize ten augmented representations per original sample by Eq.~\eqref{eq:caa}.

\textbf{Auxiliary branch to maximize data utilization.} Our controllable self-reinforcing optimization cycle incorporates reliable pseudo-labels from the unlabeled dataset into the labeled dataset and trains the model on the updated labeled dataset. However, during early training stages when model performance is still weak, only a limited number of samples meet the reliability condition for pseudo-labeling (as demonstrated in Figs.~\ref{fig:pseudo-label_arbitrary} and~\ref{fig:training_process}, and Appendix~\ref{sec:evolution_of_pseudo-label_predictions} Fig.~\ref{fig:evolution_of_pseudo-label_predictions}). To maximize data utilization and provide additional supervision signals to stabilize representation learning during the critical initial phase, we introduce an auxiliary branch to leverage all labeled and unlabeled samples. Specifically, this branch adopts a consistency regularization paradigm similar to the conventional SSL methods like FixMatch~\cite{FixMatch}, enforcing prediction consistency between weak and strong augmentation views via an unsupervised consistency regularization loss (Aux loss, $\ell_{aux}$). The Aux loss $\ell_{aux}$ is formulated as:\vspace{-0.15cm}
\begin{align}
\ell_{aux} = -\log \frac{e^{f_{\hat y}(g(x))}}{\sum_{y^{\prime} \in [C]} e^{f_{y^{\prime}}(g(x))}},
\label{eq:la_aux}
\end{align}
where $f_{\hat y}(g(x))$ denotes the logit produced by the strong augmentation view, corresponding to the pseudo-label $\hat y$ generated from the weak augmentation view.

\begin{algorithm}[ht]
\caption{Training process of our CPG}
\label{alg:algorithm}
\begin{algorithmic}[1]
\STATE \textbf{Input}: Labeled and unlabeled datasets $\mathcal{D}_l$ and $\mathcal{D}_u$.
\STATE \textbf{Output}: Encoder $g$, primary branch (classifier) $f_{pri}$, and auxiliary branch (classifier) $f_{aux}$.
\STATE Initialize the parameters of $g$, $f_{pri}$, and $f_{aux}$ randomly.
\FOR{epoch=1, 2, \dots}
\FOR{batch=1, 2, \dots}
\IF {epoch $>$ 30}
\STATE Identify reliable pseudo-labels by Eq.~\eqref{eq:q_hat_q_tidle} and Eq.~\eqref{eq:controllable};
\STATE Calculate the class distribution of the updated labeled dataset by Eq.~\eqref{eq:class_distribution};
\STATE Enhance minority class representations by Eq.~\eqref{eq:compactness} and Eq.~\eqref{eq:caa};
\ENDIF
\STATE Calculate the LA loss by Eq.~\eqref{eq:logit_adjustment} for the primary branch;
\STATE Calculate the LA loss by Eq.~\eqref{eq:logit_adjustment} and Aux loss by Eq.~\eqref{eq:la_aux} for the auxiliary branch;
\STATE Obtain the overall loss by Eq.~\eqref{eq:overall};
\STATE Update network parameters via gradient descent;
\ENDFOR
\ENDFOR
\end{algorithmic}
\end{algorithm}

\subsection{Model Training and Prediction}
\label{sec:model_training_and_prediction}

Our training process consists of an initial stage of thirty epochs using only the labeled dataset, followed by an iterative stage that expands the labeled dataset with reliable pseudo-labels and continues optimization on the updated labeled dataset. Throughout both stages, the auxiliary branch is trained on all labeled and unlabeled samples. We optimize the model by combining the LA loss ($\ell_{la}$) applied to both branches, with the Aux loss ($\ell_{aux}$) applied only to the auxiliary branch. The overall loss $\ell_{overall}$ is formulated as:\vspace{-0.15cm}
\begin{align}
\ell_{overall} = \ell_{la} + \omega \ell_{aux},
\label{eq:overall}
\end{align}
where $\omega$ is a binary indicator function that equals 1 when $\ell_{overall}$ is applied to the auxiliary branch and 0 otherwise. The pseudo-code is detailed in Algorithm~\ref{alg:algorithm}. After training, we can obtain the predicted label of an unseen sample $x^*$ by getting the label index with the highest confidence.

\subsection{Theoretical Analysis}
\label{sec:theoretical_analysis}

We prove that our controllable self-reinforcing optimization cycle (including progressively incorporating reliable pseudo-labels into the labeled dataset and training a Bayes-optimal classifier on the updated labeled dataset with a known distribution) can significantly reduce the generalization error under some conditions. Let $\ell_{ours} = \frac{1}{N + \hat M_t} \left[\sum_{i = 1}^{N} \ell_{la} (f(x_i), y_i) + \sum_{j = 1}^{\hat M_t} \ell_{la} (f(x_j), \hat y_j) \right]$ be the training loss on the $N$ labeled and $\hat M_t$ pseudo-labeled samples with pseudo-labeling error rate $\epsilon_t$ for our method at the $t$-th training step. We begin by defining the model empirical risk $R$ at the $t$-th training step as:\vspace{-0.15cm}
\begin{align}
R_t = R_{t-1} - \lambda_t + I_{\epsilon_t} + I_{O_t},
\label{eq:model_empirical_risk}
\end{align}
where $R_t$ denotes the model empirical risk at the $t$-th training step, $\lambda_t \geq 0$ quantifies the empirical risk reduction achieved, and $I_{\epsilon_t}$ and $I_{O_t}$ represent the impact of noisy pseudo-labels and number of training samples, respectively. Let the function space $\mathcal{H}_y$ for the label $y \in \{1, \dots, C\}$ be $\{h:x \longmapsto f_y(x) | f \in \mathcal{F}\}$, where $f_y(x)$ denotes the predicted probability of the $y$-th class. Let $\mathcal{R}_{O_t}(\mathcal{H}_y)$ be the expected Rademacher complexity~\cite{FML} of $\mathcal{H}_y$ with $O_t = N + \hat M_t$ training samples. Then we have the following theorem.

\begin{theorem}[Generalization Error]
\label{thm:convergence_properties}

Suppose that the loss function $\ell_{la}(f(x), y)$ is $\rho$-Lipschitz with respect to $f(x)$ for all $y \in \{1, \dots, C\}$ and upper-bounded by $U$. Given the pseudo-labeling error rate $0 < \epsilon < 1$, for any $\upsilon > 0$, with probability at least $1 - \upsilon$, we have:\vspace{-0.15cm}
\begin{align}
R_T \leq R_0 - \sum_{t = 1}^{T} \lambda_t + U \sum_{t = 1}^{T} \epsilon_t + 4 \sqrt{2} \rho \sum_{t = 1}^{T} \sum_{y = 1}^{C} \mathcal{R}_{O_t}(\mathcal{H}_y) + 2 U \sum_{t = 1}^{T} \sqrt{\frac{\log \frac{2}{\upsilon}}{2O_t}},
\end{align}
where $T$ denotes the total number of training steps, $R_T$ and $R_0$ represent the model empirical risk at the final training step and initial training step (without pseudo-labels), respectively, and $\epsilon_t$ represents the pseudo-labeling error rate at the $t$-th training step.

\end{theorem}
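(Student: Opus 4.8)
The plan is to unroll the recursion in Equation~\eqref{eq:model_empirical_risk} and then bound the two data-dependent error terms $I_{\epsilon_t}$ and $I_{O_t}$ separately using standard tools. Writing out the recurrence $R_t = R_{t-1} - \lambda_t + I_{\epsilon_t} + I_{O_t}$ telescopically from $t=1$ to $T$ immediately yields
\begin{align}
R_T = R_0 - \sum_{t=1}^{T} \lambda_t + \sum_{t=1}^{T} I_{\epsilon_t} + \sum_{t=1}^{T} I_{O_t}.
\label{eq:telescope_plan}
\end{align}
This reduces the theorem to proving two per-step bounds: $I_{\epsilon_t} \leq U \epsilon_t$ and $I_{O_t} \leq 4\sqrt{2}\rho \sum_{y=1}^{C} \mathcal{R}_{O_t}(\mathcal{H}_y) + 2U\sqrt{\log(2/\upsilon)/(2O_t)}$, after which summing over $t$ and relabeling $\epsilon_t$ gives the stated result.

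First I would handle the pseudo-label noise term $I_{\epsilon_t}$. The quantity to control is the gap between the empirical risk computed with (erroneous) pseudo-labels $\hat y_j$ and the risk one would incur with the true labels. Since at step $t$ a fraction $\epsilon_t$ of the $\hat M_t$ pseudo-labeled samples carry an incorrect label, and each such mislabeled term contributes a loss discrepancy bounded by the range $U$ of $\ell_{la}$ (using $0 \le \ell_{la} \le U$, so $|\ell_{la}(f(x),\hat y) - \ell_{la}(f(x),y)| \le U$), the averaged contribution is at most $\frac{\hat M_t}{O_t}\epsilon_t U \le U\epsilon_t$. This is a direct counting argument and should be routine.

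Next I would bound the finite-sample estimation term $I_{O_t}$, which measures the deviation between empirical and population risk over the $O_t = N + \hat M_t$ training samples. The plan is to invoke a uniform-convergence / Rademacher generalization bound: with probability at least $1-\upsilon$, $\sup_{f \in \mathcal{F}} (\text{population risk} - \text{empirical risk}) \le 2\,\mathfrak{R}_{O_t}(\ell_{la}\circ\mathcal{F}) + U\sqrt{\log(2/\upsilon)/(2O_t)}$ via McDiarmid's inequality (bounded-difference with range $U/O_t$). I would then peel off the loss composition: applying the $\rho$-Lipschitzness of $\ell_{la}$ with a vector-contraction (Talagrand-type) inequality decomposes the composed Rademacher complexity across the $C$ coordinate function classes, producing the factor $\rho$, the sum $\sum_{y=1}^{C}\mathcal{R}_{O_t}(\mathcal{H}_y)$, and the constant $4\sqrt{2}$ (the $\sqrt{2}$ being the usual vector-contraction overhead and the factor $2$ and $4$ absorbing the symmetrization and two-sided constants). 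Matching these universal constants to the precise form stated is the bookkeeping-heavy part.

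The main obstacle I anticipate is the union bound over training steps together with the constant tracking in the vector-contraction step. Each per-step high-probability bound holds with its own failure probability, so to claim the final inequality simultaneously for all $T$ steps at confidence $1-\upsilon$ one must either apply a union bound (which would rescale $\upsilon \to \upsilon/T$ inside the logarithm) or argue that the relevant randomness is shared so a single McDiarmid application suffices; reconciling this with the clean $\log(2/\upsilon)$ appearing inside every summand is the delicate point, and I would expect the proof to treat the bound as holding per step and sum the resulting $2U\sqrt{\log(2/\upsilon)/(2O_t)}$ terms under an implicit per-step confidence. Getting the multi-class contraction constant exactly equal to $4\sqrt{2}\rho$ rather than an unspecified $O(\rho)$ is the other place where care is required.
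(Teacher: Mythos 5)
Your proposal follows essentially the same route as the paper's proof: the paper also telescopes the recursion in Eq.~\eqref{eq:model_empirical_risk}, bounds the pseudo-label noise term by a counting argument giving $U\epsilon_t$ (its Lemma~2), and bounds the estimation term via McDiarmid plus the Rademacher vector-contraction inequality (its Lemma~1), with the doubling to $4\sqrt{2}\rho$ and $2U$ arising from applying the one-sided deviation bound twice along a chain comparing the empirical risk minimizer $\hat f$ to an arbitrary $f$. On the delicate point you flag, the paper does exactly what you anticipated: it invokes the per-step high-probability bound at each $t$ with the same $\upsilon$ and sums, without a union bound over the $T$ steps.
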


The proof of Theorem~\ref{thm:convergence_properties} is provided in Appendix~\ref{sec:proof_of_theorem_convergence_properties}. It can be observed that the model empirical risk $R_T$ depends primarily on three key factors, i.e., the cumulative model empirical risk reduction $\sum_{t = 1}^{T} \lambda_t$, the cumulative pseudo-labeling error rate $\sum_{t = 1}^{T} \epsilon_t$, and the number of training samples $O_t$ at each training step. When the training step $t$ increases, the number of training samples $O_t$ grows while the pseudo-labeling error rate $\epsilon_t$ either decreases or remains constant, \textit{the Bayes-optimal classifier constructed at the $t$-th training step can maximize the model empirical risk reduction $\lambda_t$}, i.e., as $T \rightarrow \infty$, $O_T \rightarrow \infty$, $\sum_{t = 1}^{T} \epsilon_t$ remains bounded, and $\sum_{t = 1}^{T} \lambda_t$ reaches its theoretical maximum, Theorem~\ref{thm:convergence_properties} demonstrates that the generalization error is minimized. As shown in Figs.~\ref{fig:pseudo-label_error_rate} and~\ref{fig:pseudo-label_utilization_rate}, our CPG framework demonstrates superior performance over baseline methods, achieving both: (i) a consistently lower pseudo-labeling error rate $\epsilon_t$, and (ii) a growing number of training samples $O_t$ that gradually match baseline levels during training. Together with the Bayes-optimal classifier, our CPG can significantly reduce the generalization error.

\section{Experiments}
\label{sec:experiments}

\subsection{Experimental setting}
\label{sec:experimental_setting}

\textbf{Dataset.} We perform our experiments on four widely-used datasets (CIFAR-10-LT~\cite{CIFAR}, CIFAR-100-LT~\cite{CIFAR}, Food-101-LT~\cite{Food}, and SVHN-LT~\cite{SVHN}), following the main experimental settings in FreeMatch~\cite{FreeMatch} and SimPro~\cite{SimPro}. More details about those datasets are provided in Appendix~\ref{sec:dataset_details}, and implementation details are provided in Appendix~\ref{sec:implementation_details}.

\textbf{Baselines.} We compare with three SSL algorithms, including FixMatch~\cite{FixMatch}, FreeMatch~\cite{FreeMatch}, and SoftMatch~\cite{SoftMatch}, and three ReaLTSSL algorithms, including ACR~\cite{ACR}, SimPro~\cite{SimPro}, and CDMAD~\cite{CDMAD}. Moreover, we use the supervised learning (SL) setting as a performance upper-bound reference. For a fair comparison, we test these baselines and our CPG on the widely-used codebase USB\footnote{\url{https://github.com/microsoft/Semi-supervised-learning}}. For the data augmentation strategy, an identical weak augmentation is applied to both labeled and unlabeled data, while reserving a strong augmentation for unlabeled data. We use the same dataset splits with no overlap between labeled and unlabeled training data for all datasets.\vspace{-0.35cm}

\begin{table*}[ht]
\caption{Comparison of accuracy ($\%$) on CIFAR-10-LT ($N_{max} = 400, M_{max} = 4600$) with class imbalance ratio $\gamma \in \{100, 150, 200\}$ under different unlabeled data distributions. CE and LA denote using softmax cross-entropy loss and logit-adjusted softmax cross-entropy loss under the supervised learning (SL) setting, respectively. $\dagger$ indicates we reproduce ACR without anchor distributions for a fair comparison.}
\label{tab:cifar10}
\centering
\begin{adjustbox}{width=1.0\columnwidth, center}
\begin{tabular}{llcccccccccc}
\toprule
\multirow{2}{*}{Scenario} & \multirow{2}{*}{Method} & \multicolumn{3}{c}{$\gamma = 100$} & \multicolumn{3}{c}{$\gamma = 150$} & \multicolumn{3}{c}{$\gamma = 200$} & \multirow{2}{*}{Avg.} \\
\cmidrule(r){3-5} \cmidrule(r){6-8} \cmidrule(r){9-11}
& & \multicolumn{1}{c}{Arbitrary} & \multicolumn{1}{c}{Inverse} & \multicolumn{1}{c}{Consistent} & \multicolumn{1}{c}{Arbitrary} & \multicolumn{1}{c}{Inverse} & \multicolumn{1}{c}{Consistent} & \multicolumn{1}{c}{Arbitrary} & \multicolumn{1}{c}{Inverse} & \multicolumn{1}{c}{Consistent} \\
\cmidrule(r){1-1}         \cmidrule(r){2-2}                 \cmidrule(r){3-5}                 \cmidrule(r){6-8}                  \cmidrule(r){9-11}                  \cmidrule(r){12-12}
\multirow{2}{*}{SL}       & CE                              & \ms{82.90}{2.00} & \ms{83.89}{0.36} & \ms{79.14}{0.20}      & \ms{81.06}{2.42}  & \ms{82.48}{0.30} & \ms{74.51}{0.46}     & \ms{79.20}{1.21}  & \ms{81.43}{0.53} & \ms{72.12}{0.18} & \ms{79.64}{0.85} \\
                          & LA~\cite{LA}                    & \ms{87.00}{0.65} & \ms{86.84}{0.36} & \ms{85.92}{0.14}      & \ms{85.61}{0.96}  & \ms{85.59}{0.09} & \ms{83.88}{0.55}     & \ms{84.55}{0.71}  & \ms{85.02}{0.38} & \ms{82.56}{0.22} & \ms{85.22}{0.45} \\
\cmidrule(r){1-1}         \cmidrule(r){2-2}                 \cmidrule(r){3-5}                 \cmidrule(r){6-8}                  \cmidrule(r){9-11}                  \cmidrule(r){12-12}
\multirow{3}{*}{SSL}      & FixMatch~\cite{FixMatch}        & \ms{62.17}{4.64} & \ms{57.61}{1.71} & \ms{68.35}{1.44}      & \ms{57.73}{3.74}  & \ms{56.51}{2.22} & \ms{64.54}{1.34}     & \ms{54.16}{4.03}  & \ms{51.43}{5.24} & \ms{61.53}{1.08} & \ms{59.34}{2.83} \\
                          & FreetMatch~\cite{FreeMatch}     & \ms{65.41}{3.93} & \ms{68.91}{1.78} & \ms{70.08}{0.67}      & \ms{63.35}{4.84}  & \ms{66.69}{2.77} & \ms{63.47}{2.30}     & \ms{62.35}{6.30}  & \ms{64.31}{4.44} & \ms{59.53}{1.50} & \ms{64.90}{3.17} \\
                          & SoftMatch~\cite{SoftMatch}      & \ms{66.13}{2.27} & \ms{66.00}{1.56} & \ms{72.75}{1.09}      & \ms{62.92}{2.22}  & \ms{65.57}{0.53} & \ms{68.85}{1.28}     & \ms{59.09}{1.95}  & \ms{63.62}{2.13} & \ms{62.49}{0.48} & \ms{65.27}{1.50} \\
\cmidrule(r){1-1}         \cmidrule(r){2-2}                 \cmidrule(r){3-5}                 \cmidrule(r){6-8}                  \cmidrule(r){9-11}                  \cmidrule(r){12-12}
\multirow{4}{*}{ReaLTSSL} & ACR$\dagger$~\cite{ACR}         & \ms{60.60}{4.21} & \ms{62.54}{4.04} & \ms{73.20}{1.80}      & \ms{48.22}{8.07}  & \ms{51.08}{1.03} & \ms{68.31}{0.35}     & \ms{50.13}{7.50}  & \ms{53.40}{3.49} & \ms{65.12}{1.05} & \ms{59.18}{3.51} \\
                          & SimPro~\cite{SimPro}            & \ms{65.81}{1.60} & \ms{63.70}{1.50} & \ms{64.13}{2.03}      & \ms{61.41}{8.10}  & \ms{61.27}{0.72} & \ms{62.32}{2.41}     & \ms{59.11}{8.49}  & \ms{58.09}{3.74} & \ms{59.31}{3.11} & \ms{61.68}{3.52} \\
                          & CDMAD~\cite{CDMAD}              & \ms{63.39}{1.34} & \ms{60.24}{3.13} & \ms{64.63}{1.57}      & \ms{59.45}{1.17}  & \ms{61.44}{1.11} & \ms{65.29}{0.93}     & \ms{56.39}{2.45}  & \ms{58.68}{3.90} & \ms{59.38}{1.85} & \ms{60.99}{1.94} \\
                          \cmidrule(r){2-2}                 \cmidrule(r){3-5}                 \cmidrule(r){6-8}                  \cmidrule(r){9-11}                  \cmidrule(r){12-12}
\rowcolor{black!10} & Ours                & \msb{82.10}{0.74} & \msb{82.37}{0.15} & \msb{76.93}{2.68}   & \msb{76.38}{3.87} & \msb{78.19}{1.63} & \msb{70.75}{2.13}   & \msb{75.18}{3.28} & \msb{77.45}{1.44} & \msb{68.33}{3.94} & \msb{76.41}{2.21} \\
\bottomrule
\end{tabular}
\end{adjustbox}
\vskip -0.18in
\end{table*}

\begin{table*}[ht]
\caption{Comparison of accuracy ($\%$) on CIFAR-100-LT ($N_{max} = 50, M_{max} = 450$) with class imbalance ratio $\gamma \in \{10, 15, 20\}$ under different unlabeled data distributions.}
\label{tab:cifar100}
\centering
\begin{adjustbox}{width=1.0\columnwidth, center}
\begin{tabular}{llcccccccccc}
\toprule
\multirow{2}{*}{Scenario} & \multirow{2}{*}{Method} & \multicolumn{3}{c}{$\gamma = 10$} & \multicolumn{3}{c}{$\gamma = 15$} & \multicolumn{3}{c}{$\gamma = 20$} & \multirow{2}{*}{Avg.} \\
\cmidrule(r){3-5} \cmidrule(r){6-8} \cmidrule(r){9-11}
& & \multicolumn{1}{c}{Arbitrary} & \multicolumn{1}{c}{Inverse} & \multicolumn{1}{c}{Consistent} & \multicolumn{1}{c}{Arbitrary} & \multicolumn{1}{c}{Inverse} & \multicolumn{1}{c}{Consistent} & \multicolumn{1}{c}{Arbitrary} & \multicolumn{1}{c}{Inverse} & \multicolumn{1}{c}{Consistent} \\
\cmidrule(r){1-1}         \cmidrule(r){2-2}                 \cmidrule(r){3-5}                 \cmidrule(r){6-8}                  \cmidrule(r){9-11}                  \cmidrule(r){12-12}
\multirow{2}{*}{SL}       & CE                              & \ms{64.73}{0.27} & \ms{65.35}{0.21} & \ms{64.62}{0.24}     & \ms{61.85}{0.29}  & \ms{63.12}{0.26} & \ms{61.61}{0.26}     & \ms{59.94}{0.33}  & \ms{61.29}{0.27} & \ms{58.91}{0.29} & \ms{62.38}{0.27} \\
                          & LA~\cite{LA}                    & \ms{66.83}{0.33} & \ms{66.68}{0.18} & \ms{66.50}{0.28}     & \ms{64.52}{0.47}  & \ms{64.89}{0.25} & \ms{63.80}{0.21}     & \ms{62.73}{0.38}  & \ms{63.44}{0.30} & \ms{62.22}{0.35} & \ms{64.62}{0.30} \\
\cmidrule(r){1-1}         \cmidrule(r){2-2} \cmidrule(r){3-5}                                 \cmidrule(r){6-8}                  \cmidrule(r){9-11}                  \cmidrule(r){12-12}
\multirow{3}{*}{SSL}      & FixMatch~\cite{FixMatch}        & \ms{48.89}{1.27} & \ms{47.92}{1.63} & \ms{50.04}{0.80}     & \ms{42.91}{0.64}  & \ms{41.54}{1.00} & \ms{44.60}{0.48}     & \ms{40.64}{1.06}  & \ms{38.77}{0.60} & \ms{42.11}{0.57} & \ms{44.16}{0.89} \\
                          & FreetMatch~\cite{FreeMatch}     & \ms{45.97}{0.57} & \ms{45.74}{1.35} & \ms{46.36}{0.93}     & \ms{40.66}{0.62}  & \ms{40.68}{0.48} & \ms{41.08}{0.26}     & \ms{38.05}{0.12}  & \ms{39.24}{0.27} & \ms{39.61}{0.80} & \ms{41.93}{0.60} \\
                          & SoftMatch~\cite{SoftMatch}      & \ms{47.99}{0.85} & \ms{48.17}{1.04} & \ms{48.86}{0.34}     & \ms{42.62}{1.34}  & \ms{41.42}{0.46} & \ms{43.78}{0.84}     & \ms{40.24}{0.93}  & \ms{40.17}{1.10} & \ms{41.13}{0.35} & \ms{43.82}{0.80} \\
\cmidrule(r){1-1}         \cmidrule(r){2-2}                 \cmidrule(r){3-5}                 \cmidrule(r){6-8}                  \cmidrule(r){9-11}                  \cmidrule(r){12-12}
\multirow{4}{*}{ReaLTSSL} & ACR$\dagger$~\cite{ACR}         & \ms{44.10}{1.59} & \ms{48.50}{0.86} & \ms{41.33}{0.41}     & \ms{35.12}{1.36}  & \ms{39.01}{0.38} & \ms{32.73}{1.74}     & \ms{29.08}{1.11}  & \ms{32.78}{1.59} & \ms{29.69}{2.94} & \ms{36.93}{1.33} \\
                          & SimPro~\cite{SimPro}            & \ms{44.26}{0.57} & \ms{44.14}{1.12} & \ms{45.67}{0.88}     & \ms{39.00}{0.50}  & \ms{37.00}{0.39} & \ms{41.32}{0.80}     & \ms{35.57}{1.11}  & \ms{34.35}{0.63} & \ms{37.86}{0.65} & \ms{39.91}{0.74} \\
                          & CDMAD~\cite{CDMAD}              & \ms{33.95}{2.94} & \ms{35.02}{1.62} & \ms{37.15}{0.95}     & \ms{31.36}{1.55}  & \ms{29.60}{4.17} & \ms{32.62}{4.81}     & \ms{28.89}{5.23}  & \ms{25.18}{1.68} & \ms{28.37}{1.58} & \ms{31.35}{2.72} \\
                          \cmidrule(r){2-2}                 \cmidrule(r){3-5}                 \cmidrule(r){6-8}                  \cmidrule(r){9-11}                  \cmidrule(r){12-12}
\rowcolor{black!10} & Ours                & \msb{51.48}{0.32} & \msb{51.46}{0.22} & \msb{51.22}{0.71}  & \msb{46.26}{1.40} & \msb{47.88}{0.24} & \msb{45.94}{1.02}   & \msb{44.17}{1.30} & \msb{44.17}{1.01} & \msb{42.66}{0.56} & \msb{47.25}{0.75} \\
\bottomrule
\end{tabular}
\end{adjustbox}
\vskip -0.18in
\end{table*}

\begin{table*}[ht]
\caption{Comparison of accuracy ($\%$) on Food-101-LT ($N_{max} = 50, M_{max} = 450$) with class imbalance ratio $\gamma \in \{10, 15\}$ and SVHN-LT ($N_{max} = 400, M_{max} = 4600, \gamma = 100$) under different unlabeled data distributions.}
\label{tab:food101_svhn}
\centering
\begin{adjustbox}{width=1.0\columnwidth, center}
\begin{tabular}{llccccccccccc}
\toprule
\multirow{3}{*}{Scenario} & \multirow{3}{*}{Method} & \multicolumn{6}{c}{Food-101-LT} & \multirow{3}{*}{Avg.} & \multicolumn{3}{c}{SVHN-LT} & \multirow{3}{*}{Avg.} \\
\cmidrule(r){3-8}                  \cmidrule(r){10-12}
& & \multicolumn{3}{c}{$\gamma = 10$} & \multicolumn{3}{c}{$\gamma = 15$} & & \multicolumn{3}{c}{$\gamma = 100$} \\
\cmidrule(r){3-5} \cmidrule(r){6-8} \cmidrule(r){10-12}
& & \multicolumn{1}{c}{Arbitrary} & \multicolumn{1}{c}{Inverse} & \multicolumn{1}{c}{Consistent} & \multicolumn{1}{c}{Arbitrary} & \multicolumn{1}{c}{Inverse} & \multicolumn{1}{c}{Consistent} & & \multicolumn{1}{c}{Arbitrary} & \multicolumn{1}{c}{Inverse} & \multicolumn{1}{c}{Consistent} \\
\cmidrule(r){1-1}         \cmidrule(r){2-2}                 \cmidrule(r){3-5}                 \cmidrule(r){6-8}                  \cmidrule(r){9-9}                  \cmidrule(r){10-12}                  \cmidrule(r){13-13}
\multirow{2}{*}{SL}       & CE                              & \ms{48.51}{0.73} & \ms{49.44}{0.41} & \ms{47.92}{0.35}     & \ms{45.45}{1.16}  & \ms{46.40}{0.08} & \ms{44.58}{0.29}     & \ms{47.05}{0.50}     & \ms{91.58}{0.60}  & \ms{92.19}{0.29}  & \ms{91.92}{0.10}     & \ms{91.90}{0.33}  \\
                          & LA~\cite{LA}                    & \ms{50.56}{0.26} & \ms{50.60}{0.30} & \ms{50.06}{0.35}     & \ms{47.78}{0.75}  & \ms{48.32}{0.16} & \ms{47.36}{0.25}     & \ms{49.11}{0.35}     & \ms{93.26}{0.90}  & \ms{94.37}{0.09}  & \ms{92.13}{0.40}     & \ms{93.25}{0.46}  \\
\cmidrule(r){1-1}         \cmidrule(r){2-2}                 \cmidrule(r){3-5}                 \cmidrule(r){6-8}                  \cmidrule(r){9-9}                  \cmidrule(r){10-12}                  \cmidrule(r){13-13}
\multirow{3}{*}{SSL}      & FixMatch~\cite{FixMatch}        & \ms{22.28}{0.88} & \ms{21.18}{0.30} & \ms{23.37}{0.49}     & \ms{18.46}{1.07}  & \ms{17.47}{0.40} & \ms{19.77}{0.24}     & \ms{20.42}{0.56}     & \ms{89.57}{2.34}  & \ms{87.29}{0.37}  & \ms{93.13}{0.04}     & \ms{90.00}{0.92}  \\
                          & FreetMatch~\cite{FreeMatch}     & \ms{22.28}{1.12} & \ms{21.89}{0.87} & \ms{22.22}{0.68}     & \ms{18.72}{0.80}  & \ms{18.66}{0.46} & \ms{19.61}{0.38}     & \ms{20.56}{0.72}     & \ms{85.78}{2.87}  & \ms{87.01}{1.56}  & \ms{90.88}{0.48}     & \ms{87.89}{1.64}  \\
                          & SoftMatch~\cite{SoftMatch}      & \ms{22.87}{0.76} & \ms{22.30}{0.73} & \ms{22.72}{0.75}     & \ms{19.46}{1.04}  & \ms{19.43}{0.87} & \ms{19.84}{0.94}     & \ms{21.10}{0.85}     & \ms{85.71}{1.87}  & \ms{87.79}{1.15}  & \ms{91.07}{0.42}     & \ms{88.19}{1.15}  \\
\cmidrule(r){1-1}         \cmidrule(r){2-2}                 \cmidrule(r){3-5}                 \cmidrule(r){6-8}                  \cmidrule(r){9-9}                  \cmidrule(r){10-12}                  \cmidrule(r){13-13}
\multirow{4}{*}{ReaLTSSL} & ACR$\dagger$~\cite{ACR}         & \ms{18.41}{0.75} & \ms{19.27}{1.12} & \ms{17.30}{0.58}     & \ms{15.44}{0.50}  & \ms{17.07}{0.61} & \ms{13.88}{0.73}     & \ms{16.90}{0.72}     & \ms{87.67}{4.60}  & \ms{84.20}{1.58}  & \ms{92.64}{0.50}     & \ms{88.17}{2.23}  \\
                          & SimPro~\cite{SimPro}            & \ms{19.59}{0.77} & \ms{17.31}{0.50} & \ms{21.34}{0.41}     & \ms{15.29}{0.93}  & \ms{14.51}{0.72} & \ms{17.63}{0.42}     & \ms{17.61}{0.63}     & \ms{90.10}{2.75}  & \ms{88.50}{0.12}  & \ms{86.55}{4.32}     & \ms{88.38}{2.40}  \\
                          & CDMAD~\cite{CDMAD}              & \ms{10.11}{1.63} & \ms{11.26}{1.57} & \ms{12.24}{1.09}     & \ms{10.21}{1.43}  & \ms{8.65}{1.60}  & \ms{11.18}{1.64}     & \ms{10.61}{1.49}     & \ms{81.90}{1.81}  & \ms{81.39}{3.46}  & \ms{87.88}{3.92}     & \ms{83.72}{3.06}  \\
                          \cmidrule(r){2-2}                 \cmidrule(r){3-5}                 \cmidrule(r){6-8}                  \cmidrule(r){9-9}                  \cmidrule(r){10-12}                  \cmidrule(r){13-13}
\rowcolor{black!10} & Ours                & \msb{25.98}{0.66} & \msb{25.52}{0.43} & \msb{25.24}{0.30}  & \msb{21.88}{0.85} & \msb{21.27}{0.51} & \msb{22.11}{0.72}     & \msb{23.67}{0.58}     & \msb{93.99}{0.34} & \msb{94.74}{0.49} & \msb{93.45}{0.52}     & \msb{94.06}{0.45}  \\
\bottomrule
\end{tabular}
\end{adjustbox}
\vskip -0.18in
\end{table*}

\subsection{Results}
\label{sec:results}

\textbf{Consistent distribution settings.} We begin our analysis by evaluating performance under the consistent distribution setting, where labeled and unlabeled data follow an identical long-tailed distribution. Tables~\ref{tab:cifar10} and~\ref{tab:cifar100} present the primary results for CIFAR-10-LT and CIFAR-100-LT, respectively. Across all imbalance ratios, CPG achieves superior classification accuracy compared to previous baselines. For example, given $(N_{max}, M_{max}, \gamma) = (400, 4600, 100)$, CPG outperforms all baselines by up to 3.73 percentage points (pp) on CIFAR-10-LT. Similar improvements are observed for Food-101-LT and SVHN-LT in Table~\ref{tab:food101_svhn}. On Food-101-LT with $(N_{max}, M_{max}, \gamma) = (50, 450, 15)$, CPG achieves gains of up to 2.27 pp.

\textbf{Inconsistent distribution settings.} To assess the real-world scenarios with mismatched distributions, we evaluate CPG under inverse long-tailed and arbitrary unlabeled data distributions. As shown in Tables~\ref{tab:cifar10} and~\ref{tab:cifar100}, CPG consistently surpasses baselines on both CIFAR-10-LT and CIFAR-100-LT. Notably, for $(N_{max}, M_{max}, \gamma) = (400, 4600, 100)$ with arbitrary unlabeled data distribution, CPG achieves performance gains of \textbf{15.97 pp} over existing methods on CIFAR-10-LT. For realistic benchmark Food-101-LT in Table~\ref{tab:food101_svhn}, CPG outperforms all competitors by up to 3.22 pp given the setting $(N_{max}, M_{max}, \gamma) = (50, 450, 10)$ with inverse long-tailed unlabeled data distribution.

When evaluating the overall performance, our CPG framework consistently outperforms baselines across all datasets. Notably, it achieves an average accuracy improvement of \textbf{11.14 pp} on CIFAR-10-LT, 3.09 pp on CIFAR-100-LT, and 4.06 pp on SVHN-LT. Moreover, it maintains a significant gain of 2.57 pp on the challenging realistic benchmark Food-101-LT.

Note that we observe a less significant performance improvement on CIFAR-100-LT than on CIFAR-10-LT. This can be attributed to the inherent challenges of CIFAR-100. Although CIFAR-100 and CIFAR-10 have the same total training set size, CIFAR-100 divides the data into 100 fine-grained classes (compared to 10 in CIFAR-10), reducing per-class samples by an order of magnitude. The finer class granularity also increases inter-class similarity, making classification inherently more challenging. As a result, the upper-bound performance (e.g., supervised learning) on CIFAR-100-LT (64.62 pp) is significantly lower than on CIFAR-10-LT (85.22 pp). Moreover, our method leverages consistency regularization to enhance model stability and prevent severe performance degradation, maintaining a relatively stable lower-bound performance. The narrower gap between this lower bound and the constrained upper bound on CIFAR-100-LT inherently limits the absolute performance gains. Despite this, our method still achieves a substantial average accuracy improvement of 3.09 pp on CIFAR-100-LT, as detailed in Table~\ref{tab:cifar100}.

\subsection{Analysis}
\label{sec:analysis}

\begin{wrapfigure}{r}{0.5\linewidth}
\centering
\vspace{-0.6cm}
\subfigure[CIFAR-10-LT]{\includegraphics[width=0.5\linewidth, trim= 0 0 0 0, clip]{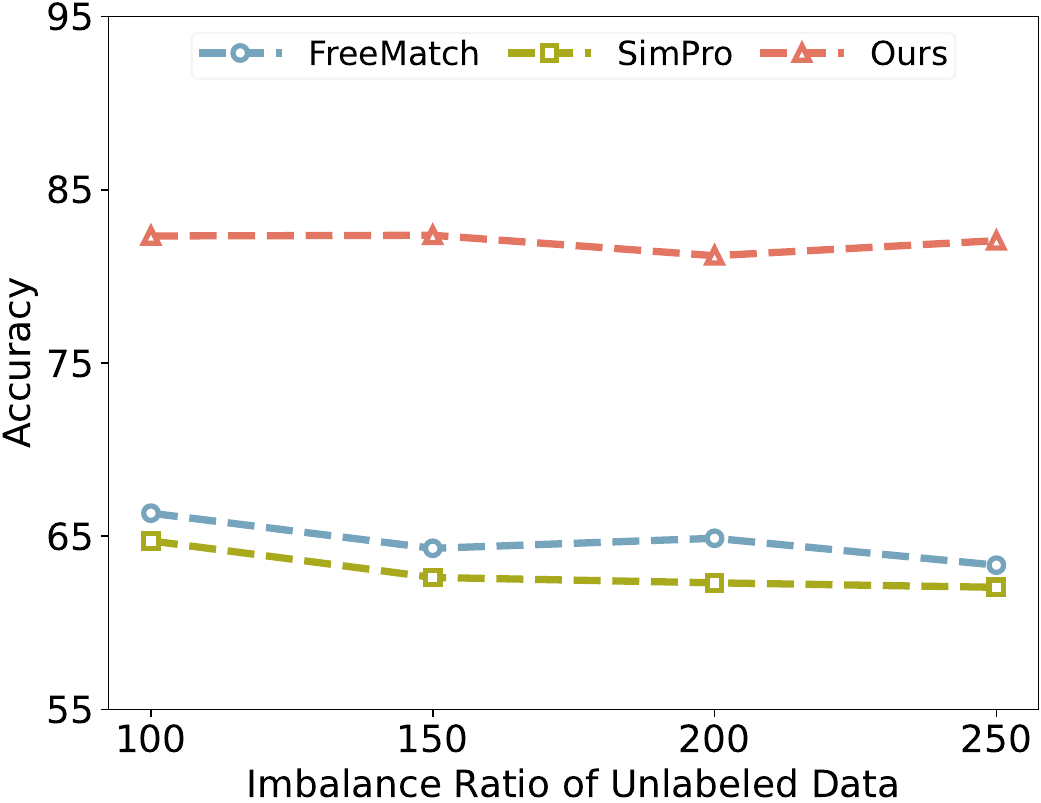}\label{fig:imbalance_ratio_cifar10}}
\hspace{-0.175cm}
\subfigure[CIFAR-100-LT]{\includegraphics[width=0.5\linewidth, trim= 0 0 0 0, clip]{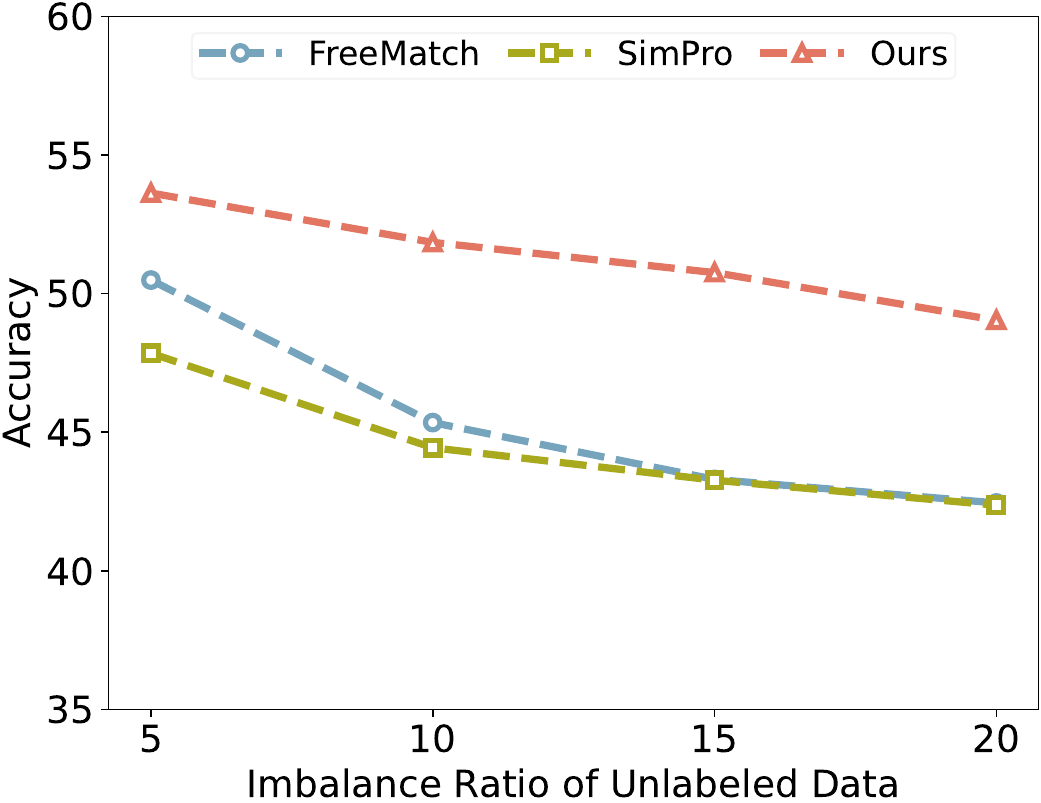}\label{fig:imbalance_ratio_cifar100}}\vspace{-0.4cm}
\caption{Comparison of accuracy ($\%$) among FreeMatch~\cite{FreeMatch}, SimPro~\cite{SimPro}, and our CPG on CIFAR-10-LT and CIFAR-100-LT under arbitrary unlabeled data distribution with long-tailed labeled data distribution.}
\label{fig:imbalance_ratio}
\vspace{-0.4cm}
\end{wrapfigure}

\textbf{Evaluation under varying imbalance ratios.} Fig.~\ref{fig:imbalance_ratio} illustrates the performance of CPG across different unlabeled data imbalance ratios, where the labeled data distribution is long-tailed with a fixed imbalance ratio and the unlabeled data distribution is arbitrary with varying imbalance ratios. The datasets are CIFAR-10-LT with ($\gamma_l = 100$, $\gamma_u \in \{100, 150, 200, 250\}$) and CIFAR-100-LT with ($\gamma_l = 10$, $\gamma_u \in \{5, 10, 15, 20\}$). The results consistently indicate that our method outperforms previous baselines. Moreover, as the unlabeled data imbalance ratio increases, the performance of baseline methods degrades significantly. In contrast, our method maintains nearly consistent performance on CIFAR-10-LT and exhibits only minor degradation on CIFAR-100-LT, demonstrating robust performance across varying unlabeled data imbalance ratios.

\textbf{Evaluation under arbitrary labeled data distribution.} Real-world scenarios often exhibit dynamic distribution shifts where any class can become dominant, affecting both labeled and unlabeled data. While existing evaluations assume static long-tailed distributions for labeled data, we assess 
\begin{wraptable}{r}{0.56\linewidth}
\vspace{-0.25cm}
\caption{Comparison of accuracy ($\%$) on CIFAR-10-LT and CIFAR-100-LT under arbitrary labeled and unlabeled data distributions.}
\label{tab:cifar10_cifar100_arbitraryv1_arbitraryv2}
\centering
\setlength{\tabcolsep}{0.5pt}
\begin{adjustbox}{width=0.56\columnwidth, center}
\begin{tabular}{llcccc}
\toprule
\multirow{3}{*}{Scenario} & \multirow{3}{*}{Method}         & \multicolumn{2}{c}{CIFAR-10-LT} & \multicolumn{2}{c}{CIFAR-100-LT} \\
                                                            \cmidrule(r){3-4}                 \cmidrule(r){5-6}
&                                                           & \multicolumn{2}{c}{$(N_{max}, M_{max}, \gamma) = (400, 4600, 100)$} & \multicolumn{2}{c}{$(N_{max}, M_{max}, \gamma) = (50, 450, 10)$} \\
                                                            \cmidrule(r){3-4}                 \cmidrule(r){5-6}
&                                                           &  Inconsistent & Consistent      & Inconsistent & Consistent \\
\cmidrule(r){1-1}         \cmidrule(r){2-2}                 \cmidrule(r){3-4}                 \cmidrule(r){5-6}
\multirow{2}{*}{SL}       & CE                              & \ms{82.90}{3.34} & \ms{78.48}{4.61}               & \ms{64.75}{0.58} & \ms{63.77}{0.31} \\
                          & LA~\cite{LA}                    & \ms{87.22}{1.29} & \ms{85.11}{1.63}               & \ms{66.54}{0.29} & \ms{66.35}{0.37} \\
\cmidrule(r){1-1}\cmidrule(r){2-2}                          \cmidrule(r){3-4}                 \cmidrule(r){5-6}
\multirow{3}{*}{SSL}      & FixtMatch~\cite{FixMatch}       & \ms{66.27}{3.69} & \ms{69.69}{3.50}               & \ms{48.23}{0.80} & \ms{48.59}{0.92} \\
                          & FreetMatch~\cite{FreeMatch}     & \ms{62.02}{2.21} & \ms{69.73}{2.80}               & \ms{45.59}{0.31} & \ms{46.49}{0.78} \\
                          & SoftMatch~\cite{SoftMatch}      & \ms{63.37}{1.68} & \ms{73.07}{2.79}               & \ms{47.79}{0.24} & \ms{48.83}{0.84} \\
\cmidrule(r){1-1}         \cmidrule(r){2-2}                 \cmidrule(r){3-4}                 \cmidrule(r){5-6}
\multirow{4}{*}{ReaLTSSL} & ACR$\dagger$~\cite{ACR}         & \ms{58.15}{5.03} & \ms{71.71}{0.74}               & \ms{45.30}{0.56} & \ms{42.34}{0.28} \\
                          & SimPro~\cite{SimPro}            & \ms{70.40}{5.18} & \ms{67.54}{2.31}               & \ms{44.10}{0.45} & \ms{45.82}{1.53} \\
                          & CDMAD~\cite{CDMAD}              & \ms{60.26}{0.26} & \ms{66.45}{2.89}               & \ms{32.86}{5.85} & \ms{35.79}{3.22} \\
                          \cmidrule(r){2-2}                 \cmidrule(r){3-4}                 \cmidrule(r){5-6}
\rowcolor{black!10} & Ours                                  & \msb{82.22}{4.04} & \msb{76.41}{4.13}             & \msb{51.59}{1.06} & \msb{50.59}{0.83} \\
\bottomrule
\end{tabular}
\end{adjustbox}
\vskip -0.1in
\end{wraptable}
CPG under arbitrary labeled data distributions, with unlabeled data following either a consistent or inconsistent distribution relative to labeled data. As shown in Table~\ref{tab:cifar10_cifar100_arbitraryv1_arbitraryv2}, CPG consistently outperforms baselines on CIFAR-10-LT and CIFAR-100-LT, demonstrating exceptional robustness. For instance, with $(N_{max}, M_{max}, \gamma) = (400, 4600, 100)$ and an inconsistent unlabeled data distribution, CPG achieves performance gains of \textbf{11.82 pp} over existing methods on CIFAR-10-LT.

\textbf{Ablation study.} We conduct ablation studies to validate the effectiveness of key components in our CPG in Table~\ref{tab:ablation_study}. Specifically, we set $(N_{max}, M_{max}, \gamma)$ = $(400, 4600, 100)$ for CIFAR-10-LT and $(N_{max}, M_{max}, \gamma)$ = $(50, 450, 10)$ for CIFAR-100-LT, respectively. As shown in Table~\ref{tab:ablation_study}, we can observe that \textbf{each component brings a significant improvement}. For example, on CIFAR-10-LT, the auxiliary branch (AB) brings a performance gain of 0.80 pp given the arbitrary unlabeled data distribution. Moreover, class-aware adaptive augmentation (CAA) module further boosts performance by 2.99 pp over the AB baseline, while controllable self-reinforcing optimization cycle (CSOC) delivers a more substantial gain of \textbf{14.41 pp}. Combining CAA and CSOC achieves a synergistic improvement of \textbf{14.82 pp}, underscoring the complementary 
\begin{wraptable}{r}{0.66\linewidth}
\vspace{-0.6cm}
\caption{Comparison of accuracy (\%) on with and without the key components in the proposed method.}
\label{tab:ablation_study}
\centering
\setlength{\tabcolsep}{0.75pt}
\begin{adjustbox}{width=0.66\columnwidth, center}
\begin{tabular}{ccccccccc|c}
\toprule
\multicolumn{3}{c}{Ablations}                                 & \multicolumn{3}{c}{CIFAR-10-LT}      & \multicolumn{3}{c|}{CIFAR-100-LT}     & \multirow{2}{*}{Avg.} \\
\cmidrule(r){1-3} \cmidrule(r){4-6} \cmidrule(r){7-9}
w/ AB        & w/ CSOC      & w/ CAA                          & Arbitrary  & Inverse    & Consistent & Arbitrary  & Inverse    & Consistent                         \\ \midrule
             &              &                                 & 65.18      & 63.94      & 66.10      & 46.32      & 46.23      & 45.84      & 55.60 $~~~~~~~~$                \\
\checkmark   &              &                                 & 65.98      & 64.93      & 69.28      & 48.37      & 47.85      & 48.13      & 57.42 $\uparrow \scriptscriptstyle ~1.82$        \\
\checkmark   &              & \checkmark                      & 68.97      & 67.59      & 75.52      & 48.90      & 48.26      & 49.39      & 59.77 $\uparrow \scriptscriptstyle ~2.35$        \\
\checkmark   & \checkmark   &                                 & 80.39      & 80.85      & 76.85      & 49.25      & 49.66      & 49.35      & 64.39 $\uparrow \scriptscriptstyle ~6.97$        \\
\rowcolor{black!10}\checkmark   & \checkmark   & \checkmark   & \textbf{82.33}      & \textbf{82.32}      & \textbf{78.35}      & \textbf{51.85}      & \textbf{51.61}      & \textbf{51.04}      & \textbf{66.25} $\uparrow \scriptscriptstyle 10.65$           \\
\bottomrule
\end{tabular}
\end{adjustbox}
\vskip -0.1in
\end{wraptable}
roles of these components. When evaluating the overall performance, CAA and CSOC individually contribute average performance gains of 2.35 pp and 6.97 pp, respectively, while their combination yields a \textbf{10.65 pp} improvement, reinforcing the effectiveness and robustness of our design.

\begin{wraptable}{r}{0.45\linewidth}
\vspace{-0.65cm}
\caption{Statistical significance of performance differences assessed with pairwise t-test at a 0.05 significance level, reported as win/tie/loss counts.}
\setlength{\tabcolsep}{0.5pt}
\label{tab:significance}
\centering
\setlength{\tabcolsep}{3pt}
\begin{adjustbox}{width=0.45\columnwidth, center}
\begin{tabular}{llccc|c}
\toprule
Scenario                  & Method                          &  Arbitrary     & Inverse       & Consistent      & Total \\
\cmidrule(r){1-1}\cmidrule(r){2-2}                          \cmidrule(r){3-5}              \cmidrule(r){6-6}
\multirow{3}{*}{SSL}      & FixtMatch~\cite{FixMatch}       & 7 / 2 / 0      & 9 / 0 / 0      & 4 / 5 / 0      & 20 / 7 / 0 \\
                          & FreetMatch~\cite{FreeMatch}     & 9 / 0 / 0      & 9 / 0 / 0      & 7 / 2 / 0      & 25 / 2 / 0  \\
                          & SoftMatch~\cite{SoftMatch}      & 7 / 2 / 0      & 8 / 1 / 0      & 5 / 4 / 0      & 20 / 7 / 0  \\
\cmidrule(r){1-1}         \cmidrule(r){2-2}                 \cmidrule(r){3-5}                 \cmidrule(r){6-6}
\multirow{3}{*}{ReaLTSSL} & ACR$\dagger$~\cite{ACR}         & 8 / 1 / 0      & 8 / 1 / 0      & 5 / 4 / 0      & 21 / 6 / 0 \\
                          & SimPro~\cite{SimPro}            & 6 / 3 / 0      & 9 / 0 / 0      & 7 / 2 / 0      & 22 / 5 / 0 \\
                          & CDMAD~\cite{CDMAD}              & 8 / 1 / 0      & 9 / 0 / 0      & 5 / 4 / 0      & 22 / 5 / 0 \\
                          \cmidrule(r){1-2}                 \cmidrule(r){3-5}                 \cmidrule(r){6-6}
& Total                                                     & 45 / 9 / 0     & 52 / 2 / 0     & 33 / 21 / 0    & 130 / 32 / 0 \\
\bottomrule
\end{tabular}
\end{adjustbox}
\vskip -0.1in
\end{wraptable}

\textbf{Statistical significance.} Table~\ref{tab:significance} presents the win/tie/loss counts between our CPG and six baseline methods across different datasets under varying unlabeled data distributions, using a pairwise t-test at a 0.05 significance level. The results show that our CPG outperforms baseline methods in all cases, with a significant improvement in 80.25 pp of cases (130 out of 162), demonstrating its effectiveness.

More experiment results and ablation studies are provided in Appendix~\ref{sec:more_experiment_results_and_analyses} and Appendix~\ref{sec:more_ablation_studies_and_analyses}, respectively.

\section{Conclusion}
\label{sec:conclusion}

In this work, we have presented a novel solution to the ReaLTSSL problem via controllable pseudo-label generation. We proposed to expand the labeled dataset with reliable pseudo-labels and train the model on the updated labeled dataset with a known distribution, making it unaffected by the unlabeled data distribution. Our framework introduces a controllable self-reinforcing optimization cycle, combining dynamic controllable filtering for reliable pseudo-label identifying with iterative Bayes-optimal classifier construction through logit adjustment. It is further enhanced by a class-aware adaptive augmentation module for minority class representation learning and an auxiliary branch to maximize data utilization by leveraging all labeled and unlabeled samples. We further theoretically proved that this optimization cycle can significantly reduce the generalization error under some conditions. Comprehensive experiments on four commonly used benchmarks show consistent improvements of our method over the state-of-the-art methods by up to \textbf{15.97\%} across various scenarios.




\bibliography{cpg}
\bibliographystyle{unsrt}


\newpage
\section*{NeurIPS Paper Checklist}

\begin{enumerate}

\item {\bf Claims}
    \item[] Question: Do the main claims made in the abstract and introduction accurately reflect the paper's contributions and scope?
    \item[] Answer: \answerYes{}
    \item[] Justification: The main claims made in the abstract and introduction accurately reflect the paper's contributions and scope. Please see Abstract and Sec.~\ref{sec:introduction}.
    \item[] Guidelines:
    \begin{itemize}
        \item The answer NA means that the abstract and introduction do not include the claims made in the paper.
        \item The abstract and/or introduction should clearly state the claims made, including the contributions made in the paper and important assumptions and limitations. A No or NA answer to this question will not be perceived well by the reviewers. 
        \item The claims made should match theoretical and experimental results, and reflect how much the results can be expected to generalize to other settings. 
        \item It is fine to include aspirational goals as motivation as long as it is clear that these goals are not attained by the paper. 
    \end{itemize}

\item {\bf Limitations}
    \item[] Question: Does the paper discuss the limitations of the work performed by the authors?
    \item[] Answer: \answerYes{}
    \item[] Justification: The paper discusses the limitations of the work performed by the authors. Please see Appendix~\ref{sec:limitations}.
    \item[] Guidelines:
    \begin{itemize}
        \item The answer NA means that the paper has no limitation while the answer No means that the paper has limitations, but those are not discussed in the paper. 
        \item The authors are encouraged to create a separate "Limitations" section in their paper.
        \item The paper should point out any strong assumptions and how robust the results are to violations of these assumptions (e.g., independence assumptions, noiseless settings, model well-specification, asymptotic approximations only holding locally). The authors should reflect on how these assumptions might be violated in practice and what the implications would be.
        \item The authors should reflect on the scope of the claims made, e.g., if the approach was only tested on a few datasets or with a few runs. In general, empirical results often depend on implicit assumptions, which should be articulated.
        \item The authors should reflect on the factors that influence the performance of the approach. For example, a facial recognition algorithm may perform poorly when image resolution is low or images are taken in low lighting. Or a speech-to-text system might not be used reliably to provide closed captions for online lectures because it fails to handle technical jargon.
        \item The authors should discuss the computational efficiency of the proposed algorithms and how they scale with dataset size.
        \item If applicable, the authors should discuss possible limitations of their approach to address problems of privacy and fairness.
        \item While the authors might fear that complete honesty about limitations might be used by reviewers as grounds for rejection, a worse outcome might be that reviewers discover limitations that aren't acknowledged in the paper. The authors should use their best judgment and recognize that individual actions in favor of transparency play an important role in developing norms that preserve the integrity of the community. Reviewers will be specifically instructed to not penalize honesty concerning limitations.
    \end{itemize}

\item {\bf Theory assumptions and proofs}
    \item[] Question: For each theoretical result, does the paper provide the full set of assumptions and a complete (and correct) proof?
    \item[] Answer: \answerYes{}
    \item[] Justification: For each theoretical result, the paper provides the full set of assumptions and a complete (and correct) proof. Please see Sec.~\ref{sec:theoretical_analysis} and Appendix~\ref{sec:proof_of_theorem_convergence_properties}.
    \item[] Guidelines:
    \begin{itemize}
        \item The answer NA means that the paper does not include theoretical results. 
        \item All the theorems, formulas, and proofs in the paper should be numbered and cross-referenced.
        \item All assumptions should be clearly stated or referenced in the statement of any theorems.
        \item The proofs can either appear in the main paper or the supplemental material, but if they appear in the supplemental material, the authors are encouraged to provide a short proof sketch to provide intuition. 
        \item Inversely, any informal proof provided in the core of the paper should be complemented by formal proofs provided in appendix or supplemental material.
        \item Theorems and Lemmas that the proof relies upon should be properly referenced. 
    \end{itemize}

\item {\bf Experimental result reproducibility}
    \item[] Question: Does the paper fully disclose all the information needed to reproduce the main experimental results of the paper to the extent that it affects the main claims and/or conclusions of the paper (regardless of whether the code and data are provided or not)?
    \item[] Answer: \answerYes{}
    \item[] Justification: The paper fully discloses all the information needed to reproduce the main experimental results of the paper to the extent that it affects the main claims and/or conclusions of the paper (regardless of whether the code and data are provided or not). Please see Sec.~\ref{sec:experimental_setting}, Appendix~\ref{sec:dataset_details}, and Appendix~\ref{sec:implementation_details}.
    \item[] Guidelines:
    \begin{itemize}
        \item The answer NA means that the paper does not include experiments.
        \item If the paper includes experiments, a No answer to this question will not be perceived well by the reviewers: Making the paper reproducible is important, regardless of whether the code and data are provided or not.
        \item If the contribution is a dataset and/or model, the authors should describe the steps taken to make their results reproducible or verifiable. 
        \item Depending on the contribution, reproducibility can be accomplished in various ways. For example, if the contribution is a novel architecture, describing the architecture fully might suffice, or if the contribution is a specific model and empirical evaluation, it may be necessary to either make it possible for others to replicate the model with the same dataset, or provide access to the model. In general. releasing code and data is often one good way to accomplish this, but reproducibility can also be provided via detailed instructions for how to replicate the results, access to a hosted model (e.g., in the case of a large language model), releasing of a model checkpoint, or other means that are appropriate to the research performed.
        \item While NeurIPS does not require releasing code, the conference does require all submissions to provide some reasonable avenue for reproducibility, which may depend on the nature of the contribution. For example
        \begin{enumerate}
            \item If the contribution is primarily a new algorithm, the paper should make it clear how to reproduce that algorithm.
            \item If the contribution is primarily a new model architecture, the paper should describe the architecture clearly and fully.
            \item If the contribution is a new model (e.g., a large language model), then there should either be a way to access this model for reproducing the results or a way to reproduce the model (e.g., with an open-source dataset or instructions for how to construct the dataset).
            \item We recognize that reproducibility may be tricky in some cases, in which case authors are welcome to describe the particular way they provide for reproducibility. In the case of closed-source models, it may be that access to the model is limited in some way (e.g., to registered users), but it should be possible for other researchers to have some path to reproducing or verifying the results.
        \end{enumerate}
    \end{itemize}

\item {\bf Open access to data and code}
    \item[] Question: Does the paper provide open access to the data and code, with sufficient instructions to faithfully reproduce the main experimental results, as described in supplemental material?
    \item[] Answer: \answerYes{}
    \item[] Justification: The paper provides open access to the data and code, with sufficient instructions to faithfully reproduce the main experimental results, as described in supplemental material.
    \item[] Guidelines:
    \begin{itemize}
        \item The answer NA means that paper does not include experiments requiring code.
        \item Please see the NeurIPS code and data submission guidelines (\url{https://nips.cc/public/guides/CodeSubmissionPolicy}) for more details.
        \item While we encourage the release of code and data, we understand that this might not be possible, so “No” is an acceptable answer. Papers cannot be rejected simply for not including code, unless this is central to the contribution (e.g., for a new open-source benchmark).
        \item The instructions should contain the exact command and environment needed to run to reproduce the results. See the NeurIPS code and data submission guidelines (\url{https://nips.cc/public/guides/CodeSubmissionPolicy}) for more details.
        \item The authors should provide instructions on data access and preparation, including how to access the raw data, preprocessed data, intermediate data, and generated data, etc.
        \item The authors should provide scripts to reproduce all experimental results for the new proposed method and baselines. If only a subset of experiments are reproducible, they should state which ones are omitted from the script and why.
        \item At submission time, to preserve anonymity, the authors should release anonymized versions (if applicable).
        \item Providing as much information as possible in supplemental material (appended to the paper) is recommended, but including URLs to data and code is permitted.
    \end{itemize}

\item {\bf Experimental setting/details}
    \item[] Question: Does the paper specify all the training and test details (e.g., data splits, hyperparameters, how they were chosen, type of optimizer, etc.) necessary to understand the results?
    \item[] Answer: \answerYes{}
    \item[] Justification: The paper specifies all the training and test details (e.g., data splits, hyperparameters, how they were chosen, type of optimizer, etc.) necessary to understand the results. Please see Sec.~\ref{sec:experimental_setting}, Appendix~\ref{sec:dataset_details}, and Appendix~\ref{sec:implementation_details}.
    \item[] Guidelines:
    \begin{itemize}
        \item The answer NA means that the paper does not include experiments.
        \item The experimental setting should be presented in the core of the paper to a level of detail that is necessary to appreciate the results and make sense of them.
        \item The full details can be provided either with the code, in appendix, or as supplemental material.
    \end{itemize}

\item {\bf Experiment statistical significance}
    \item[] Question: Does the paper report error bars suitably and correctly defined or other appropriate information about the statistical significance of the experiments?
    \item[] Answer: \answerYes{}
    \item[] Justification: The paper reports error bars suitably and correctly defined or other appropriate information about the statistical significance of the experiments. Please see Sec.~\ref{sec:analysis}.
    \item[] Guidelines:
    \begin{itemize}
        \item The answer NA means that the paper does not include experiments.
        \item The authors should answer "Yes" if the results are accompanied by error bars, confidence intervals, or statistical significance tests, at least for the experiments that support the main claims of the paper.
        \item The factors of variability that the error bars are capturing should be clearly stated (for example, train/test split, initialization, random drawing of some parameter, or overall run with given experimental conditions).
        \item The method for calculating the error bars should be explained (closed form formula, call to a library function, bootstrap, etc.)
        \item The assumptions made should be given (e.g., Normally distributed errors).
        \item It should be clear whether the error bar is the standard deviation or the standard error of the mean.
        \item It is OK to report 1-sigma error bars, but one should state it. The authors should preferably report a 2-sigma error bar than state that they have a 96\% CI, if the hypothesis of Normality of errors is not verified.
        \item For asymmetric distributions, the authors should be careful not to show in tables or figures symmetric error bars that would yield results that are out of range (e.g. negative error rates).
        \item If error bars are reported in tables or plots, The authors should explain in the text how they were calculated and reference the corresponding figures or tables in the text.
    \end{itemize}

\item {\bf Experiments compute resources}
    \item[] Question: For each experiment, does the paper provide sufficient information on the computer resources (type of compute workers, memory, time of execution) needed to reproduce the experiments?
    \item[] Answer: \answerYes{}
    \item[] Justification: The paper provides sufficient information on the computer resources (type of compute workers, memory, time of execution) needed to reproduce the experiments. Please see Appendix~\ref{sec:compute_resources}.
    \item[] Guidelines:
    \begin{itemize}
        \item The answer NA means that the paper does not include experiments.
        \item The paper should indicate the type of compute workers CPU or GPU, internal cluster, or cloud provider, including relevant memory and storage.
        \item The paper should provide the amount of compute required for each of the individual experimental runs as well as estimate the total compute. 
        \item The paper should disclose whether the full research project required more compute than the experiments reported in the paper (e.g., preliminary or failed experiments that didn't make it into the paper). 
    \end{itemize}

\item {\bf Code of ethics}
    \item[] Question: Does the research conducted in the paper conform, in every respect, with the NeurIPS Code of Ethics \url{https://neurips.cc/public/EthicsGuidelines}?
    \item[] Answer: \answerYes{}
    \item[] Justification: The research conducted in the paper conforms, in every respect, with the NeurIPS Code of Ethics \url{https://neurips.cc/public/EthicsGuidelines}.
    \item[] Guidelines:
    \begin{itemize}
        \item The answer NA means that the authors have not reviewed the NeurIPS Code of Ethics.
        \item If the authors answer No, they should explain the special circumstances that require a deviation from the Code of Ethics.
        \item The authors should make sure to preserve anonymity (e.g., if there is a special consideration due to laws or regulations in their jurisdiction).
    \end{itemize}

\item {\bf Broader impacts}
    \item[] Question: Does the paper discuss both potential positive societal impacts and negative societal impacts of the work performed?
    \item[] Answer:\answerYes{}
    \item[] Justification:  The paper discusses both potential positive societal impacts and negative societal impacts of the work performed. Please see Appendix~\ref{sec:broader_impacts}.
    \item[] Guidelines:
    \begin{itemize}
        \item The answer NA means that there is no societal impact of the work performed.
        \item If the authors answer NA or No, they should explain why their work has no societal impact or why the paper does not address societal impact.
        \item Examples of negative societal impacts include potential malicious or unintended uses (e.g., disinformation, generating fake profiles, surveillance), fairness considerations (e.g., deployment of technologies that could make decisions that unfairly impact specific groups), privacy considerations, and security considerations.
        \item The conference expects that many papers will be foundational research and not tied to particular applications, let alone deployments. However, if there is a direct path to any negative applications, the authors should point it out. For example, it is legitimate to point out that an improvement in the quality of generative models could be used to generate deepfakes for disinformation. On the other hand, it is not needed to point out that a generic algorithm for optimizing neural networks could enable people to train models that generate Deepfakes faster.
        \item The authors should consider possible harms that could arise when the technology is being used as intended and functioning correctly, harms that could arise when the technology is being used as intended but gives incorrect results, and harms following from (intentional or unintentional) misuse of the technology.
        \item If there are negative societal impacts, the authors could also discuss possible mitigation strategies (e.g., gated release of models, providing defenses in addition to attacks, mechanisms for monitoring misuse, mechanisms to monitor how a system learns from feedback over time, improving the efficiency and accessibility of ML).
    \end{itemize}

\item {\bf Safeguards}
    \item[] Question: Does the paper describe safeguards that have been put in place for responsible release of data or models that have a high risk for misuse (e.g., pretrained language models, image generators, or scraped datasets)?
    \item[] Answer: \answerNA{}
    \item[] Justification: Our research does not release data or models, so the paper has no such risks.
    \item[] Guidelines:
    \begin{itemize}
        \item The answer NA means that the paper poses no such risks.
        \item Released models that have a high risk for misuse or dual-use should be released with necessary safeguards to allow for controlled use of the model, for example by requiring that users adhere to usage guidelines or restrictions to access the model or implementing safety filters. 
        \item Datasets that have been scraped from the Internet could pose safety risks. The authors should describe how they avoided releasing unsafe images.
        \item We recognize that providing effective safeguards is challenging, and many papers do not require this, but we encourage authors to take this into account and make a best faith effort.
    \end{itemize}

\item {\bf Licenses for existing assets}
    \item[] Question: Are the creators or original owners of assets (e.g., code, data, models), used in the paper, properly credited and are the license and terms of use explicitly mentioned and properly respected?
    \item[] Answer: \answerYes{}
    \item[] Justification: The creators or original owners of assets (e.g., code, data, models), used in the paper, are properly credited and are the license and terms of use explicitly mentioned and properly respected.
    \item[] Guidelines:
    \begin{itemize}
        \item The answer NA means that the paper does not use existing assets.
        \item The authors should cite the original paper that produced the code package or dataset.
        \item The authors should state which version of the asset is used and, if possible, include a URL.
        \item The name of the license (e.g., CC-BY 4.0) should be included for each asset.
        \item For scraped data from a particular source (e.g., website), the copyright and terms of service of that source should be provided.
        \item If assets are released, the license, copyright information, and terms of use in the package should be provided. For popular datasets, \url{paperswithcode.com/datasets} has curated licenses for some datasets. Their licensing guide can help determine the license of a dataset.
        \item For existing datasets that are re-packaged, both the original license and the license of the derived asset (if it has changed) should be provided.
        \item If this information is not available online, the authors are encouraged to reach out to the asset's creators.
    \end{itemize}

\item {\bf New assets}
    \item[] Question: Are new assets introduced in the paper well documented and is the documentation provided alongside the assets?
    \item[] Answer: \answerNA{}
    \item[] Justification: The paper does not release new assets.
    \item[] Guidelines:
    \begin{itemize}
        \item The answer NA means that the paper does not release new assets.
        \item Researchers should communicate the details of the dataset/code/model as part of their submissions via structured templates. This includes details about training, license, limitations, etc. 
        \item The paper should discuss whether and how consent was obtained from people whose asset is used.
        \item At submission time, remember to anonymize your assets (if applicable). You can either create an anonymized URL or include an anonymized zip file.
    \end{itemize}

\item {\bf Crowdsourcing and research with human subjects}
    \item[] Question: For crowdsourcing experiments and research with human subjects, does the paper include the full text of instructions given to participants and screenshots, if applicable, as well as details about compensation (if any)? 
    \item[] Answer: \answerNA{}
    \item[] Justification: The paper does not involve crowdsourcing nor research with human subjects.
    \item[] Guidelines:
    \begin{itemize}
        \item The answer NA means that the paper does not involve crowdsourcing nor research with human subjects.
        \item Including this information in the supplemental material is fine, but if the main contribution of the paper involves human subjects, then as much detail as possible should be included in the main paper. 
        \item According to the NeurIPS Code of Ethics, workers involved in data collection, curation, or other labor should be paid at least the minimum wage in the country of the data collector. 
    \end{itemize}

\item {\bf Institutional review board (IRB) approvals or equivalent for research with human subjects}
    \item[] Question: Does the paper describe potential risks incurred by study participants, whether such risks were disclosed to the subjects, and whether Institutional Review Board (IRB) approvals (or an equivalent approval/review based on the requirements of your country or institution) were obtained?
    \item[] Answer: \answerNA{}
    \item[] Justification: The paper does not involve crowdsourcing nor research with human subjects.
    \item[] Guidelines:
    \begin{itemize}
        \item The answer NA means that the paper does not involve crowdsourcing nor research with human subjects.
        \item Depending on the country in which research is conducted, IRB approval (or equivalent) may be required for any human subjects research. If you obtained IRB approval, you should clearly state this in the paper. 
        \item We recognize that the procedures for this may vary significantly between institutions and locations, and we expect authors to adhere to the NeurIPS Code of Ethics and the guidelines for their institution. 
        \item For initial submissions, do not include any information that would break anonymity (if applicable), such as the institution conducting the review.
    \end{itemize}

\item {\bf Declaration of LLM usage}
    \item[] Question: Does the paper describe the usage of LLMs if it is an important, original, or non-standard component of the core methods in this research? Note that if the LLM is used only for writing, editing, or formatting purposes and does not impact the core methodology, scientific rigorousness, or originality of the research, declaration is not required.
    \item[] Answer: \answerNA{}
    \item[] Justification: The core method development in this research does not involve LLMs as any important, original, or non-standard components.
    \item[] Guidelines:
    \begin{itemize}
        \item The answer NA means that the core method development in this research does not involve LLMs as any important, original, or non-standard components.
        \item Please refer to our LLM policy (\url{https://neurips.cc/Conferences/2025/LLM}) for what should or should not be described.
    \end{itemize}

\end{enumerate}


\newpage
\appendix

\section*{Table of Contents for the Appendix}
\begin{itemize}[leftmargin=25pt]
    \setlength{\itemsep}{10pt}
    \item ~\ref{sec:evolution_of_pseudo-label_predictions}: Evolution of Pseudo-label Predictions \dotfill \pageref{sec:evolution_of_pseudo-label_predictions}

    \item ~\ref{sec:proof_of_theorem_convergence_properties}: Proof of Theorem~\ref{thm:convergence_properties} \dotfill \pageref{sec:proof_of_theorem_convergence_properties}

    \item ~\ref{sec:dataset_details}: Dataset Details \dotfill \pageref{sec:dataset_details}

    \item ~\ref{sec:implementation_details}: Implementation Details \dotfill \pageref{sec:implementation_details}

    \item ~\ref{sec:more_experiment_results_and_analyses}: More Experiment Results and Analyses \dotfill \pageref{sec:more_experiment_results_and_analyses}

    \item ~~~~~~\ref{sec:f1}: Comparison of Macro-F1 on CIFAR-10-LT and CIFAR-100-LT \dotfill \pageref{sec:f1}

    \item ~~~~~~\ref{sec:audio_modality}: Evaluation on Audio Modality \dotfill \pageref{sec:audio_modality}

    \item ~~~~~~\ref{sec:noisy_scenarios}: Evaluation under Noisy Scenarios \dotfill \pageref{sec:noisy_scenarios}

    \item ~~~~~~\ref{sec:large-scale_and_realistic_datasets}: Evaluation on Large-scale and Realistic Datasets \dotfill \pageref{sec:large-scale_and_realistic_datasets}

    \item ~~~~~~\ref{sec:extreme_settings}: Evaluation under Extreme Settings \dotfill \pageref{sec:extreme_settings}

    \item ~~~~~~\ref{sec:other_architectures}: Evaluation using Other Architectures \dotfill \pageref{sec:other_architectures}

    \item ~\ref{sec:more_ablation_studies_and_analyses}: More Ablation Studies and Analyses \dotfill \pageref{sec:more_ablation_studies_and_analyses}

    \item ~~~~~~\ref{sec:confidence_threshold}: Ablation Study on the Confidence Threshold \dotfill \pageref{sec:confidence_threshold}

    \item ~~~~~~\ref{sec:auxiliary_branch}: Ablation Study on the Auxiliary Branch \dotfill \pageref{sec:auxiliary_branch}

    \item ~~~~~~\ref{sec:anchor_distribution}: Ablation Study on the Anchor Distributions \dotfill \pageref{sec:anchor_distribution}

    \item ~\ref{sec:relation_to_existing_methods}: Relation to Existing Methods \dotfill \pageref{sec:relation_to_existing_methods}

    \item ~\ref{sec:limitations}: Limitations \dotfill \pageref{sec:limitations}

    \item ~\ref{sec:compute_resources}: Compute Resources \dotfill \pageref{sec:compute_resources}

    \item ~\ref{sec:broader_impacts}: Broader Impacts \dotfill \pageref{sec:broader_impacts}
\end{itemize}

\newpage

\section{Evolution of Pseudo-label Predictions}
\label{sec:evolution_of_pseudo-label_predictions}

Fig.~\ref{fig:evolution_of_pseudo-label_predictions} shows the evolution of pseudo-label predictions of our method under the arbitrary unlabeled data distribution, evaluated on CIFAR-10-LT. From Fig.~\ref{fig:evolution_of_pseudo-label_predictions}, we observe that our CPG progressively increases the utilization rate of pseudo-labels while maintaining the high accuracy. The predicted distribution gradually approximates the ground-truth unlabeled data distribution, and the Kullback-Leibler (KL) divergence between the predicted and ground-truth unlabeled data distributions decreases during training.

\begin{figure*}[ht]
\centering
\subfigure[]{
\includegraphics[width=0.32\textwidth]{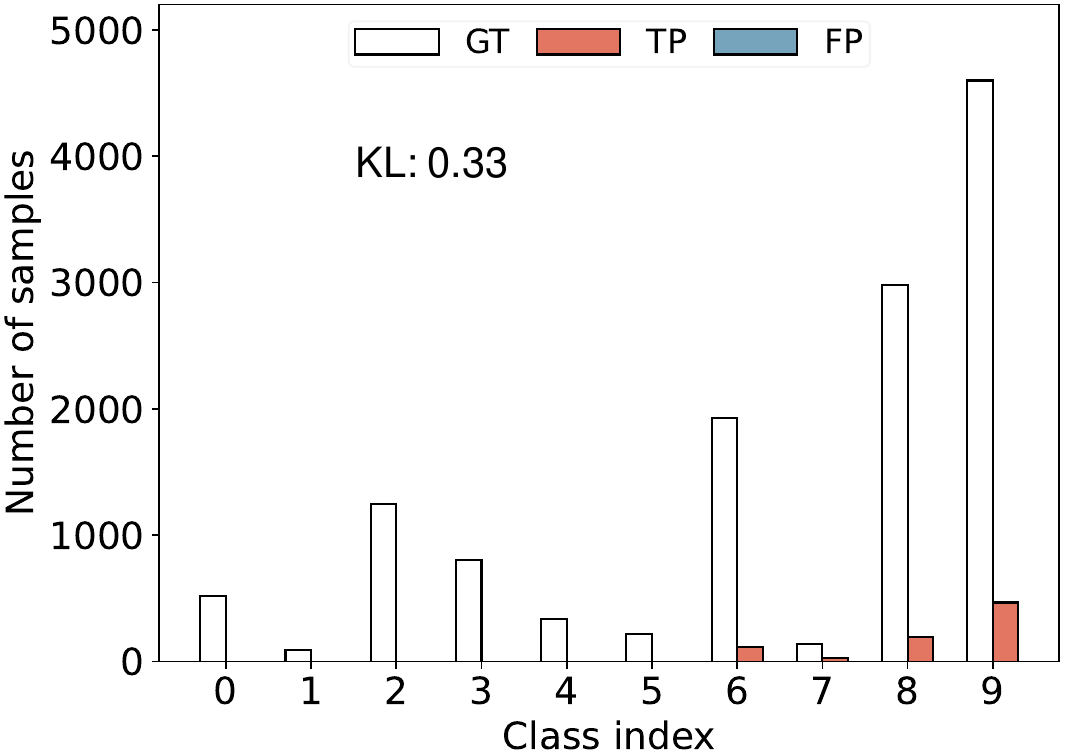}
}
\hspace{-2.0ex}
\subfigure[]{
\includegraphics[width=0.32\textwidth]{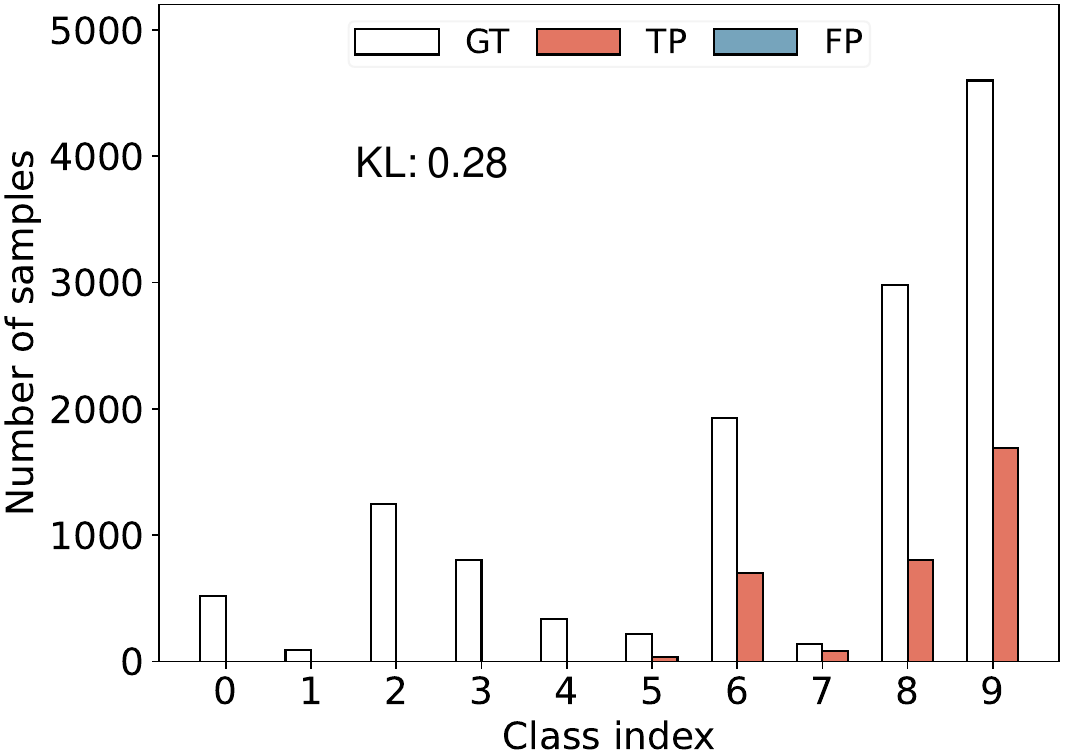}
}
\hspace{-2.0ex}
\subfigure[]{
\includegraphics[width=0.32\textwidth]{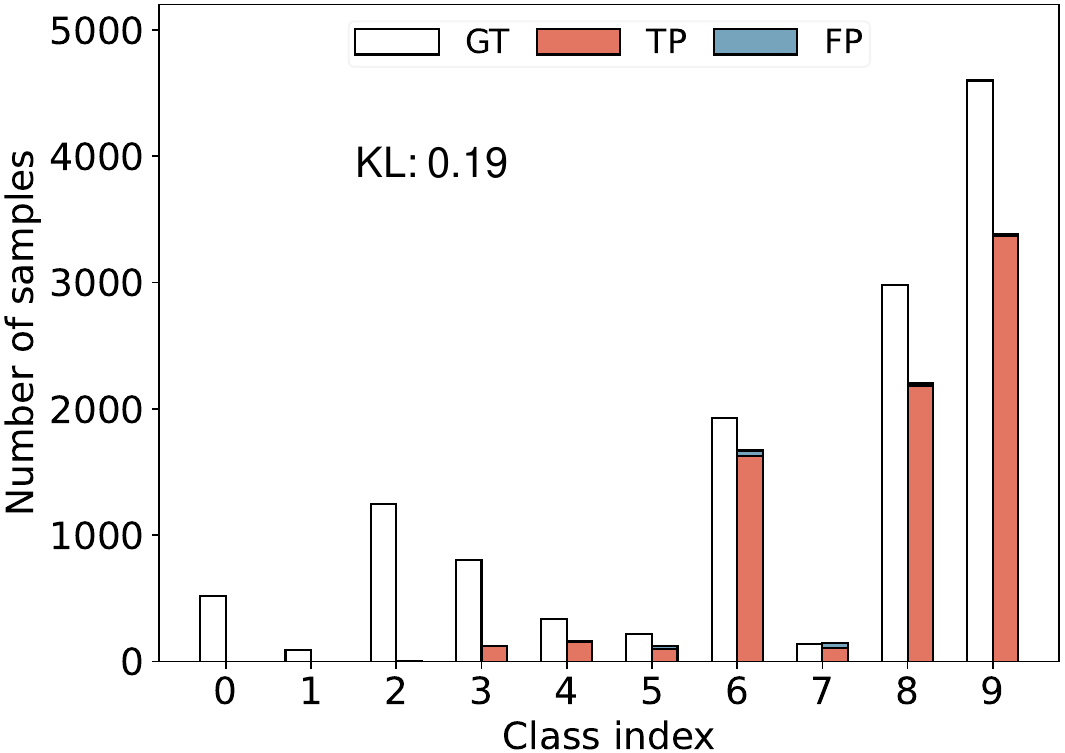}
}
\\
\subfigure[]{
\includegraphics[width=0.32\textwidth]{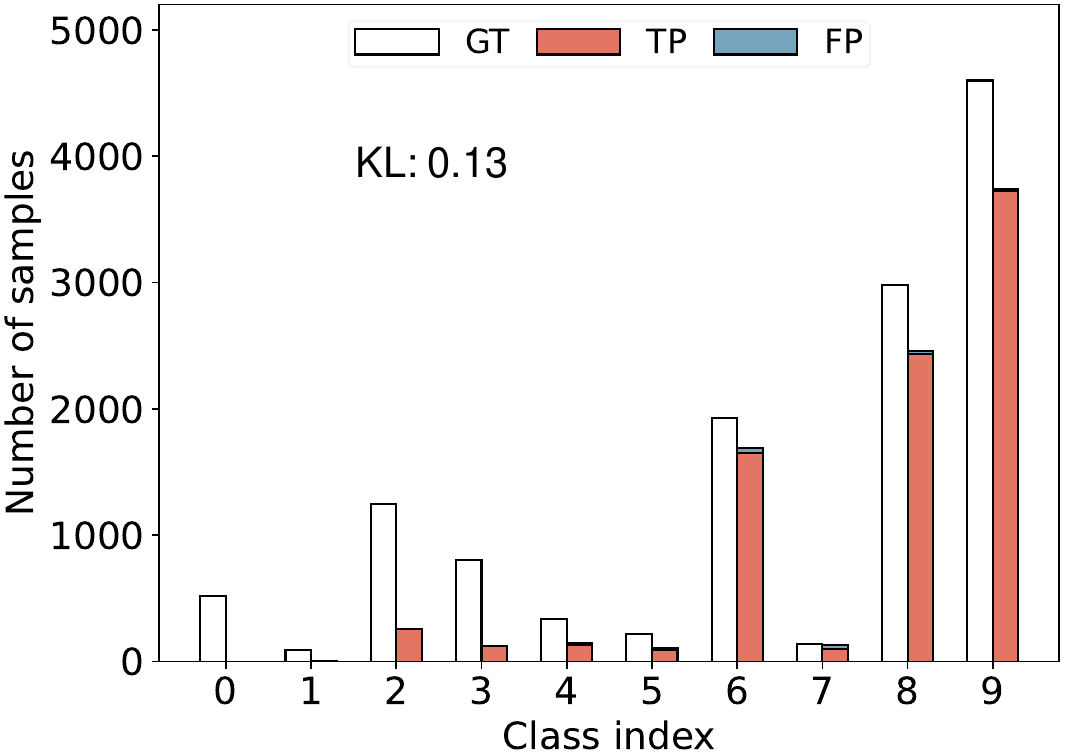}
}
\hspace{-2.0ex}
\subfigure[]{
\includegraphics[width=0.32\textwidth]{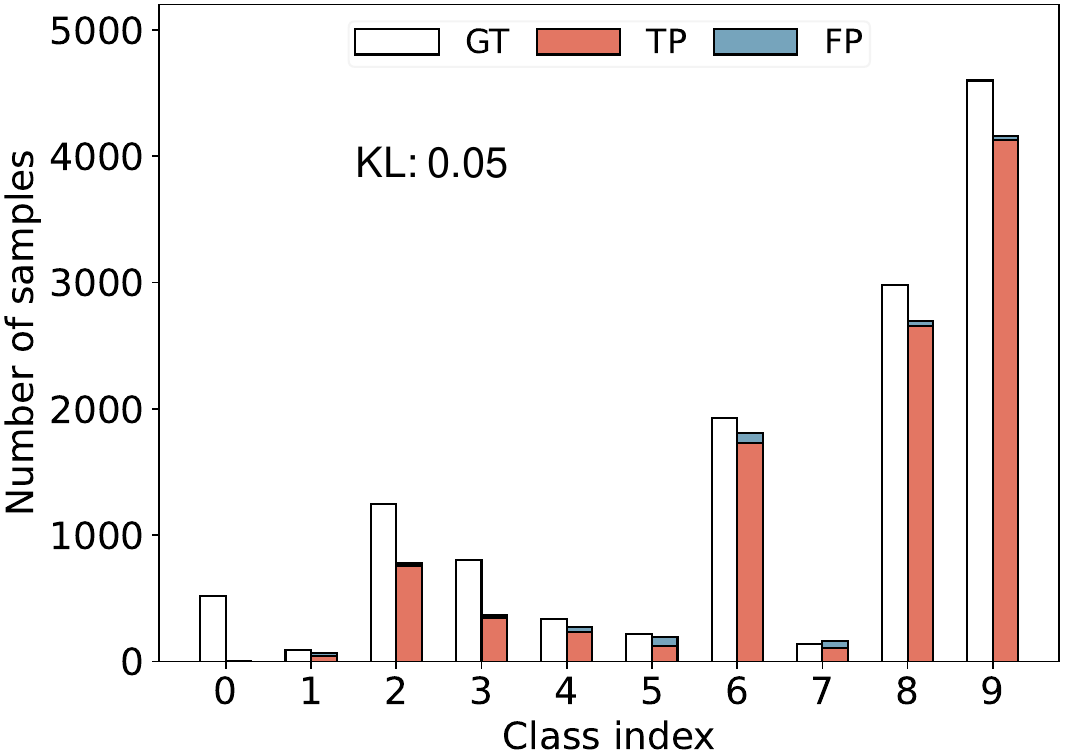}
}
\hspace{-2.0ex}
\subfigure[]{
\includegraphics[width=0.32\textwidth]{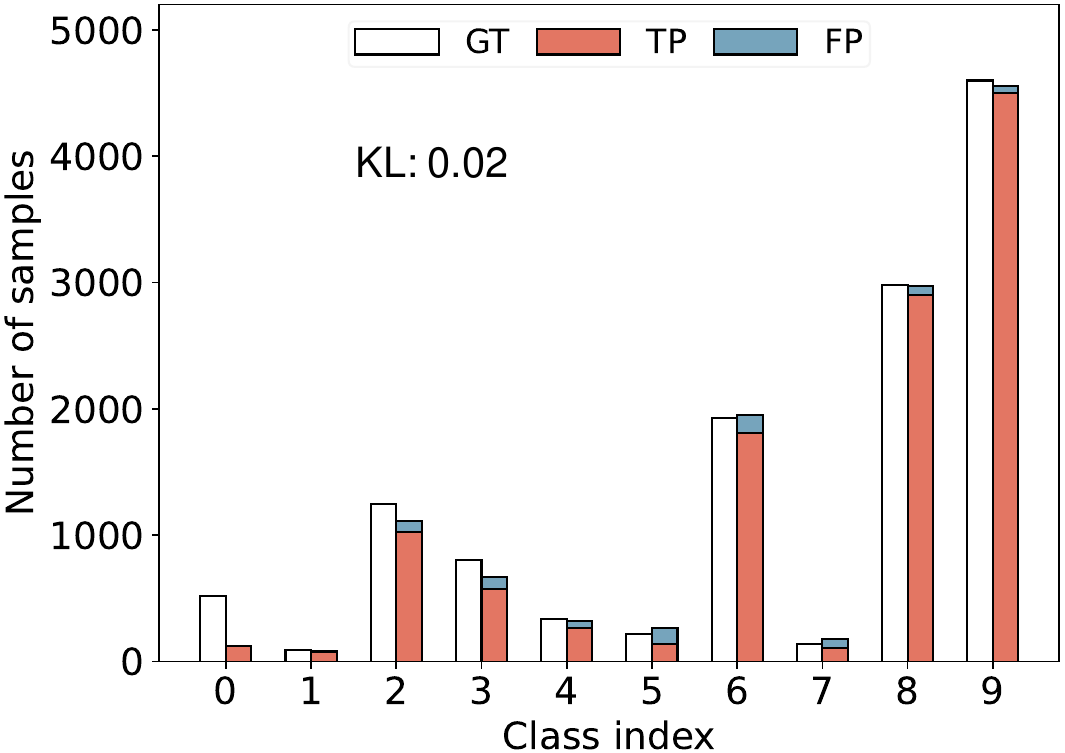}
}
\caption{Evolution of pseudo-label predictions of our method under arbitrary unlabeled data distribution. GT denotes the ground-truth unlabeled data distribution. TP (FP) denotes the predicted true (false) positive pseudo-labels. KL denotes the Kullback-Leibler divergence between the predicted and ground-truth unlabeled data distributions. The dataset is CIFAR-10-LT with $(N_{max}, M_{max}, \gamma_l, \gamma_u) = (400, 4600, 50, 50)$. Our CPG can progressively increase the utilization rate of pseudo-labels while maintaining the high accuracy during training, as the pseudo-label distribution gradually approximates the ground-truth unlabeled data distribution.}
\label{fig:evolution_of_pseudo-label_predictions}
\end{figure*}

\section{Proof of Theorem~\ref{thm:convergence_properties}}
\label{sec:proof_of_theorem_convergence_properties}

We first copy the Theorem~\ref{thm:convergence_properties} here.

\textbf{Theorem~\ref{thm:convergence_properties}.} \textit{Suppose that the loss function $\ell_{la}(f(x), y)$ is $\rho$-Lipschitz with respect to $f(x)$ for all $y \in \{1, \dots, C\}$ and upper-bounded by $U$. Given the pseudo-labeling error rate $0 < \epsilon < 1$, for any $\upsilon > 0$, with probability at least $1 - \upsilon$, we have:
\begin{align}
R_T \leq R_0 - \sum_{t = 1}^{T} \lambda_t + U \sum_{t = 1}^{T} \epsilon_t + 4 \sqrt{2} \rho \sum_{t = 1}^{T} \sum_{y = 1}^{C} \mathcal{R}_{O_t}(\mathcal{H}_y) + 2 U \sum_{t = 1}^{T} \sqrt{\frac{\log \frac{2}{\upsilon}}{2O_t}},
\end{align}
where $T$ denotes the total number of training steps, $R_T$ and $R_0$ represent the model empirical risk at the final training step and initial training step (without pseudo-labels), respectively, and $\epsilon_t$ represents the pseudo-labeling error rate at the $t$-th training step.}

\begin{proof}

We first restate the definition of the model empirical risk $R$ at the $t$-th training step, i.e.,
\begin{align}
R_t = R_{t-1} - \lambda_t + I_{\epsilon_t} + I_{O_t},
\label{eq:app_model_empirical_risk}
\end{align}
where $R_t$ denotes the model empirical risk at the $t$-th training step, $\lambda_t \geq 0$ quantifies the empirical risk reduction achieved, and $I_{\epsilon_t}$ and $I_{O_t}$ represent the impact of noisy pseudo-labels and number of training samples, respectively. We derive $I_{\epsilon_t}$ and $I_{O_t}$ below, omitting the subscript $t$ for brevity.

We define the true risk on the training dataset with respect to the classification model $f(x; \theta)$ as:
\begin{align}
R(f) = \mathbb{E}_{(x, y)} \left[\ell_{la}(f(x), y)\right].
\end{align}
We aim to learn a good classification model by minimizing the empirical risk $\widehat R(f) = \widehat R_l(f) + \widehat R_u(f)$, with $\widehat R_l(f) = \frac{1}{N} \sum_{i = 1}^{N} \ell_{la} (f(x_i), y_i)$ and $\widehat R_u(f) = \frac{1}{M} \sum_{j = 1}^{M} \ell_{la} (f(x_j), y_j)$ representing the empirical risk on labeled and unlabeled training dataset, respectively.

Next, we derive the uniform deviation bound $I_{O}$ between the true risk $R(f)$ and the empirical risk $\widehat R(f)$ using the following lemma.

\begin{lemma}
\label{lem:1}

Suppose that the loss function $\ell_{la}(f(x), y)$ is $\rho$-Lipschitz with respect to $f(x)$ for all $y \in \{1, \dots, C\}$ and upper-bounded by $U$. For any $\upsilon > 0$, with probability at least $1 - \upsilon$, we have:
\begin{align}
|R(f) - \widehat R(f)| \leq 2 \sqrt{2} \rho \sum_{y = 1}^{C} \mathcal{R}_{N + M}(\mathcal{H}_y) + U \sqrt{\frac{\log \frac{2}{\upsilon}}{2 (N + M)}},
\end{align}
where the function space $\mathcal{H}_y$ for the label $y \in \{1, \dots, C\}$ is $\{h:x \longmapsto f_y(x) | f \in \mathcal{F} \}$.

\end{lemma}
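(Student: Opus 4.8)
The plan is to establish Lemma~\ref{lem:1} as a standard Rademacher-complexity deviation bound, treating $\widehat{R}(f)$ as the empirical average of the bounded loss $\ell_{la}$ over the $N+M$ training samples (consistent with $\ell_{ours}$), so that perturbing a single sample shifts the average by at most $U/(N+M)$. First I would define the one-sided supremum deviation $\Phi(S) = \sup_{f \in \mathcal{F}} \bigl(R(f) - \widehat{R}(f)\bigr)$ over a training set $S$ of $N+M$ i.i.d. samples and verify the bounded-difference (McDiarmid) condition: since $0 \le \ell_{la} \le U$, replacing one sample changes $\widehat{R}(f)$, and hence $\Phi(S)$, by at most $U/(N+M)$. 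McDiarmid's inequality then yields, with probability at least $1-\upsilon/2$, the estimate $\Phi(S) \le \mathbb{E}[\Phi(S)] + U\sqrt{\log(2/\upsilon)/(2(N+M))}$, which is exactly the concentration term of the stated bound.

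The second step is to control the expectation $\mathbb{E}[\Phi(S)]$ by the classical ghost-sample symmetrization argument, introducing Rademacher variables and bounding $\mathbb{E}[\Phi(S)] \le 2\,\mathcal{R}_{N+M}(\ell_{la}\circ\mathcal{F})$, the Rademacher complexity of the loss-composed function class. This is where the leading factor of $2$ in the first term of the lemma originates.

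The third and most delicate step is to peel the loss off, converting $\mathcal{R}_{N+M}(\ell_{la}\circ\mathcal{F})$ into the per-class quantities $\sum_{y=1}^{C}\mathcal{R}_{N+M}(\mathcal{H}_y)$. Because $\ell_{la}$ takes the full $C$-dimensional logit vector $f(x)=(f_1(x),\dots,f_C(x))$ as its argument, the scalar Talagrand contraction lemma does not apply directly; instead I would invoke the vector-valued contraction inequality, which for a $\rho$-Lipschitz loss gives $\mathcal{R}_{N+M}(\ell_{la}\circ\mathcal{F}) \le \sqrt{2}\,\rho\sum_{y=1}^{C}\mathcal{R}_{N+M}(\mathcal{H}_y)$ and thereby accounts for both the $\sqrt{2}$ factor and the sum over the coordinate classes $\mathcal{H}_y$. \emph{I expect this vector-contraction step to be the main obstacle}, as it requires the Lipschitz property of $\ell_{la}$ in the correct (Euclidean) norm on its vector argument together with careful bookkeeping of the $C$ output coordinates, rather than a routine application of a one-dimensional result.

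Finally, combining the previous steps produces the one-sided bound $R(f)-\widehat{R}(f) \le 2\sqrt{2}\,\rho\sum_{y=1}^{C}\mathcal{R}_{N+M}(\mathcal{H}_y) + U\sqrt{\log(2/\upsilon)/(2(N+M))}$ with probability at least $1-\upsilon/2$. I would then run the identical argument on $\widehat{R}(f)-R(f)$ and combine the two events by a union bound, each failing with probability at most $\upsilon/2$ — which is precisely why $\log(2/\upsilon)$ rather than $\log(1/\upsilon)$ appears — to upgrade the estimate to the two-sided statement $|R(f)-\widehat{R}(f)|$ holding with probability at least $1-\upsilon$, completing the lemma. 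This bound, instantiated with $N+M$ replaced by $O_t$ at each step, is then the $I_{O_t}$ ingredient that feeds into the telescoped recursion for Theorem~\ref{thm:convergence_properties}.
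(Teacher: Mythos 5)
Your proposal is correct and follows essentially the same route as the paper's own proof: McDiarmid's bounded-difference inequality with increment $U/(N+M)$ for the one-sided supremum deviation, symmetrization to bound the expectation by twice the Rademacher complexity of the loss-composed class, the Rademacher vector contraction inequality to peel off the $\rho$-Lipschitz loss and obtain the $\sqrt{2}\,\rho\sum_{y}\mathcal{R}_{N+M}(\mathcal{H}_y)$ term, and a union bound over the two directions to get the two-sided statement at confidence $1-\upsilon$. The only (immaterial) difference is ordering: the paper establishes the vector-contraction bound on $\mathcal{R}_{N+M}(\ell_{la}\circ\mathcal{F})$ first and then applies McDiarmid, whereas you do the reverse.
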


\begin{proof}

In order to prove this lemma, we define the Rademacher complexity of the composition of loss function $\ell_{la}$ and model $f \in \mathcal{F}$ with $N$ labeled and $M$ unlabeled training samples as follows:
\begin{align}
& \mathcal{R}_{N + M}(\ell_{la} \circ \mathcal{F}) \nonumber \\
= & \mathbb{E}_{(x, y, \mu)} \left[\sup_{f \in \mathcal{F}} \sum_{i = 1}^{N} \mu_{i} \Big(\ell_{la} \big(f(x_{i}), y_{i} \big) \Big) + \sum_{j = 1}^{M} \mu_{j} \Big(\ell_{la} \big(f(x_{j}), y_{j} \big) \Big) \right] \nonumber \\
\leq & \sqrt{2} \rho \sum_{y = 1}^{C} \mathcal{R}_{N + M}(\mathcal{H}_y),
\end{align}
where $\circ$ denotes the function composition operator, $\mathbb{E}_{(x, y, \mu)}$ denotes the expectation over $x$, $y$, and $\mu$, $\mu$ denotes the Rademacher variable, $\sup_{f \in \mathcal{F}}$ denotes the supremum (or least upper bound) over the function set $\mathcal{F}$ of model $f$. The second line holds because of the Rademacher vector contraction inequality~\cite{virc}.

Then, we proceed with the proof by showing that the one direction $\sup_{f \in \mathcal{F}} R(f) - \widehat R(f)$ is bounded with probability at least $1 - \upsilon / 2$, and the other direction $\sup_{f \in \mathcal{F}} \widehat R(f) - R(f)$ can be proved similarly. Note that replacing a sample $(x_i, y_i)$ leads to a change of $\sup_{f \in \mathcal{F}} R(f) - \widehat R(f)$ at most $\frac{U}{N + M}$ due to the fact that $\ell_{la}$ is bounded by $U$. According to the McDiarmid’s inequality~\cite{FML}, for any $\upsilon > 0$, with probability at least $1 - \upsilon / 2$, we have:
\begin{align}
\sup_{f \in \mathcal{F}} R(f) - \widehat R(f) \leq \mathbb{E} \left[\sup_{f \in \mathcal{F}} R(f) - \widehat R(f) \right] + U \sqrt{\frac{\log \frac{2}{\upsilon}}{2 (N + M)}}.
\end{align}

According to the result in ~\cite{FML} that shows $\mathbb{E} \left[\sup_{f \in \mathcal{F}} R(f) - \widehat R(f) \right] \leq 2 \mathcal{R}_{N + M}(\mathcal{F})$, and further considering the other direction $\sup_{f \in \mathcal{F}} \widehat R(f) - R(f)$, with probability at least $1 - \upsilon$, we have:
\begin{align}
\sup_{f \in \mathcal{F}} |R(f) - \widehat R(f)| \leq 2 \sqrt{2} \rho \sum_{y = 1}^{C} \mathcal{R}_{N + M}(\mathcal{H}_y) + U \sqrt{\frac{\log \frac{2}{\upsilon}}{2 (N + M)}},
\end{align}
which completes the proof.

\end{proof}

In SSL, we cannot minimize the empirical risk $\widehat R_u(f)$ directly, since the ground-truth labels are inaccessible for unlabeled training data. Therefore, we need to train the model with $\widehat R_u^{\prime}(f) = \frac{1}{\hat M} \sum_{j = 1}^{\hat M} \ell_{la} (f(x_j), \hat y_j)$, where $\hat y_j$ denotes the pseudo-label of unlabeled sample $x_j$, and $\hat M$ quantifies the pseudo-labels identified by the dynamic controllable filtering. Let $\hat f = \operatorname{argmin}_{f \in \mathcal{F}} \widehat R(f)$ be the empirical risk minimizer, and $f^* = \operatorname{argmin}_{f \in \mathcal{F}} R(f)$ be the true risk minimizer. Let $\ell_{ours} = \frac{1}{N + \hat M} \left[\sum_{i = 1}^{N} \ell_{la} (f(x_i), y_i) + \sum_{j = 1}^{\hat M} \ell_{la} (f(x_j), \hat y_j) \right]$ be the training loss on $N$ labeled and $\hat M$ pseudo-labeled samples with pseudo-labeling error rate $\epsilon$ for our method.

Then, we can bound the difference $I_{\epsilon_t}$ between $\widehat R_u(f)$ and $\widehat R_u^{\prime}(f)$ as follows.

\begin{lemma}
\label{lem:2}

Suppose that the loss function $\ell_{la}(f(x), y)$ is $\rho$-Lipschitz with respect to $f(x)$ for all $y \in \{1, \dots, C\}$ and upper-bounded by $U$. Given the pseudo-labeling error rate $0 < \epsilon < 1$, we have:
\begin{align}
 0 \leq |\widehat R_u^{\prime}(f) - \widehat R_u(f)| \leq U \epsilon.
\end{align}

\end{lemma}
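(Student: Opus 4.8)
The plan is to exploit the elementary observation that $\widehat R_u^{\prime}(f)$ and $\widehat R_u(f)$ are averages of the \emph{same} loss $\ell_{la}$ evaluated at the \emph{same} inputs $x_j$, differing only in whether the label supplied is the pseudo-label $\hat y_j$ or the true label $y_j$. Taking both risks over the common set of $\hat M$ pseudo-labeled samples and writing their difference as a single average,
\begin{align}
\widehat R_u^{\prime}(f) - \widehat R_u(f) = \frac{1}{\hat M} \sum_{j=1}^{\hat M} \bigl[\ell_{la}(f(x_j), \hat y_j) - \ell_{la}(f(x_j), y_j)\bigr],
\end{align}
the summand vanishes at every index where the pseudo-label happens to be correct, so the bound is driven entirely by the erroneous ones.

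First I would partition $\{1, \dots, \hat M\}$ into the correct set $\mathcal{C} = \{j : \hat y_j = y_j\}$ and the error set $\mathcal{E} = \{j : \hat y_j \neq y_j\}$. By the very definition of the pseudo-labeling error rate, $|\mathcal{E}| = \epsilon\, \hat M$. Over $\mathcal{C}$ each summand is identically zero, so the sum reduces to a contribution from $\mathcal{E}$ alone.

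Next I would bound each surviving term using only the boundedness hypothesis: since $\ell_{la}$ is a cross-entropy type loss it is nonnegative and upper-bounded by $U$, hence $\ell_{la}(f(\cdot),\cdot) \in [0, U]$, and every difference $\ell_{la}(f(x_j), \hat y_j) - \ell_{la}(f(x_j), y_j)$ lies in $[-U, U]$. Applying the triangle inequality to the average then yields
\begin{align}
|\widehat R_u^{\prime}(f) - \widehat R_u(f)| \leq \frac{1}{\hat M} \sum_{j \in \mathcal{E}} U = \frac{|\mathcal{E}|}{\hat M}\, U = U \epsilon,
\end{align}
while the lower bound $0 \leq |\cdots|$ is immediate from nonnegativity of the absolute value, completing the argument.

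Since the decomposition is exact and the estimate uses a single uniform bound, there is no genuine analytic obstacle here; the one point demanding care is purely notational bookkeeping, namely ensuring that $\widehat R_u(f)$ in the lemma is understood as the clean risk over the \emph{same} $\hat M$ selected samples as $\widehat R_u^{\prime}(f)$, rather than over the full unlabeled pool, so that the per-sample cancellation on $\mathcal{C}$ is valid. I would also note that the Lipschitz hypothesis plays no role in this step and could be omitted from Lemma~\ref{lem:2}; only the uniform upper bound $U$ on the loss is used.
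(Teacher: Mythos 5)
Your proof is correct, and it takes a genuinely different (and cleaner) route than the paper's. The paper expands $\widehat R_u^{\prime}(f)$ through a symmetric label-noise mixture model, writing each noisy-loss term as $(1-\epsilon)\,\ell_{la}(f(x_j),y_j) + \frac{\epsilon}{C-1}\sum_{c}\mathbb{I}(\hat y_j \neq y_j)\,\ell_{la}(f(x_j),\hat y_j)$, i.e.\ treating the pseudo-label as correct with probability $1-\epsilon$ and otherwise uniformly spread over the $C-1$ wrong classes, and then bounds the noise contribution by $U\epsilon$ using nonnegativity and the uniform bound $U$. You instead treat $\epsilon$ deterministically as the fraction of mislabeled indices, partition into $\mathcal{C}$ and $\mathcal{E}$, cancel the terms on $\mathcal{C}$ exactly, and bound each surviving difference by $U$. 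Your counting argument is more elementary, avoids the distributional assumption on how errors are spread across classes, and sidesteps a bookkeeping wrinkle in the paper's expansion (the inner sum over $c$ has a summand independent of $c$, so a literal reading yields a factor $\frac{\epsilon C}{C-1}$ rather than $\epsilon$); the paper's mixture view, on the other hand, is the natural one if $\epsilon$ is interpreted as an expected error probability rather than a realized error fraction. Your two side remarks are also both on target: the Lipschitz hypothesis is indeed unused in this lemma (it is needed only for the Rademacher contraction step in Lemma~1), and the cancellation on $\mathcal{C}$ does require reading $\widehat R_u(f)$ as the clean risk over the same $\hat M$ selected samples, which is what the paper implicitly does inside its proof despite defining $\widehat R_u(f)$ over all $M$ unlabeled points earlier.
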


\begin{proof}

Let’s first expand $\widehat R_u^{\prime}(f)$:
\begin{align}
\widehat R_u^{\prime}(f) & = \frac{1}{\hat M} \sum_{j = 1}^{\hat M} \ell_{la} (f(x_j), \hat y_j) \nonumber \\
& = \frac{1}{\hat M} \sum_{j = 1}^{\hat M} \left[(1 - \epsilon) \ell_{la} (f(x_j), y_j) + \frac{\epsilon}{C - 1} \sum_{c = 1}^{C} \mathbb{I} (\hat y_j \neq y_j) \ell_{la} (f(x_j), \hat y_j) \right].
\end{align}

Next, we show its upper bound:
\begin{align}
\widehat R_u^{\prime}(f) & \leq \widehat R_u(f) + \frac{1}{\hat M} \sum_{j = 1}^{\hat M} \left[\frac{\epsilon}{C - 1} \sum_{c = 1}^{C} \mathbb{I} (\hat y_j \neq y_j) \ell_{la} (f(x_j), \hat y_j) \right] \nonumber \\
& \leq \widehat R_u(f) + \frac{1}{\hat M} \sum_{j = 1}^{\hat M} \left[\frac{\epsilon}{C - 1} \sum_{c = 1}^{C} \mathbb{I} (\hat y_j \neq y_j) U \right] \nonumber \\
& \leq \widehat R_u(f) + U \epsilon,
\end{align}
and when $\epsilon \rightarrow 0$, it reaches the lower bound $\widehat R_u(f)$, which concludes the proof.

\end{proof}

Based on the above lemmas, for any $\upsilon > 0$, with probability at least $1 - \upsilon$, at the $t$-th training step, we have:
\begin{align}
R_t(\hat f) & \leq \widehat R_t(\hat f) + 2 \sqrt{2} \rho \sum_{y = 1}^{C} \mathcal{R}_{O_t}(\mathcal{H}_y) + U \sqrt{\frac{\log \frac{2}{\upsilon}}{2O_t}} \nonumber \\
& \leq \widehat R_{t_l}(\hat f) + \widehat R_{t_u}(\hat f) + 2 \sqrt{2} \rho \sum_{y = 1}^{C} \mathcal{R}_{O_t}(\mathcal{H}_y) + U \sqrt{\frac{\log \frac{2}{\upsilon}}{2O_t}} \nonumber \\
& \leq \widehat R_{t_l}(\hat f) + \widehat R_{t_u}^{\prime}(\hat f) + 2 \sqrt{2} \rho \sum_{y = 1}^{C} \mathcal{R}_{O_t}(\mathcal{H}_y) + U \sqrt{\frac{\log \frac{2}{\upsilon}}{2O_t}} \nonumber \\
& \leq \widehat R_{t_l}(f) + \widehat R_{t_u}^{\prime}(f) + 2 \sqrt{2} \rho \sum_{y = 1}^{C} \mathcal{R}_{O_t}(\mathcal{H}_y) + U \sqrt{\frac{\log \frac{2}{\upsilon}}{2O_t}} \nonumber \\
& \leq \widehat R_{t_l}(f) + \widehat R_{t_u}(f) + U \epsilon_t + 2 \sqrt{2} \rho \sum_{y = 1}^{C} \mathcal{R}_{O_t}(\mathcal{H}_y) + U \sqrt{\frac{\log \frac{2}{\upsilon}}{2O_t}} \nonumber \\
& \leq \widehat R_t(f) + U \epsilon_t + 2 \sqrt{2} \rho \sum_{y = 1}^{C} \mathcal{R}_{O_t}(\mathcal{H}_y) + U \sqrt{\frac{\log \frac{2}{\upsilon}}{2O_t}} \nonumber \\
& \leq R_t(f) + U \epsilon_t + 4 \sqrt{2} \rho \sum_{y = 1}^{C} \mathcal{R}_{O_t}(\mathcal{H}_y) + 2 U \sqrt{\frac{\log \frac{2}{\upsilon}}{2O_t}},
\end{align}
where the ﬁrst and seventh lines are based on Lemma~\ref{lem:1}, and three and fifth lines are due to Lemma~\ref{lem:2}. The fourth line is by the definition of $\hat{f}$.

Then, we can derive $I_{\epsilon_t} + I_{O_t}$ as follows:
\begin{align}
I_{\epsilon_t} + I_{O_t} & = |\widehat R_t^{\prime}(f) - \widehat R_t(f)| + |R_t(f) - \widehat R_t(f)| = |\widehat R_t^{\prime}(f) - R_t(f)| \nonumber \\
& \leq U \epsilon_t + 4 \sqrt{2} \rho \sum_{y = 1}^{C} \mathcal{R}_{O_t}(\mathcal{H}_y) + 2 U \sqrt{\frac{\log \frac{2}{\upsilon}}{2O_t}},
\label{eq:I_epsilon_O}
\end{align}

Substituting Eq.~\eqref{eq:I_epsilon_O} into Eq.~\eqref{eq:app_model_empirical_risk} yields:
\begin{align}
\left\{
\begin{aligned}
R_t & \leq R_{t-1} - \lambda_t + U \epsilon_t + 4 \sqrt{2} \rho \sum_{y = 1}^{C} \mathcal{R}_{O_t}(\mathcal{H}_y) + 2 U \sqrt{\frac{\log \frac{2}{\upsilon}}{2O_t}}, \\
R_{t-1} & \leq R_{t-2} - \lambda_{t-1} + U \epsilon_{t-1} + 4 \sqrt{2} \rho \sum_{y = 1}^{C} \mathcal{R}_{O_{t-1}}(\mathcal{H}_y) + 2 U \sqrt{\frac{\log \frac{2}{\upsilon}}{2O_{t-1}}}, \\
& \quad \vdots \quad  \\
R_1 & \leq R_0 - \lambda_1 + U \epsilon_1 + 4 \sqrt{2} \rho \sum_{y = 1}^{C} \mathcal{R}_{O_1}(\mathcal{H}_y) + 2 U \sqrt{\frac{\log \frac{2}{\upsilon}}{2O_1}}. \\
\end{aligned}
\right.
\label{eq:model_error_at_each_training_step}
\end{align}

Finally, combining the sub-equations in Eq.~\eqref{eq:model_error_at_each_training_step} through summation yields:
\begin{align}
R_T \leq R_0 - \sum_{t = 1}^{T} \lambda_t + U \sum_{t = 1}^{T} \epsilon_t + 4 \sqrt{2} \rho \sum_{t = 1}^{T} \sum_{y = 1}^{C} \mathcal{R}_{O_t}(\mathcal{H}_y) + 2 U \sum_{t = 1}^{T} \sqrt{\frac{\log \frac{2}{\upsilon}}{2O_t}}.
\end{align}

At this point, we have proven Theorem~\ref{thm:convergence_properties}.

\end{proof}

\section{Dataset Details}
\label{sec:dataset_details}

We perform our experiments on four widely-used datasets (CIFAR-10-LT~\cite{CIFAR}, CIFAR-100-LT~\cite{CIFAR}, Food-101-LT~\cite{Food}, and SVHN-LT~\cite{SVHN}), following the main experimental settings in FreeMatch~\cite{FreeMatch} and SimPro~\cite{SimPro}, details are as below.
\begin{itemize}[leftmargin=*]
\item \emph{CIFAR-10-LT}: We test nine settings with $(N_{max}, M_{max}) = (400, 4600)$ and $\gamma \in \{100, 150, 200\}$. The unlabeled data follows either the same long-tailed distribution as the labeled data (consistent case), or an inverse long-tailed or arbitrary distribution (inconsistent cases).

\item \emph{CIFAR-100-LT}: We examine nine settings with $(N_{max}, M_{max}) = (50, 450)$ and $\gamma \in \{10, 15, 20\}$, using the same consistent/inconsistent unlabeled data distribution patterns as above.

\item \emph{Food-101-LT}: We evaluate six settings with $(N_{max}, M_{max}) = (50, 450)$ and $\gamma \in \{10, 15\}$, maintaining the same distribution cases.

\item \emph{SVHN-LT}: We evaluate three settings with $(N_{max}, M_{max}, \gamma) = (50, 450, 100)$, following the same distribution scheme.
\end{itemize}

\section{Implementation Details}
\label{sec:implementation_details}

We follow the default settings and hyperparameters in USB, i.e., the batch size of labeled data $B_l$ is set to 64, while unlabeled data $B_u$ is set to 7 times $B_l$, and the confidence threshold $\tau$ is set to 0.95. Moreover, we use the WRN-28-2~\cite{WRN} architecture, the SGD optimizer with momentum 0.9, and weight decay 5e-4 for training. We use a cosine learning rate decay~\cite{SGDR} scheme, which sets the learning rate to $\eta \cos \left(\frac{7 \pi t}{16 T}\right)$, where the initial learning rate $\eta$ is set to 0.03, $t$ is the current training step, and the total number of training steps $T$ is set to $2^{18}$. We repeat each experiment with three different random seeds (i.e., 0, 1, and 2) and report the mean and standard deviation of the performances. We conduct the experiments on a single NVIDIA RTX 4090 GPU using PyTorch v2.3.1.

\section{More Experiment Results and Analyses}
\label{sec:more_experiment_results_and_analyses}

\subsection{Comparison of Macro-F1 on CIFAR-10-LT and CIFAR-100-LT}
\label{sec:f1}

\begin{wraptable}{r}{0.65\linewidth}
\vspace{-0.65cm}
\caption{Comparison of Macro-F1 ($\%$) on CIFAR-10-LT ($N_{max} = 400, M_{max} = 4600, \gamma = 100$) and CIFAR-100-LT ($N_{max} = 50, M_{max} = 450, \gamma = 10$) under different unlabeled data distributions.}
\label{tab:f1}
\centering
\setlength{\tabcolsep}{1.5pt}
\begin{adjustbox}{width=0.65\columnwidth, center}
\begin{tabular}{llcccccccc}
\toprule
\multirow{3}{*}{Scenario} & \multirow{3}{*}{Method} & \multicolumn{3}{c}{CIFAR-10-LT} & \multirow{3}{*}{Avg.} & \multicolumn{3}{c}{CIFAR-100-LT} & \multirow{3}{*}{Avg.} \\
\cmidrule(r){3-5}                  \cmidrule(r){7-9}
& & \multicolumn{3}{c}{$\gamma = 100$} & & \multicolumn{3}{c}{$\gamma = 10$} \\
\cmidrule(r){3-5} \cmidrule(r){7-9}
& & \multicolumn{1}{c}{Arbitrary} & \multicolumn{1}{c}{Inverse} & \multicolumn{1}{c}{Consistent} & & \multicolumn{1}{c}{Arbitrary} & \multicolumn{1}{c}{Inverse} & \multicolumn{1}{c}{Consistent} \\
\cmidrule(r){1-1}         \cmidrule(r){2-2}                 \cmidrule(r){3-5}                 \cmidrule(r){6-6}                  \cmidrule(r){7-9}                  \cmidrule(r){10-10}
\multirow{3}{*}{SSL}      & FixMatch~\cite{FixMatch}        & 53.52 & 54.24 & 66.47     & 58.07     & 44.58  & 44.85  & 46.46     & 45.30  \\
                          & FreetMatch~\cite{FreeMatch}     & 66.00 & 67.00 & 68.04     & 67.01     & 44.13  & 43.57  & 43.84     & 43.85  \\
                          & SoftMatch~\cite{SoftMatch}      & 62.49 & 66.66 & 71.53     & 66.89     & 45.67  & 46.73  & 46.91     & 46.44  \\
\cmidrule(r){1-1}         \cmidrule(r){2-2}                 \cmidrule(r){3-5}                 \cmidrule(r){6-6}                  \cmidrule(r){7-9}                  \cmidrule(r){10-10}
\multirow{4}{*}{ReaLTSSL} & ACR$\dagger$~\cite{ACR}         & 59.01 & 64.03 & 70.25     & 64.43     & 45.69  & 48.89  & 41.92     & 45.50  \\
                          & SimPro~\cite{SimPro}            & 61.22 & 62.11 & 57.81     & 60.38     & 39.98  & 41.54  & 41.87     & 41.13  \\
                          & CDMAD~\cite{CDMAD}              & 61.73 & 62.88 & 62.11     & 62.24     & 32.97  & 35.80  & 36.68     & 35.15  \\
\cmidrule(r){1-1}         \cmidrule(r){2-2}                 \cmidrule(r){3-5}                 \cmidrule(r){6-6}                  \cmidrule(r){7-9}                  \cmidrule(r){10-10}
\rowcolor{black!10} & Ours                & \textbf{81.99} & \textbf{82.31} & \textbf{78.25}     & \textbf{80.85}     & \textbf{51.31} & \textbf{51.25} & \textbf{50.41}     & \textbf{50.99}  \\
\bottomrule
\end{tabular}
\end{adjustbox}
\vskip -0.1in
\end{wraptable}

To better evaluate our method's performance under class imbalance, we include the Macro-F1 metric in our analysis. Table~\ref{tab:f1} presents the results on CIFAR-10-LT and CIFAR-100-LT. In both cases, the labeled data follows a long-tailed distribution, while the unlabeled data varies across long-tailed, inverse long-tailed, and arbitrary distributions. The results in Table~\ref{tab:f1} show that our method achieves significant improvements in Macro-F1 over existing baselines, further validating its effectiveness for imbalance learning. Notably, it yields average Macro-F1 gains of \textbf{13.84 pp} on CIFAR-10-LT and 4.55 pp on CIFAR-100-LT.

\subsection{Evaluation on Audio Modality}
\label{sec:audio_modality}

\begin{wraptable}{r}{0.28\linewidth}
\vspace{-0.65cm}
\caption{Comparison of accuracy ($\%$) on ESC-50-LT ($N_{max} = 12, M_{max} = 12, \gamma = 1.2$) under long-tailed labeled data and arbitrary unlabeled data distributions.}
\label{tab:audio}
\centering
\setlength{\tabcolsep}{1.5pt}
\begin{adjustbox}{width=0.28\columnwidth, center}
\begin{tabular}{ccc}
\toprule
FreeMatch~\cite{FreeMatch} & SimPro~\cite{SimPro} & Ours \\
\cmidrule(r){1-3}
68.13 & 68.75 & \textbf{69.13} \\
\bottomrule
\end{tabular}
\end{adjustbox}
\vskip -0.1in
\end{wraptable}

While our method is evaluated on vision tasks, the core principles of CPG can be extended to other modalities. Here, we further evaluate our method on the audio modality. Specifically, for audio data, this adaptation requires two key modifications: (i) replacing vision-specific backbones (e.g., WRN-28-2~\cite{WRN}) with audio-compatible models (e.g., HuBERT~\cite{HuBERT}), and (ii) substituting image augmentations with audio-specific ones. To demonstrate this adaptability, we perform our CPG on ESC-50-LT~\cite{ESC}, a standard benchmark for environmental sound classification. As demonstrated in Table~\ref{tab:audio}, our method achieves state-of-the-art performance on this audio benchmark, confirming its effectiveness beyond the visual modality. These results strongly support the generalizability of CPG's framework across different modalities.

\subsection{Evaluation under Noisy Scenarios}
\label{sec:noisy_scenarios}

\begin{wraptable}{r}{0.47\linewidth}
\vspace{-0.65cm}
\caption{Comparison of accuracy ($\%$) on CIFAR-10-LT ($N_{max} = 400, M_{max} = 4600, \gamma = 100$) across different noise rate scenarios.}
\setlength{\tabcolsep}{0.5pt}
\label{tab:noisy}
\centering
\setlength{\tabcolsep}{3pt}
\begin{adjustbox}{width=0.47\columnwidth, center}
\begin{tabular}{lccccc}
\toprule
Noise rate                      & 0\%        & 10\%       & 20\% & 30\%  & $\Delta$ (0\% $\rightarrow$ 30\%) \\
\cmidrule(r){1-6}
FreetMatch~\cite{FreeMatch}     & 66.33      & 60.33      & 56.24      & 51.29      & -15.04 \\
SimPro~\cite{SimPro}            & 64.73      & 59.86      & 44.54      & 34.80      & -29.93 \\
\cmidrule(r){1-6}
\rowcolor{black!10} Ours                            & \textbf{82.33}      & \textbf{75.95}      & \textbf{71.82}      & \textbf{69.87}      & \textbf{-12.46} \\
\bottomrule
\end{tabular}
\end{adjustbox}
\vskip -0.1in
\end{wraptable}

To comprehensively evaluate our method’s robustness under noisy scenarios, we conduct additional experiments on CIFAR-10-LT with $(N_{max} = 400, M_{max} = 4600, \gamma = 100)$, under varying noise rates (i.e., 0\%, 10\%, 20\%, and 30\%). The labeled data follows a long-tailed distribution, while the unlabeled data adheres to an arbitrary distribution. As demonstrated in Table~\ref{tab:noisy}, our method consistently outperforms baselines in noisy settings, achieving the highest accuracy across all noise rates with minimal performance degradation. Notably, when noise rate increases from 0\% to 30\%, our method shows a degradation ($\Delta$) of only 12.46 pp, compared to 15.04 pp for FreeMatch~\cite{FreeMatch} and 29.93 pp for SimPro~\cite{SimPro}. This demonstrates our method's strong robustness to label noise.

\subsection{Evaluation on Large-scale and Realistic Datasets}
\label{sec:large-scale_and_realistic_datasets}

\begin{wraptable}{r}{0.28\linewidth}
\vspace{-0.65cm}
\caption{Comparison of accuracy ($\%$) on ImageNet-127.}
\label{tab:large-scale_and_realistic_datasets}
\centering
\setlength{\tabcolsep}{1.5pt}
\begin{adjustbox}{width=0.28\columnwidth, center}
\begin{tabular}{llcc}
\toprule
Scenario & Method & 32×32  & 64×64 \\
\cmidrule(r){1-1}         \cmidrule(r){2-2}                 \cmidrule(r){3-4}
\multirow{3}{*}{SSL}      & FixMatch~\cite{FixMatch}        & 29.57 & 37.40 \\
                          & FreetMatch~\cite{FreeMatch}     & 31.37 & 39.67 \\
                          & SoftMatch~\cite{SoftMatch}      & 31.50 & 39.40 \\
\cmidrule(r){1-1}         \cmidrule(r){2-2}                 \cmidrule(r){3-4}
\multirow{4}{*}{ReaLTSSL} & ACR$\dagger$~\cite{ACR}         & 38.47 & 47.61 \\
                          & SimPro~\cite{SimPro}            & 43.31 & 46.93 \\
                          & CDMAD~\cite{CDMAD}              & 15.46 & 21.92 \\
\cmidrule(r){1-1}         \cmidrule(r){2-2}                 \cmidrule(r){3-4}
\rowcolor{black!10} & Ours                & \textbf{44.58} & \textbf{50.43} \\
\bottomrule
\end{tabular}
\end{adjustbox}
\vskip -0.1in
\end{wraptable}

The Food-101 dataset (evaluated in Table~\ref{tab:food101_svhn}) is a widely adopted benchmark for food image classification. Its training set naturally contains a proportion of label noise, making the data distribution implicitly unknown. Thus, distribution mismatches between labeled and unlabeled data occur when splitting the training set into labeled and unlabeled datasets. Despite these challenges, our method achieves a significant average accuracy gain of 2.57 pp on this dataset.

To further evaluate our method’s effectiveness in large-scale and real-world scenarios, we conduct experiments on ImageNet-127 (an imbalanced dataset, 127 classes, imbalance ratio 286) with an arbitrary (non-long-tailed) distribution. As shown in Table~\ref{tab:large-scale_and_realistic_datasets}, our method consistently outperforms baselines, achieving gains of 1.27 pp at 32×32 resolution and 2.82 pp at 64×64 resolution. These results highlight our method’s robustness in large-scale and realistic settings beyond standard benchmarks.

\subsection{Evaluation under Extreme Settings}
\label{sec:extreme_settings}

\begin{wraptable}{r}{0.4\linewidth}
\vspace{-0.65cm}
\caption{Comparison of accuracy ($\%$) on CIFAR-10-LT under extreme scenarios.}
\setlength{\tabcolsep}{0.5pt}
\label{tab:extreme_settings}
\centering
\setlength{\tabcolsep}{3pt}
\begin{adjustbox}{width=0.4\columnwidth, center}
\begin{tabular}{lccc}
\toprule
Method                          & Setting A1        & Setting A2       & Setting B \\
\cmidrule(r){1-4}
FreetMatch~\cite{FreeMatch}     & 63.86             & 45.75            & 55.92 \\
SimPro~\cite{SimPro}            & 31.01             & 15.20            & 50.98 \\
\cmidrule(r){1-4}
\rowcolor{black!10} Ours                            & \textbf{69.56}    & \textbf{49.28}   & \textbf{77.78} \\
\bottomrule
\end{tabular}
\end{adjustbox}
\vskip -0.1in
\end{wraptable}

In long-tailed learning scenarios, there exists an inherent trade-off between extremely small $N_{max}$ (number of samples in the most frequent class) and large imbalance ratio $\gamma$ values, since $N_{min} = N_{max} / \gamma \geq 1$. Our initial experiments on CIFAR-10-LT in Table~\ref{tab:cifar10} already employed challenging settings $(N_{max} = 400, M_{max} = 4600, \gamma = 100)$ with $N_{min} = 4$. Notably, even with such extremely few tail class samples, our method still demonstrates significant performance gains over baselines.

To further evaluate our method, we conduct additional experiments under three more extreme settings: extreme sparsity setting A1 $(N_{max} = 100, N_{min} = 1, M_{max} = 4900, \gamma = 100)$, extreme sparsity setting A2 $(N_{max} = 10, N_{min} = 1, M_{max} = 4990, \gamma_l = 10, \gamma_u = 300)$, and extreme imbalance setting B $(N_{max} = 400, N_{min} = 1, M_{max} = 4600, \gamma = 300)$. As demonstrated in Table~\ref{tab:extreme_settings}, our method achieves state-of-the-art performance across three settings, attaining 69.56 pp in setting A1, 49.28 pp in setting A2, and 77.78 pp in setting B. These results represent significant improvements over all baseline methods, underscoring our method's exceptional capability to handle extreme settings.

\subsection{Evaluation using Other Architectures}
\label{sec:other_architectures}

\begin{wraptable}{r}{0.28\linewidth}
\vspace{-0.65cm}
\caption{Comparison of accuracy ($\%$) on CIFAR-10-LT using ResNet-50.}
\setlength{\tabcolsep}{0.5pt}
\label{tab:other_architectures}
\centering
\setlength{\tabcolsep}{3pt}
\begin{adjustbox}{width=0.28\columnwidth, center}
\begin{tabular}{lccc}
\toprule
Method                          & Arbitrary        & Inverse \\
\cmidrule(r){1-3}
FreetMatch~\cite{FreeMatch}     & 47.68             & 48.28 \\
SimPro~\cite{SimPro}            & 43.67             & 51.26 \\
\cmidrule(r){1-3}
\rowcolor{black!10} Ours                            & \textbf{55.53}    & \textbf{58.49} \\
\bottomrule
\end{tabular}
\end{adjustbox}
\vskip -0.1in
\end{wraptable}

To evaluate architectural generalization, we conduct experiments using ResNet-50 on CIFAR-10-LT with $(N_{max} = 400, M_{max} = 4600, \gamma = 100)$. The labeled data follows a long-tailed distribution, while the unlabeled data conforms to either an inverse long-tailed or an arbitrary distribution. As evidenced by Table~\ref{tab:other_architectures}, our method demonstrates significant improvements, achieving absolute performance gains of 7.85 pp and 7.23 pp over current state-of-the-art methods in these respective scenarios. These results highlight our method's consistent robustness when applied to different network architectures and its ability to handle varying distribution patterns in the unlabeled data.

\section{More Ablation Studies and Analyses}
\label{sec:more_ablation_studies_and_analyses}

\subsection{Ablation Study on the Confidence Threshold}
\label{sec:confidence_threshold}

The confidence threshold $\tau = 0.95$ is a common setting in SSL, which we adopt for a fair comparison. To evaluate the impact of different confidence thresholds on model performance, we conduct 
\begin{wraptable}{r}{0.4\linewidth}
\vspace{-0.25cm}
\caption{Ablation study on the confidence threshold.}
\setlength{\tabcolsep}{0.5pt}
\label{tab:confidence_threshold}
\centering
\setlength{\tabcolsep}{3pt}
\begin{adjustbox}{width=0.4\columnwidth, center}
\begin{tabular}{lccc}
\toprule
Method                          & Arbitrary  & Inverse    & Consistent \\
\cmidrule(r){1-4}
FreetMatch~\cite{FreeMatch}     & 66.33      & 68.10      & 69.42  \\
SimPro~\cite{SimPro}            & 64.73      & 65.25      & 64.37 \\
\cmidrule(r){1-4}
Ours ($\tau = 0.75$)            & 81.35      & 78.26      & 76.20 \\
Ours ($\tau = 0.85$)            & 81.89      & 80.85      & 77.79 \\
\rowcolor{black!10} Ours ($\tau = 0.95$)            & \textbf{82.33}      & \textbf{82.32}      & \textbf{78.35} \\
\bottomrule
\end{tabular}
\end{adjustbox}
\vskip -0.1in
\end{wraptable}
additional experiments on CIFAR-10-LT with $(N_{max} = 400, M_{max} = 4600, \gamma = 100)$ and $\tau \in \{0.75, 0.85, 0.95\}$. As shown in Table~\ref{tab:confidence_threshold}, our method’s performance variations across different thresholds remain relatively small (within 1-2 pp), and it maintains consistently superior performance compared to baseline methods. This threshold-insensitive property highlights the stability of our method.

\subsection{Ablation Study on the Auxiliary Branch}
\label{sec:auxiliary_branch}

To evaluate the impact of the auxiliary branch, we integrate it into both FreeMatch~\cite{FreeMatch} and SimPro~\cite{SimPro} and analyze their performance variations. As demonstrated in Figs.~\ref{fig:pseudo-label_arbitrary_auxiliary_branch} and~\ref{fig:training_process_auxiliary_branch}, our analysis reveals that the auxiliary branch exhibits divergent effects across methods. FreeMatch$\ddagger$ (FreeMatch with the auxiliary branch, Fig.~\ref{fig:freematch_arbitrary_auxiliary_branch}) selectively improves pseudo-label quality and quantity for both majority classes (e.g., classes 8 and 9) and minority classes (e.g., class 7), unlike the baseline (FreeMatch without the auxiliary branch, Fig.~\ref{fig:freematch_arbitrary}). In contrast, SimPro suffers consistent degradation in pseudo-label quality and quantity across nearly all classes when integrated with the auxiliary branch. These trends are further supported by the error rate and utilization rate comparisons in Figs.~\ref{fig:training_process} and~\ref{fig:training_process_auxiliary_branch}, which indicate similar performance variations.

\begin{figure*}[ht]
\centering
\subfigure[FreeMatch$\ddagger$]{
\includegraphics[width=0.32\textwidth]{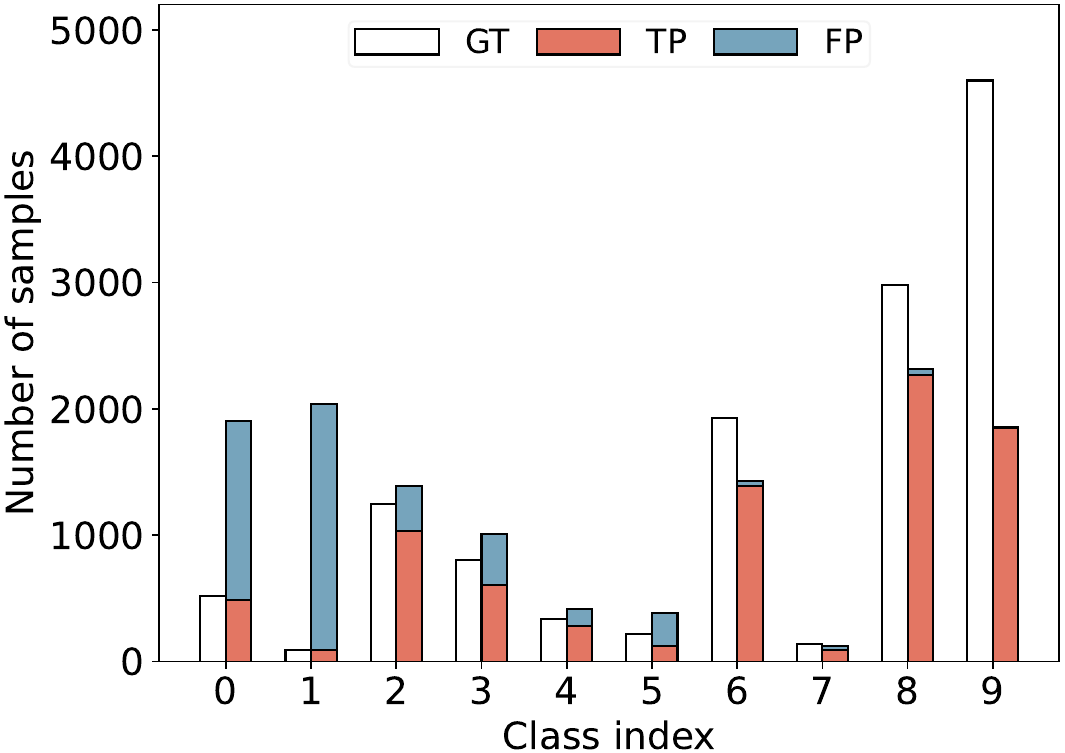}
\label{fig:freematch_arbitrary_auxiliary_branch}
}
\hspace{-2.25ex}
\subfigure[SimPro$\ddagger$]{
\includegraphics[width=0.32\textwidth]{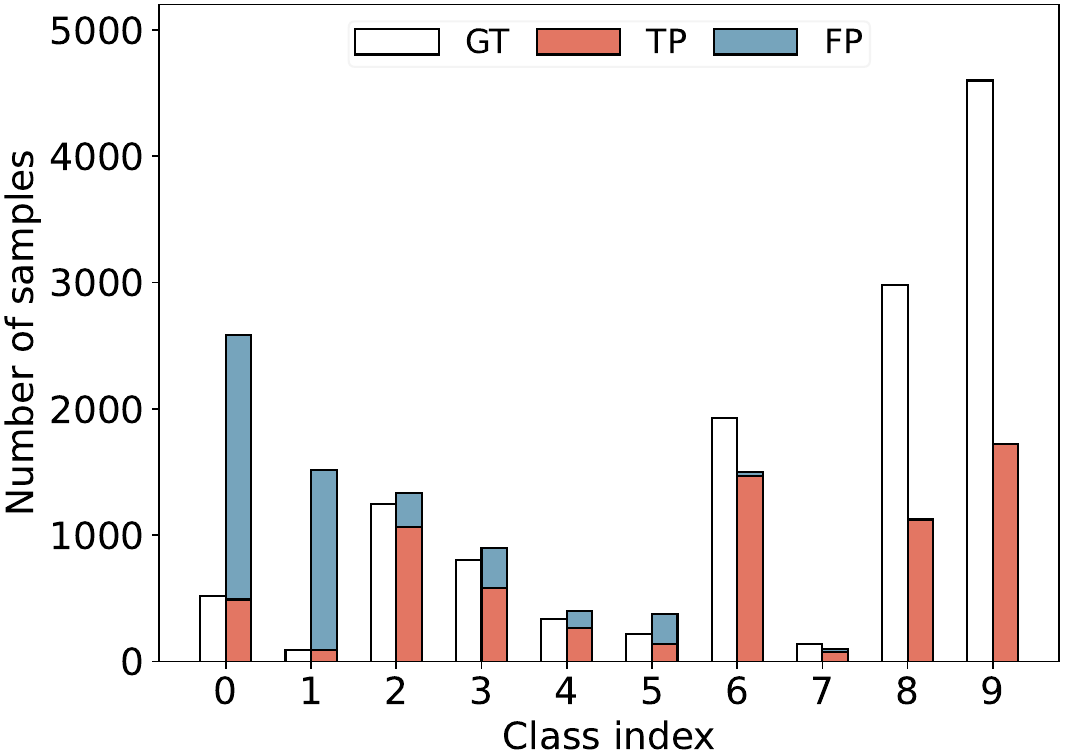}
\label{fig:simpro_arbitrary_auxiliary_branch}
}
\hspace{-2.25ex}
\subfigure[Ours]{
\includegraphics[width=0.32\textwidth]{figure/CPG_Arbitrary.pdf}
\label{fig:cpg_arbitrary_auxiliary_branch}
}
\caption{Comparison of pseudo-label predictions among FreeMatch$\ddagger$~\cite{FreeMatch}, SimPro$\ddagger$~\cite{SimPro}, and our CPG under arbitrary unlabeled data distribution. GT denotes the ground-truth unlabeled data distribution. TP (FP) denotes the predicted true (false) positive pseudo-labels. The dataset is CIFAR-10-LT with $(N_{max}, M_{max}, \gamma_l, \gamma_u) = (400, 4600, 50, 50)$. Our CPG can generate more reliable pseudo-labels than FreeMatch$\ddagger$ and SimPro$\ddagger$ in both minority classes like class 1, 2 and majority classes like class 6, 8, 9. $\ddagger$ indicates that we reproduce baseline methods with the auxiliary branch, similar to our CPG.}
\label{fig:pseudo-label_arbitrary_auxiliary_branch}
\end{figure*}

\begin{figure*}[ht]
\centering
\subfigure[Pseudo-label error rate]{
\includegraphics[width=0.32\textwidth]{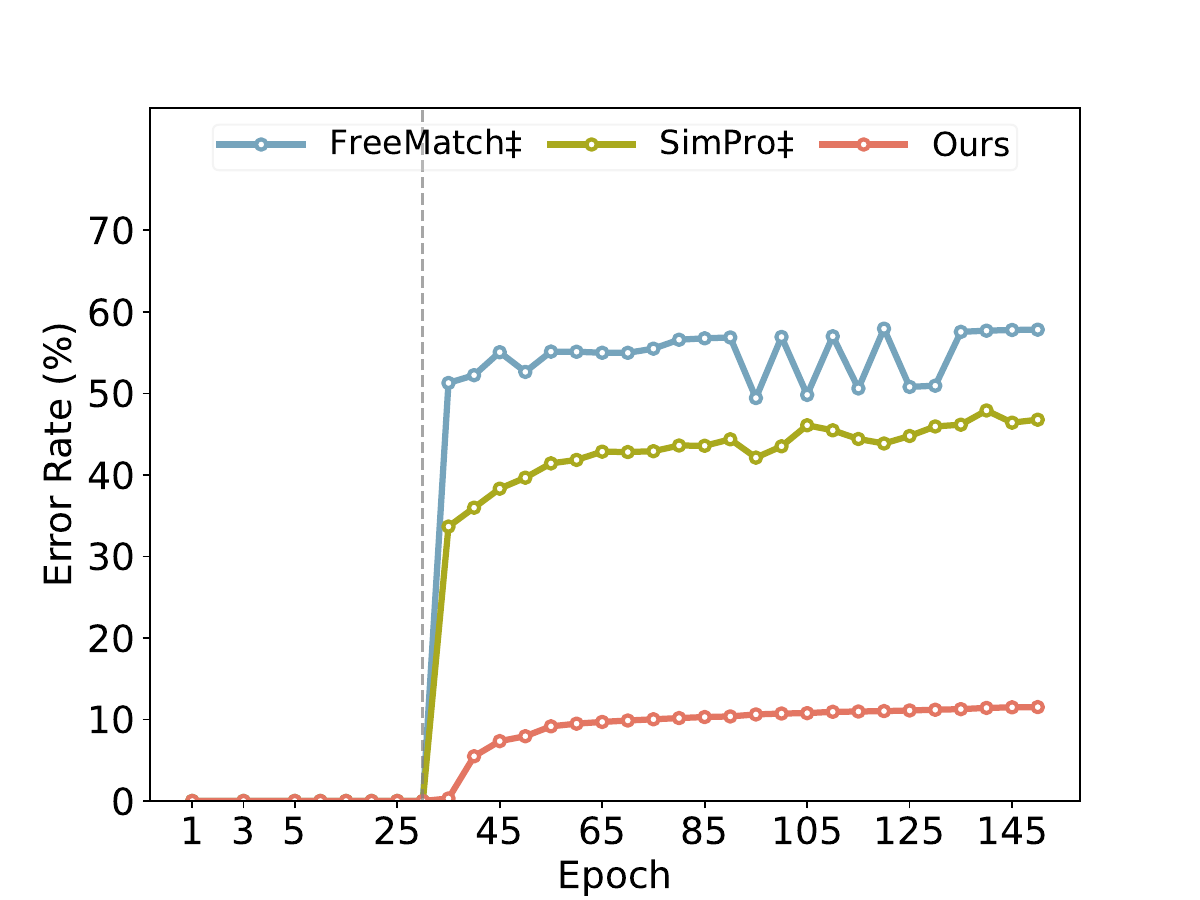}
\label{fig:pseudo-label_error_rate_auxiliary_branch}
}
\hspace{-2.0ex}
\subfigure[Pseudo-label utilization rate]{
\includegraphics[width=0.32\textwidth]{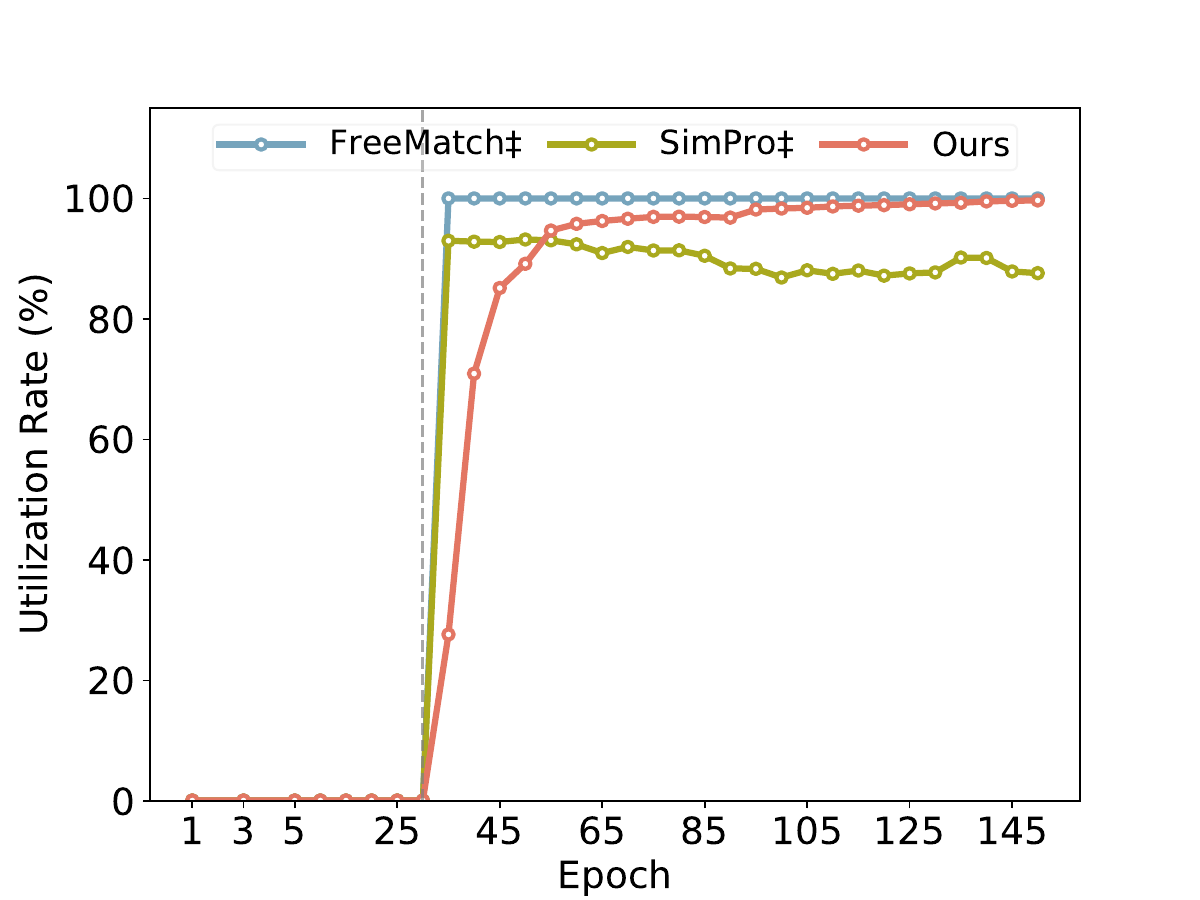}
\label{fig:pseudo-label_utilization_rate_auxiliary_branch}
}
\hspace{-2.23ex}
\subfigure[Testing accuracy]{
\includegraphics[width=0.32\textwidth]{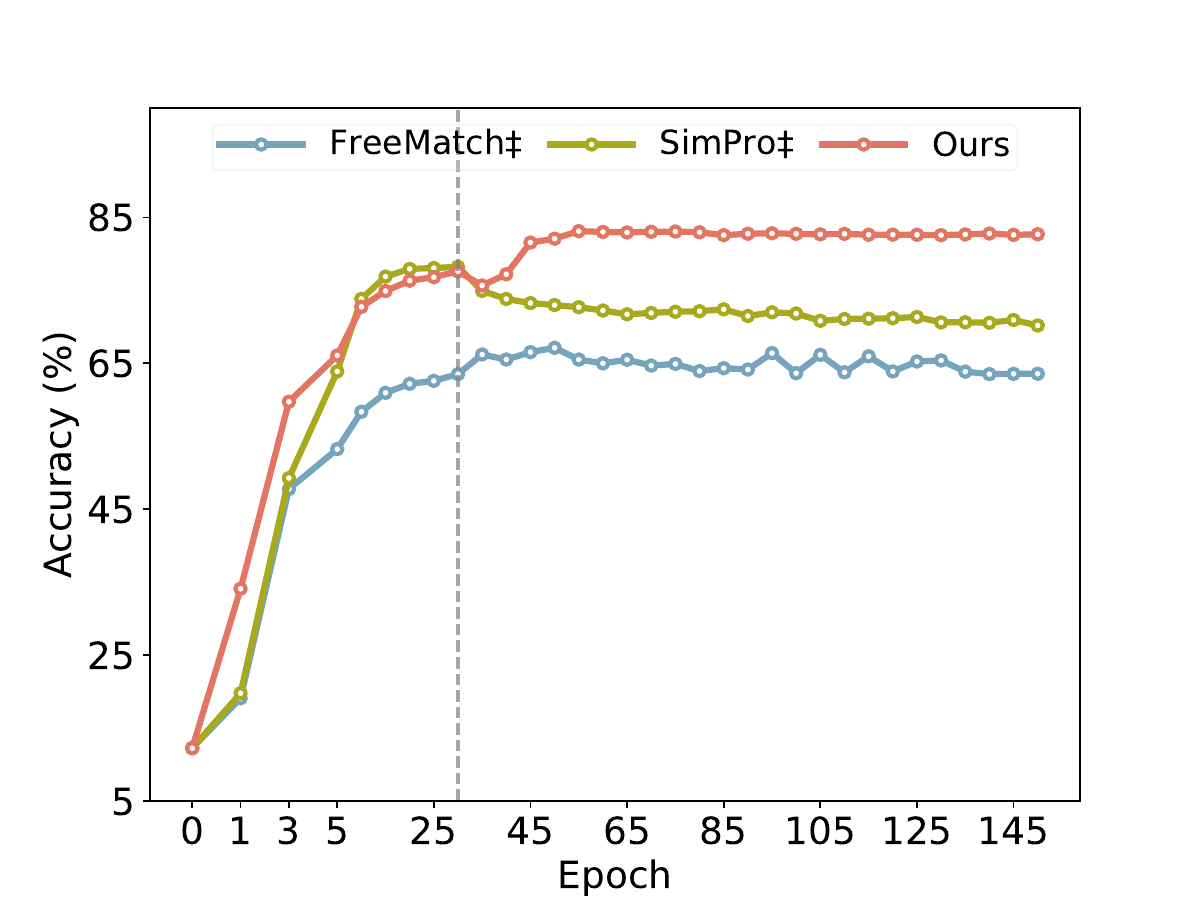}
\label{fig:testing_accuracy_auxiliary_branch}
}
\caption{Comparison of pseudo-label error rate (a), pseudo-label utilization rate (b), and testing accuracy (c) among FreeMatch$\ddagger$~\cite{FreeMatch}, SimPro$\ddagger$~\cite{SimPro}, and our CPG under arbitrary unlabeled data distribution. The dataset is CIFAR-10-LT with $(N_{max}, M_{max}, \gamma_l, \gamma_u) = (400, 4600, 50, 50)$. The vertical gray dotted line indicates the initiation of pseudo-labeling. Our CPG can generate pseudo-labels with a lower error rate and comparable utilization rate, achieving superior testing accuracy compared to both FreeMatch$\ddagger$ and SimPro$\ddagger$. $\ddagger$ indicates that we reproduce baseline methods with the auxiliary branch, similar to our CPG.}
\label{fig:training_process_auxiliary_branch}
\end{figure*}

\subsection{Ablation Study on the Anchor Distributions}
\label{sec:anchor_distribution}

In the main paper, we compare our CPG with ACR$\dagger$ (without anchor distributions) since anchor distributions are typically unknown in real-world scenarios. Here, we further supplement this analysis by comparing CPG with ACR (with anchor distributions) under the assumption that anchor distributions are known. As shown in Tables~\ref{tab:cifar10_anchor},~\ref{tab:cifar100_anchor}, and~\ref{tab:food101_svhn_anchor}, our method outperforms ACR even when anchor distributions are known, while ACR relies on this prior knowledge. This demonstrates the robustness and practical applicability of our method.

\begin{table*}[ht]
\caption{Comparison of accuracy ($\%$) on CIFAR-10-LT ($N_{max} = 400, M_{max} = 4600$) with class imbalance ratio $\gamma \in \{100, 150, 200\}$ under different unlabeled data distributions.}
\label{tab:cifar10_anchor}
\centering
\begin{adjustbox}{width=1.0\columnwidth, center}
\begin{tabular}{lcccccccccc}
\toprule
\multirow{2}{*}{Method}        & \multicolumn{3}{c}{$\gamma = 100$}                                                           & \multicolumn{3}{c}{$\gamma = 150$}                                                           & \multicolumn{3}{c}{$\gamma = 200$}                                                          & \multirow{2}{*}{Avg.} \\
                               \cmidrule(r){2-4}                                                                              \cmidrule(r){5-7}                                                                              \cmidrule(r){8-10}
                               & \multicolumn{1}{c}{Arbitrary} & \multicolumn{1}{c}{Inverse} & \multicolumn{1}{c}{Consistent} & \multicolumn{1}{c}{Arbitrary} & \multicolumn{1}{c}{Inverse} & \multicolumn{1}{c}{Consistent} & \multicolumn{1}{c}{Arbitrary} & \multicolumn{1}{c}{Inverse} & \multicolumn{1}{c}{Consistent} \\
\cmidrule(r){1-1}              \cmidrule(r){2-4}                                                                              \cmidrule(r){5-7}                                                                              \cmidrule(r){8-10}                  \cmidrule(r){11-11}
ACR$\dagger$~\cite{ACR}        & \ms{60.60}{4.21} & \ms{62.54}{4.04} & \ms{73.20}{1.80}                                       & \ms{48.22}{8.07}  & \ms{51.08}{1.03} & \ms{68.31}{0.35}                                      & \ms{50.13}{7.50}  & \ms{53.40}{3.49} & \ms{65.12}{1.05}                                     & \ms{59.18}{3.51} \\
ACR~\cite{ACR}                 & \ms{76.18}{0.91} & \ms{75.72}{1.05} & \ms{73.27}{0.96}                                       & \ms{66.69}{2.03}  & \ms{66.68}{1.82} & \ms{69.34}{3.05}                                      & \ms{68.91}{3.50}  & \ms{67.25}{1.57} & \ms{66.48}{4.79}                                     & \ms{70.06}{2.19} \\
\cmidrule(r){1-1}              \cmidrule(r){2-4}                                                                              \cmidrule(r){5-7}                                                                              \cmidrule(r){8-10}                  \cmidrule(r){11-11}
\rowcolor{black!10} Ours       & \msb{82.10}{0.74} & \msb{82.37}{0.15} & \msb{76.93}{2.68}                                    & \msb{76.38}{3.87} & \msb{78.19}{1.63} & \msb{70.75}{2.13}                                    & \msb{75.18}{3.28} & \msb{77.45}{1.44} & \msb{68.33}{3.94}                                   & \msb{76.41}{2.21} \\
\bottomrule
\end{tabular}
\end{adjustbox}
\vskip -0.18in
\end{table*}

\begin{table*}[ht]
\caption{Comparison of accuracy ($\%$) on CIFAR-100-LT ($N_{max} = 50, M_{max} = 450$) with class imbalance ratio $\gamma \in \{10, 15, 20\}$ under different unlabeled data distributions.}
\label{tab:cifar100_anchor}
\centering
\begin{adjustbox}{width=1.0\columnwidth, center}
\begin{tabular}{lcccccccccc}
\toprule
\multirow{2}{*}{Method}        & \multicolumn{3}{c}{$\gamma = 10$}                                                            & \multicolumn{3}{c}{$\gamma = 15$}                                                            & \multicolumn{3}{c}{$\gamma = 20$}                                                           & \multirow{2}{*}{Avg.} \\
                               \cmidrule(r){2-4}                                                                              \cmidrule(r){5-7}                                                                              \cmidrule(r){8-10}
                               & \multicolumn{1}{c}{Arbitrary} & \multicolumn{1}{c}{Inverse} & \multicolumn{1}{c}{Consistent} & \multicolumn{1}{c}{Arbitrary} & \multicolumn{1}{c}{Inverse} & \multicolumn{1}{c}{Consistent} & \multicolumn{1}{c}{Arbitrary} & \multicolumn{1}{c}{Inverse} & \multicolumn{1}{c}{Consistent} \\
\cmidrule(r){1-1}              \cmidrule(r){2-4}                                                                              \cmidrule(r){5-7}                                                                              \cmidrule(r){8-10}                  \cmidrule(r){11-11}
ACR$\dagger$~\cite{ACR}        & \ms{44.10}{1.59} & \ms{48.50}{0.86} & \ms{41.33}{0.41}                                       & \ms{35.12}{1.36} & \ms{39.01}{0.38} & \ms{32.73}{1.74}                                       & \ms{29.08}{1.11}  & \ms{32.78}{1.59} & \ms{29.69}{2.94}                                     & \ms{36.93}{1.33} \\
ACR~\cite{ACR}                 & \ms{50.92}{0.96} & \ms{50.29}{0.71} & \ms{50.57}{0.98}                                       & \ms{44.02}{0.63} & \ms{43.49}{0.54} & \ms{44.68}{0.90}                                       & \ms{41.93}{1.64}  & \ms{39.62}{1.53} & \ms{42.28}{1.29}                                     & \ms{45.31}{1.02} \\
\cmidrule(r){1-1}              \cmidrule(r){2-4}                                                                              \cmidrule(r){5-7}                                                                              \cmidrule(r){8-10}                  \cmidrule(r){11-11}
\rowcolor{black!10} Ours       & \msb{51.48}{0.32} & \msb{51.46}{0.22} & \msb{51.22}{0.71}                                    & \msb{46.26}{1.40} & \msb{47.88}{0.24} & \msb{45.94}{1.02}                                    & \msb{44.17}{1.30} & \msb{44.17}{1.01} & \msb{42.66}{0.56}                                   & \msb{47.25}{0.75} \\
\bottomrule
\end{tabular}
\end{adjustbox}
\vskip -0.18in
\end{table*}

\begin{table*}[ht]
\caption{Comparison of accuracy ($\%$) on Food-101-LT ($N_{max} = 50, M_{max} = 450$) with class imbalance ratio $\gamma \in \{10, 15\}$ and SVHN-LT ($N_{max} = 400, M_{max} = 4600, \gamma = 100$) under different unlabeled data distributions.}
\label{tab:food101_svhn_anchor}
\centering
\begin{adjustbox}{width=1.0\columnwidth, center}
\begin{tabular}{lccccccccccc}
\toprule
\multirow{3}{*}{Method}            & \multicolumn{6}{c}{Food-101-LT}                                                                                                                                                             & \multirow{3}{*}{Avg.} & \multicolumn{3}{c}{SVHN-LT}                                                                  & \multirow{3}{*}{Avg.} \\
                                   \cmidrule(r){2-7}                                                                                                                                                                                                     \cmidrule(r){9-11}
                                   & \multicolumn{3}{c}{$\gamma = 10$}                                                            & \multicolumn{3}{c}{$\gamma = 15$}                                                            &                       & \multicolumn{3}{c}{$\gamma = 100$} \\
                                   \cmidrule(r){2-4}                                                                              \cmidrule(r){5-7}                                                                                                      \cmidrule(r){9-11}
                                   & \multicolumn{1}{c}{Arbitrary} & \multicolumn{1}{c}{Inverse} & \multicolumn{1}{c}{Consistent} & \multicolumn{1}{c}{Arbitrary} & \multicolumn{1}{c}{Inverse} & \multicolumn{1}{c}{Consistent} &                       & \multicolumn{1}{c}{Arbitrary} & \multicolumn{1}{c}{Inverse} & \multicolumn{1}{c}{Consistent} \\
\cmidrule(r){1-1}                  \cmidrule(r){2-4}                                                                              \cmidrule(r){5-7}                                                                              \cmidrule(r){8-8}       \cmidrule(r){9-11}                                                                             \cmidrule(r){12-12}
ACR$\dagger$~\cite{ACR}            & \ms{18.41}{0.75} & \ms{19.27}{1.12} & \ms{17.30}{0.58}                                       & \ms{15.44}{0.50}  & \ms{17.07}{0.61} & \ms{13.88}{0.73}                                      & \ms{16.90}{0.72}      & \ms{87.67}{4.60}  & \ms{84.20}{1.58}  & \ms{92.64}{0.50}                                     & \ms{88.17}{2.23}  \\
ACR~\cite{SimPro}                  & \ms{21.87}{1.24} & \ms{22.33}{0.10} & \ms{21.81}{0.47}                                       & \ms{19.11}{1.49}  & \ms{19.04}{0.56} & \ms{18.84}{0.13}                                      & \ms{20.50}{0.67}      & \ms{88.04}{1.70}  & \ms{90.04}{0.58}  & \ms{89.59}{1.54}                                     & \ms{89.22}{1.28}  \\
\cmidrule(r){1-1}                  \cmidrule(r){2-4}                                                                              \cmidrule(r){5-7}                                                                              \cmidrule(r){8-10}      \cmidrule(r){9-11}                                                                             \cmidrule(r){12-12}
\rowcolor{black!10} Ours           & \msb{25.98}{0.66} & \msb{25.52}{0.43} & \msb{25.24}{0.30}                                    & \msb{21.88}{0.85} & \msb{21.27}{0.51} & \msb{22.11}{0.72}                                    & \msb{23.67}{0.58}     & \msb{93.99}{0.34} & \msb{94.74}{0.49} & \msb{93.45}{0.52}                                    & \msb{94.06}{0.45}  \\
\bottomrule
\end{tabular}
\end{adjustbox}
\vskip -0.18in
\end{table*}

\section{Relation to Existing Methods}
\label{sec:relation_to_existing_methods}

Here, we summarize the difference between our CPG and existing methods (i.e., FixMatch~\cite{FixMatch}, FlexMatch~\cite{FlexMatch}, UPS~\cite{UPS}, CADR~\cite{CADR}, FlexDA~\cite{FlexDA}, DASO~\cite{DASO}).

While CPG adopts confidence-based filtering similar to FixMatch~\cite{FixMatch} and FlexMatch~\cite{FlexMatch}, its key innovation is the controllable self-reinforcing optimization cycle, which incorporates reliable pseudo-labels from the unlabeled dataset into the labeled dataset rather than employing consistency regularization, distinguishing it from existing methods. This cycle operates in three steps: (i) expanding the labeled dataset with reliable pseudo-labels, (ii) constructing a Bayes-optimal classifier, and (iii) iteratively improving pseudo-label quality, forming a theoretically grounded feedback loop. Additionally, CPG ensures full sample utilization via an auxiliary branch and reduces label noise accumulation through a voting-based stabilization technique, which enforces consistent pseudo-label assignments across training steps. As evidenced by the ablation studies in Table~\ref{tab:ablation_study}, these components individually contribute significant performance improvements, with the auxiliary branch and optimization cycle yielding average gains of 1.82 pp and 6.97 pp, respectively.

The goals of UPS~\cite{UPS} and CADR~\cite{CADR} differ from ours. UPS targets poor model calibration in SSL without distribution mismatch and CADR addresses distribution mismatch due to label missing not at random, while our work focuses on handling the unknown and arbitrary unlabeled data distribution. Specifically, UPS mitigates high-confidence incorrect pseudo-labels caused by poor model calibration through uncertainty estimation techniques like MC-Dropout~\cite{MC-Dropout}, and CADR focuses on resolving classifier bias due to label missing not at random by employing class-aware propensity estimation and dynamic threshold adjustment for pseudo-label selection. Unlike these existing techniques that rely on uncertainty estimation or threshold adjustment, we propose a novel controllable self-reinforcing optimization cycle that operates without distribution estimation or correction, introducing distinct conceptual and methodological innovations. More importantly, our framework is specifically designed to address real-world scenarios with unknown and arbitrary unlabeled data distributions.

While FlexDA~\cite{FlexDA} relies on estimating the unlabeled data distribution for distribution alignment and pseudo-label generation, our core contribution introduces a controllable self-reinforcing optimization cycle that entirely bypasses unlabeled data distribution estimation. This approach is fundamentally different from FlexDA’s dynamic distribution alignment. Specifically, we iteratively expand the labeled dataset with reliable pseudo-labels, train a Bayes-optimal classifier via logit adjustment on the updated labeled dataset without using the pseudo-label distribution. More importantly, we further theoretically prove that this optimization cycle can significantly reduce the generalization error, whereas FlexDA lacks such theoretical guarantees. Moreover, our auxiliary branch leverages all available samples (without a confidence threshold) to enhance feature learning while preventing the primary branch classifier from error pseudo-labels. Notably, prior methods (e.g., FlexDA~\cite{FlexDA}, SimPro~\cite{FlexDA}, UPS~\cite{UPS}, DASO~\cite{DASO}) lack class-aware enhancements for tail classes.

\section{Limitations}
\label{sec:limitations}

(i) Our CPG assumes that the labeled data is free from noise during training. Consequently, CPG may encounter challenges when adapting to scenarios involving noisy labels. (ii) Our CPG operates under the assumption that labeled data follows a long-tailed distribution, with even tail classes retaining minimal supervision. However, its performance may degrade when labeled data follows a uniform distribution but suffers from extremely limited supervision. Future work will explore extensions to address these limitations and improve model generalization.

\section{Compute Resources}
\label{sec:compute_resources}

\begin{itemize}[leftmargin=25pt]
\item CPU: AMD EPYC 7642 48-Core Processor × 2

\item GPU: NVIDIA GeForce RTX 4090 24G × 1

\item MEM: 500G

\item Maximum total computing time: training + testing $\approx$ 17h
\end{itemize}

\section{Broader Impacts}
\label{sec:broader_impacts}

This paper presents work whose goal is to advance the field of Machine Learning. Specifically, we propose a new realistic long-tailed semi-supervised learning algorithm to improve performance in unknown arbitrary distribution scenarios by expanding the labeled dataset with the progressively identified reliable pseudo-labels from the unlabeled dataset and training the model on the updated labeled dataset with a known distribution, making it unaffected by the unlabeled data distribution. We further theoretically prove that this iterative process can significantly reduce the generalization error under some conditions. There are many potential societal consequences of our work, none which we feel must be specifically highlighted here.

\end{document}